\documentclass{article}
\PassOptionsToPackage{numbers, compress}{natbib}

\usepackage[preprint]{neurips_2026}

\usepackage[utf8]{inputenc}
\usepackage[T1]{fontenc}
\usepackage[hidelinks]{hyperref}
\usepackage{bm}
\usepackage{url}
\usepackage{booktabs}
\usepackage{comment}
\usepackage{amsfonts}
\usepackage{amsmath}
\usepackage{amssymb}
\usepackage{amsthm}
\usepackage{nicefrac}
\usepackage{microtype}
\usepackage{xcolor}
\usepackage{graphicx}
\usepackage{subcaption}
\usepackage{algorithm}
\usepackage{multirow}
\usepackage{array}
\usepackage{pifont}
\usepackage{helvet}
\usepackage[acronym]{glossaries}

\usepackage[colorinlistoftodos]{todonotes}
\usepackage{enumitem}

\usepackage{stfloats}
\usepackage{algpseudocode}
\usepackage{booktabs, tabularx}

\usepackage{tikz}
\usepackage{standalone}
\usetikzlibrary{shapes.geometric, arrows.meta, positioning, calc, fit, backgrounds, decorations.pathreplacing}

\usepackage{dsfont}
\IfFileExists{bbm.sty}{\usepackage{bbm}}{}
\usepackage{nicefrac}
\DeclareMathOperator*{\argmin}{arg\,min}
\DeclareMathOperator*{\argmax}{arg\,max}

\definecolor{excitgreen}{RGB}{34,139,34}
\definecolor{inhibred}{RGB}{178,34,34}
\definecolor{midpurple}{RGB}{128,0,128}
\definecolor{darkpurple}{RGB}{75,0,130}
\definecolor{midcyan}{RGB}{0,139,139}
\definecolor{midred}{RGB}{205,92,92}
\definecolor{midmagenta}{RGB}{199,21,133}
\definecolor{darkyellow}{RGB}{184,134,11}
\definecolor{midblue}{RGB}{70,130,180}

\theoremstyle{plain}
\theoremstyle{remark}
\newtheorem{theorem}{Theorem}[section]
\newtheorem{lemma}[theorem]{Lemma}
\newtheorem{remark}[theorem]{Remark}
\newtheorem{corollary}[theorem]{Corollary}
\newtheorem{definition}[theorem]{Definition}
\newtheorem{proposition}[theorem]{Proposition}

\newcommand{\RR}{\mathbb{R}}

\newcommand{\one}{\mathbbm{1}}
\newcommand{\T}{\!^{\top}\!}

\newcommand{\Proj}{\Pi}
\newcommand{\Wee}{W_{\!E\!E}}
\newcommand{\Wei}{W_{\!E\!I}}
\newcommand{\Wie}{W_{\!I\!E}}
\newcommand{\Wii}{W_{\!I\!I}}

\newcommand{\perr}{\lambda_{\mathrm{P}}}
\newcommand{\persurr}{\widehat\lambda_{\mathrm{P}}}
\newcommand{\justif}[1]{\quad\text{\scriptsize$\because$\ #1}}

\title{TIDE: Asymmetric Neural Circuits for Stabilized \\Temporal Inhibitory-Excitatory Dynamics}

\author{
  Alexander Kyuroson \\
  Luleå University of Technology,\\
  Luleå, Sweden \\
  \texttt{akyuroson@gmail.com} \\
  \And
  Denis Kleyko \\
  Örebro University \&  \\
  RISE Research  \\
  Institutes of Sweden  \\
  \And
  Marcus Liwicki\\
  Luleå University of Technology,\\
  Luleå, Sweden \\
  \texttt{marcus.liwicki@ltu.se} \\
}

\begin{document}

\maketitle

\newacronym{dl}{DL}{Deep Learning}
\newacronym{nn}{NN}{Neural Network}
\newacronym{ml}{ML}{Machine Learning}
\newacronym{gm}{GM}{Generative Model}

\newacronym{vit}{ViT}{Vision Transformer}
\newacronym{ei}{E-I}{Excitatory-Inhibitory}
\newacronym{ctm}{CTM}{Continuous Thought Machine}
\newacronym{cnn}{CNN}{Convolutional Neural Network}
\newacronym{hrf}{HRF}{Hierarchical Receptive Field}
\newacronym{lds}{LDS}{Lyapunov Diagonal Stability}
\newacronym{tide}{TIDE}{Temporal Inhibitory-Excitatory Dynamic Engine}

\newacronym{nlm}{NLM}{Neuron-level Model}
\newacronym{ln}{LN}{LayerNorm}
\newacronym{rl}{RL}{Reinforcement Learning}
\newacronym{mdp}{MDP}{Markov Decision Process}
\newacronym{mlp}{MLP}{Multi-Layered Perceptron}
\newacronym{bptt}{BPTT}{Backpropagation Through Time}
\newacronym{dog}{DoG}{Difference-of-Gaussian}
\newacronym{ood}{OOD}{Out-of-Distribution}
\newacronym{ode}{ODE}{Ordinary Differential Equation}
\newacronym{ddp}{DDP}{Distributed Data Parallel}
\newacronym{wta}{WTA}{Winner-Take-All}
\newacronym{svd}{SVD}{Single Value Decomposition}
\newacronym{rnd}{RND}{Random Network Distillation}
\newacronym{td}{TD}{Temporal Difference}
\newacronym{mse}{MSE}{Mean Squared Error}
\newacronym{sota}{SOTA}{State of the Art}

\begin{abstract}

Recent Continuous Thought Machine architecture decouples internal computation from external inputs via neural dynamics, but relies on multi-layer perceptrons without stability guarantees. We propose to model neural dynamics using asymmetric Excitatory-Inhibitory (E-I) networks, which can be stabilized via principles from network theory and can be expressed as energy-based systems optimized through a game-theoretic loss.
Building on this perspective, we introduce Temporal Inhibitory-Excitatory Dynamic Engine (TIDE), a neuro-inspired architecture that computes internal representations through neural dynamics stabilized by incorporating the Wilson-Cowan dynamics and lateral inhibition. TIDE balances biological realism by, for instance, using Hierarchical Receptive Fields and enforcing Dale's principle to ensure a realistic $80\!:\!20$ E-I balance ratio with an end-to-end trainable architecture. The aim of this paper is to introduce a new architecture that brings neuro-inspired learning to the forefront. 
We present proofs of convergence, stability, and complexity bounds, along with empirical ablation studies. Overall, TIDE surpasses CTM with under $50$\% of the training time and improves \texttt{top-1} accuracy by an average of $+1.65$\% on ImageNet under various perturbations.

\end{abstract}

\section{Introduction} 
\label{sec:intro}
Recent advances in machine learning driven by \glspl{cnn}~\cite{lecun1998gradient, krizhevsky2012imagenet, he2016deep}, Transformers~\cite{vaswani2017attention}, and \glspl{vit}~\cite{dosovitskiy2021image} have significantly improved representation learning, scalability, and cross-domain generalization.
At the same time, these architectures rely on computational principles that differ widely from those of biological neural systems that are commonly characterized by sparsity, event-driven activation, population-level dynamics, and asymmetric \gls{ei} networks~\cite{ChiYan2022CVNN}.
The absence of such principles could also be associated with shortcomings of modern deep learning models, like their brittleness under distribution shift, limited cross-modal generalization, and sample inefficiency relative to their biological counterparts~\cite{lillicrap2016random, scellier2017equilibrium, payeur2021burst}.

It has been suggested that structure and asymmetrical connections can facilitate more efficient learning in neural systems~\cite{zador2019critique, isaacson2011inhibition}. Recent work~\cite{haber2022daleian, cornford2021dales} has demonstrated that Daleian-based networks are more robust against perturbations and are capable of efficient learning via single-neuron feature optimization. This indicates that biological mechanisms may play a significant role in designing novel architectures. Furthermore, it has been shown that the $80\!:\!20$ ratio for \gls{ei} population is vital not only for energy efficiency but also for robustness in continual learning and dynamic stability~\cite{haider2006neocortical}.

By introducing neural dynamics as a main primitive, \gls{ctm} architecture (Figure~\ref{fig:tide-vs-ctm}) attempts to decouple an internal reasoning process from the external input data, thus introducing recurrence via internal computation steps that replace static activations with per-neuron \glspl{nlm}, which elevates synchronization from an emergent statistic to the explicit latent representation for continuous reasoning~\cite{darlow2025continuous}. Although \gls{ctm} uses recurrence for internal computation, it does not leverage \gls{ei} dynamics and only relies on standard \gls{mlp} to represent its recurrent connections, which typically do not obey Dale's principle~\cite{dale1935pharmacology,eccles1954cholinergic}.

We introduce \gls{tide} architecture that evolves the ideas of \gls{ctm} by using the Wilson-Cowan model for neural dynamics~\cite{betteti2025competition}, leading to a recurrent model in which every connection is Dale-constrained, and neuron populations interact based on an asynchronous asymmetric model with distinct time constants for different types of neurons.
Our main contributions within \gls{tide}  can be summarized as follows: 
\begin{itemize}[itemsep=1pt, topsep=0pt]
    \item A novel neuro-inspired recurrent architecture with non-negative weights via the enforcement of the Wilson-Cowan dynamics.
    \item Adoption of \gls{hrf} as a feature extractor for an end-to-end training and inference based on a neuro-inspired model.
    \item Integration of differentiable spectral stabilizer based on Perron-eigenvalue sum-ratio to enable online \gls{lds} verification during training.
    \item Inclusion of population-specific \glspl{nlm} with distinct exponential temporal kernels for excitatory and inhibitory neurons given by $\tau_E{=}20$ ms, and $\tau_I{=}5$ ms, respectively.
    \item Comprehensive analysis and ablation studies to investigate \gls{tide}'s performance against \gls{ctm} in the presence of corrupted or \gls{ood} data. Experimental results demonstrate that our architecture consistently outperforms \gls{ctm} in robustness.
\end{itemize}

The remainder of this paper is structured as follows. Section~\ref{sec:related_works} presents related work. Thereafter, Section~\ref{sec:preliminary} introduces the theoretical background and notation adopted throughout this paper, covering the \gls{ctm} formulation and the Dale-constrained \gls{ei} dynamics. Section~\ref{sec:methodology} presents the proposed architecture, as well as its components and their overall theoretical properties. Section~\ref{sec:result} introduces the evaluation setup, presents the results, and provides both qualitative and quantitative assessments of the proposed architecture. Section~\ref{sec:conclusions} concludes by discussing results and future work.
\section{Related Work}
\label{sec:related_works}

\begin{figure*}[t!]
  \centering
  \includegraphics[width=1\linewidth]{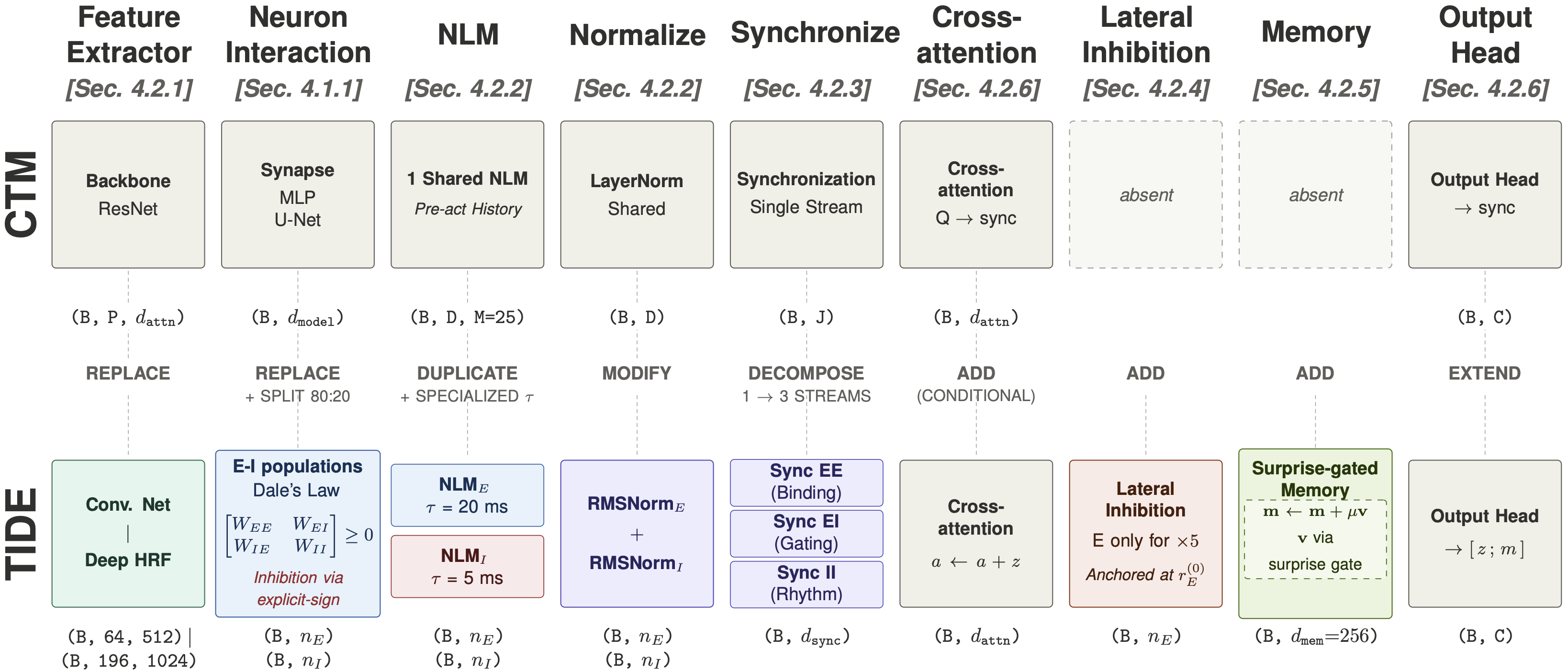}
  \caption{Schematic architectural comparison between CTM and TIDE. TIDE's architectural components are further described in Section~\ref{sec:arch}, Appendix~\ref{app:algo} provides additional technical details.}
  \label{fig:tide-vs-ctm}
\end{figure*}

\textbf{Continuous reasoning with recurrency:} In contrast to existing sequence- and iteration-based architectures such as recurrent-depth networks~\cite{schwarzschild2021can}, deep equilibrium models~\cite{bai2019deep}, neural ordinary
differential equations~\cite{chen2018neural}, and adaptive compute~\cite{graves2016adaptive,banino2021pondernet}, \gls{ctm}~\cite{darlow2025continuous} adopts recurrence for internal computation steps and leverages it for reasoning via certainty-based halting. The proposed \gls{tide} retains the internal computation steps and synchronization read-out blocks proposed in \gls{ctm} but replaces the unconstrained recurrent weights with the Wilson-Cowan dynamics under Dale constraints.

\textbf{Wilson-Cowan dynamics \& \gls{ei} balance:} \gls{tide} incorporates a neuro-inspired circuit with separate excitatory and inhibitory neuron populations, each with its own time constants and connectivity, following the Wilson-Cowan model~\cite{wilson1972excitatory,wilson1973mathematical}. This allows operating in the asynchronous-irregular regime, where excitatory and inhibitory activations can cancel while the \gls{ei} network exhibits irregular yet dynamically stable patterns~\cite{vanvreeswijk1996chaos,brunel2000dynamics}. Stability is further promoted through homeostasis via inhibitory plasticity~\cite{vogels2011inhibitory,turrigiano2008homeostatic} and
balanced-network mechanisms~\cite{harris2013cortical}, acting as direct regularizers on network activity.

\textbf{Dale's principle:} Dale's principle~\cite{dale1935pharmacology,eccles1954cholinergic} states that each neuron releases a single neurotransmitter at every given outgoing connection, implying that the recurrent connectivity matrix is sign-indefinite and weights for \gls{ei} network must satisfy $\Wei \ne \Wie^\top$. In contrast to~\cite{cornford2021dales}, the existing models ignore this principle. By parameterizing recurrent weights as $W = \Sigma \odot |M|$, where $\Sigma \in \{+1,-1\}$ is the fixed per-neuron sign vector assigned via Dale's principle, \gls{tide} allows direct usage of the existing optimization algorithms for a neuro-inspired architecture. At each optimization step, $M$ is projected onto the non-negative orthant via the clipping $M \leftarrow \max(M, 0)$ to preserve the assigned sign.
 
\textbf{Energy-based \& game-theoretic loss:} The use of an energy-based optimization method leveraging Nash equilibria to replace the single scalar loss in homogeneous networks, with generalization to the reinterpretation of dynamics in heterogeneous networks based on \gls{ei} networks, is proposed in~\cite{betteti2025competition}. Therefore, by using the per-neuron gradient and its optimization via the energy function, it is feasible to address the main shortcoming of Hopfield networks~\cite{hopfield1982neural}, thereby providing a principled objective that is consistent, while ensuring the convergence of internal dynamics. Furthermore, this reformulation provides stability under strong convex-concavity of the local energies in the consensus regime during the recurrent updates, replacing the implicit symmetry-based stability argument in~\cite{hopfield1982neural}.

\textbf{Cortex-aligned backbone \& memory model:} It has been demonstrated that an untrained \glspl{cnn} with multi-scale center-surround filters and subsequent spatial compression resembles the primate ventral-stream responses~\cite{kazemian2024cortex}. Based on hierarchical models~\cite{riesenhuber1999hierarchical} and data-driven cortex-alignment studies~\cite{yamins2014performance,schrimpf2020integrative}, \gls{tide} uses \gls{hrf} as its backbone for feature extraction. Moreover, given our adoption of a surprise-gate (Section~\ref{sec:arch:memory}), which differs in its memory formulation from Titans~\cite{behrouz2024titans} and MIRAS~\cite{behrouz2025miras}, and is represented as a state buffer updated iteratively during each internal computation step, without reliance on gradient update, the timing of memory updates can be coupled to the homeostatic states of the network. \gls{tide}'s inductive bias achieved via the \gls{ei} network and its temporal mixing operation is later conditioned via constraints from Dale's principle, thereby removing the need for internal state-space memory and hidden states as used in, e.g., Mamba~\cite{gu2023mamba} and xLSTM~\cite{beck2024xlstm}.

\textbf{Normalization \& optimization:} Given that the mean-subtraction of \gls{ln} would inhibit the absolute \gls{ei} balance signaling, which must be preserved, \texttt{RMSNorm}~\cite{zhang2019rmsnorm} per-population is used. Moreover, by using AdamW~\cite{loshchilov2019decoupled} with
cosine decay~\cite{loshchilov2017sgdr}, we can stabilize any further issues raised by the mixed-precision training~\cite{micikevicius2018mixed}, which was used to improve the computational efficiency of \gls{tide}.
\section{Preliminaries}
\label{sec:preliminary}
This section presents the notation and background concepts used in the paper. We use superscript~$^{(t)}$ to represent internal computation steps. Populations in \gls{ei} networks are denoted by $E$ and $I$, respectively. The activations are $r_E\!\in\!\RR^{n_E},\,r_I\!\in\!\RR^{n_I}$, with $n_E=\lfloor 0.8 \cdot d_{\mathrm{model}}\rfloor$ and $n_I=d_{\mathrm{model}}-n_E$ while population time constants and their corresponding Euler coefficients are $\tau_E,\,\tau_I$ and $\alpha_E,\,\alpha_I$, respectively. $\Wee,\Wei,\Wie,\Wii$ are recurrent connectivity matrices, where their components are non-negative in accordance with Dale's principle. $P$ denotes the number of spatial backbone tokens, $d_{\mathrm{model}}$ is model dimension while $d_{\mathrm{attn}} {=} n_{\mathrm{heads}}\, d_{\mathrm{head}}$ and $d_{\mathrm{sync}} {=} d_{EE}{+}d_{EI}{+}d_{II}$ are attention and synchronization dimensions, respectively. Remaining notations are introduced throughout the text.

Given a fixed input, \gls{ctm} processes it over $T$ internal computation steps, representing thinking by a recurrence whose state consists of a collection of per-neuron activation histories~\cite{darlow2025continuous}. Two of the main pillars of \gls{ctm}, namely \gls{nlm} and latent representation via synchronization, are adopted in \gls{tide}. Each neuron $i$ has a small MLP $g_{\theta_i}\!:\RR^M\!\to\!\RR$ that maps the last $M$ pre-activations of that neuron as a temporal buffer with a window size $M$, to its next post-activation $r^\tau_i = g_{\theta_i}(a^{(\tau-M+1):(\tau)}_i)$. The synchronization, where the vector of post-activations is fed to the readout, is the second-order statistic of the post-activation history:
    $S^{(t)}_{ij} = \frac{\sum_{\tau \le t} \lambda^{t-\tau}\, r^\tau_i\, r^\tau_j}
                    {\sqrt{\sum_{\tau \le t} \lambda^{t-\tau}\, (r^\tau_i)^2
                    \;\sum_{\tau \le t} \lambda^{t-\tau}\, (r^\tau_j)^2}}$,
$\lambda \in (0, 1]$ denotes the decay factor that down-weights distant history~\cite{darlow2025continuous}. Note that \gls{ctm} does not follow Dale's principle and therefore its formulation and weights are fully unconstrained. 

\textbf{Wilson-Cowan dynamics:} The two-population continuous-time rate \gls{ei} networks are described by
\begin{equation}
\tau_E\,\dot r_E = -r_E + \varphi\,\!\bigl(\Wee\,r_E - \Wei\,r_I + u_E\bigr); \quad
\tau_I\,\dot r_I = -r_I + \varphi\,\!\bigl(\Wie\,r_E - \Wii\,r_I + u_I\bigr), 
\label{eq:wc-cont-I}
\end{equation}
where $\varphi$, $u_E$, and $u_I$ denote the activation function, and external excitatory and inhibitory inputs, respectively~\cite{wilson1972excitatory}. Excitation decays, $\tau_E \approx 20$ ms while inhibition relaxes with $\tau_I \approx\!5$ ms, so receptor kinetics impose $\tau_I\!<\!\tau_E$. A first-order forward-Euler discretization with step $\Delta t$ yields the recurrence:
\begin{equation}
r_E^{(t)} = (1-\alpha_E)\,r_E^{(t-1)} + \alpha_E\,\varphi(h_E^{(t)}); \quad 
r_I^{(t)} = (1-\alpha_I)\,r_I^{(t-1)} + \alpha_I\,\varphi(h_I^{(t)}), 
\label{eq:wc-euler-I}
\end{equation}
with $\alpha_\bullet\!=\!\Delta t/\tau_\bullet$ and $h_\bullet^{(t)}$ representing the pre-activations that are computed as
\begin{equation}
h_E^{(t)} = W_{EE}\,r_E^{(t-1)} - W_{EI}\,r_I^{(t-1)} + W_E^{\mathrm{in}}\,a^{(t)}; \quad 
h_I^{(t)} = W_{IE}\,r_E^{(t-1)} - W_{II}\,r_I^{(t-1)} + W_I^{\mathrm{in}}\,a^{(t)}, 
\label{eq:h-I-main}
\end{equation}
where $a^{(t)}\!\in\!\mathbb R^{d_{\mathrm{sync}}}$ denote the cross-attention among latent vector $z^{(t)}$, see Section~\ref{sec:arch:sync}. 

\textbf{Dale's principle:} Under Dale's principle~\cite{dale1935pharmacology,eccles1954cholinergic}, a neuron's effect on its post-synaptic neighbors has a fixed sign determined by its neurotransmitter. In an artificial network, this is formulated as recurrent connectivity matrices with non-negative components $W_{EE}, W_{EI}, W_{IE}, W_{II}$. The inhibitory population's contribution to downstream activity is given an explicit minus sign as in~\eqref{eq:h-I-main}. 

\textbf{Game-theoretic formulation:} Given any \gls{ei} network violates the assumption of Hopfield networks, $W\!=\!W^\top$, per-neuron energies $\{E_i\}_{i=1}^{d_{model}}$ are defined such that a single scalar Lyapunov function can be adopted for asymmetric populations. Following~\cite{betteti2025competition}, we assign a per-neuron local energy, $E_i$, with neurons partitioned into excitatory subset, $n_E$, and inhibitory subset, $n_I$. Thus, the local energy is defined as $E_i(x,u_i)$, where the joint network state, $x \in \RR^{d_{model}}$, is the stacked vector of post-Euler firing rates, $x^{(t)} = [\, r_E^{(t)};\; r_I^{(t)} \,]$. The firing rate of neuron $i$ for $r_E^{(t)}$ and $r^{(t)}_I$ at computation step, $t$, is denoted by $x_i^{(t)} \in \RR$, while $u_i \in \RR$ denotes external input driving neuron~$i$, for a given state of neighboring neurons, and is given by $W_\bullet^{\mathrm{in}} a^{(t)}$; $x^\ast$ is the joint state that denotes the Nash equilibrium in a Zero-sum game where the stationary conditions are satisfied iff: $\nabla_{x_i}E_i(x^\ast,u_i)=0\quad\forall i$.
\section{TIDE Architecture \& Its Foundations}
\label{sec:methodology}

\subsection{Theoretical Foundations}
\label{sec:theory}
Here we provide theoretical foundations of \gls{tide}; complete proofs are presented in Appendices~\ref{app:discrete}--\ref{app:cont}.

\subsubsection{Dale-constrained Gradient Optimization}
\label{sec:theory:dale}

A projection $W\!\leftarrow\!\Proj_{\mathrm{Dale}}(W)\!=\!\max(W,0)$ is used to enforce Dale's principle during optimization updates. It is component-wise, idempotent, and $1$-Lipschitz in Frobenius norm (Proposition~\ref{prop:proj}), and under \gls{ddp} produces identical weight for gradients across the model. 

\begin{theorem}[Dale-compatible gradient, informal]
\label{thm:main-pgd}
Let $\mathcal L$ be $L$-smooth and $\mu$-strongly convex over the
non-negative orthant $\mathcal W_\mathrm{Dale}\!=\!\{W\!\ge\!0\}$, while considering the iteration $W^{(k+1)}\!=\!\Proj_\mathrm{Dale}(W^{(k)}-\eta\,\nabla\mathcal L(W^{(k)}))$ with $\eta\!=\!1/L$. Then $W^{(k)}\!\in\!\mathcal W_\mathrm{Dale}$ for every $k\!\ge\!0$, and $\|W^{(k)}-W^\ast\|_F^2\!\le\!(1-\mu/L)^k\|W^{(0)}-W^\ast\|_F^2$ for the every unique projected minimizer $W^\ast$.
\end{theorem}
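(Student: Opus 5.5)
The argument is the standard linear-convergence proof for projected gradient descent. It has two ingredients: properties of the projection $\Proj_{\mathrm{Dale}}$ onto the closed convex cone $\mathcal W_{\mathrm{Dale}}$, and a contraction estimate for the gradient step.

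\textbf{Feasibility.} This follows by induction. $\Proj_{\mathrm{Dale}}(V)=\max(V,0)$ has non-negative entries for every $V$, so $W^{(k+1)}\in\mathcal W_{\mathrm{Dale}}$ whenever $k\ge 0$. We assume $W^{(0)}\in\mathcal W_{\mathrm{Dale}}$, or note that the bound is stated from the first projected iterate. Because the projection acts componentwise, it is exactly the Euclidean (Frobenius) projection onto the orthant. It is therefore firmly nonexpansive, which includes the $1$-Lipschitz property of Proposition~\ref{prop:proj}.

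\textbf{Existence, uniqueness, and fixed point.} Since $\mathcal L$ is $\mu$-strongly convex on the closed convex set $\mathcal W_{\mathrm{Dale}}$, it has a unique minimizer $W^\ast$ there. The first-order optimality condition $\langle\nabla\mathcal L(W^\ast),W-W^\ast\rangle\ge 0$ for all $W\in\mathcal W_{\mathrm{Dale}}$ is equivalent, for any $\eta>0$, to the fixed-point identity
\[
W^\ast=\Proj_{\mathrm{Dale}}\bigl(W^\ast-\eta\nabla\mathcal L(W^\ast)\bigr).
\]

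\textbf{Contraction.} Using the fixed-point identity and nonexpansiveness,
\[
\|W^{(k+1)}-W^\ast\|_F^2\le\|(W^{(k)}-W^\ast)-\eta(\nabla\mathcal L(W^{(k)})-\nabla\mathcal L(W^\ast))\|_F^2 .
\]
Write $D=W^{(k)}-W^\ast$ and $G=\nabla\mathcal L(W^{(k)})-\nabla\mathcal L(W^\ast)$, and expand to get $\|D\|^2-2\eta\langle G,D\rangle+\eta^2\|G\|^2$. For $L$-smooth, $\mu$-strongly convex functions the interpolation inequality holds:
\[
\langle G,D\rangle\ge\tfrac{\mu L}{\mu+L}\|D\|^2+\tfrac{1}{\mu+L}\|G\|^2 .
\]
With $\eta=1/L$, the coefficient of $\|G\|^2$ becomes $\tfrac1{L^2}-\tfrac{2}{L(\mu+L)}\le 0$. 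Combining this with $\|G\|\ge\mu\|D\|$ gives a contraction factor of at most $1-\mu/L$.

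A cleaner alternative avoids the interpolation inequality. It uses the standard gradient-mapping inequality for projected steps with $\eta=1/L$:
\[
\mathcal L(W^{(k+1)})\le\mathcal L(W)+\langle G_\eta,W^{(k)}-W\rangle-\tfrac{1}{2L}\|G_\eta\|^2-\tfrac\mu2\|W^{(k)}-W\|^2 ,
\]
which is derived from $L$-smoothness, strong convexity, and the projection's variational inequality. Setting $W=W^\ast$ and using $\mathcal L(W^{(k+1)})\ge\mathcal L(W^\ast)$ yields exactly $\|W^{(k+1)}-W^\ast\|^2\le(1-\mu/L)\|W^{(k)}-W^\ast\|^2$. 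Iterating this inequality gives the claimed bound.

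\textbf{Main obstacle.} The hard part is obtaining the exact factor $1-\mu/L$ rather than the weaker $1-\mu/L$ squared-type or $(1-\mu/L)^{1/2}$ rates. I would rely on the gradient-mapping route. It needs $\mathcal L$ to be smooth on a neighborhood of the orthant, because the unprojected point may leave $\mathcal W_{\mathrm{Dale}}$, so I will assume $\mathcal L$ is $L$-smooth on the whole space. Strong convexity is used only between points of $\mathcal W_{\mathrm{Dale}}$.
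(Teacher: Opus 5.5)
Your proof is correct, and it takes a genuinely different route from the paper's. Your feasibility step matches theirs.

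The paper never uses the fixed-point identity $W^\ast=\Pi_{\mathrm{Dale}}(W^\ast-\eta\nabla\mathcal L(W^\ast))$. Instead it expands the \emph{unprojected} step $\tilde W^{(k+1)}=W^{(k)}-\tfrac1L\nabla\mathcal L(W^{(k)})$ around $W^\ast$ using the raw gradient. It bounds the cross term by strong convexity and uses smoothness to bound $\|\nabla\mathcal L(W^{(k)})\|_F^2\le L^2\|W^{(k)}-W^\ast\|_F^2$. It then asserts the factor $(1-\mu/L)^2$, and only at the end uses that $\Pi_{\mathrm{Dale}}$ is nonexpansive and fixes $W^\ast$.

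That smoothness bound tacitly requires $\nabla\mathcal L(W^\ast)=0$, i.e.\ a minimizer with no active constraint. For example, take $\mathcal L(w)=\tfrac12(w+1)^2$ on $w\ge 0$. Then $W^\ast=0$ and $\nabla\mathcal L(W^\ast)=1$. Starting at $W^{(k)}=W^\ast$, the unprojected point is $-1$, which violates the paper's intermediate bound; only the projection brings it back.

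Even when $\nabla\mathcal L(W^\ast)=0$, the two bounds the paper cites combine only to $2-2\mu/L$, not $1-2\mu/L+\mu^2/L^2$. The $\mu^2/L^2$ term needs something like the interpolation inequality you invoke.

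So your decomposition is correct in the genuinely constrained case. You apply nonexpansiveness to gradient \emph{differences} via the fixed-point identity, then use either interpolation or the gradient-mapping inequality. The paper's shortcut is shorter, but it covers only an interior minimizer and is incomplete even there.

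Two small corrections to your write-up:
\begin{itemize}
\item $(1-\mu/L)^2$ is the \emph{stronger} rate, not a weaker one. Your first route in fact yields it exactly: with $q=\mu/L$ the factor is $\frac{(1-q)(1-q^2)}{1+q}=(1-q)^2$. However, the standard proof of the interpolation inequality goes through co-coercivity, which involves points off the orthant. So that route, as usually justified, needs strong convexity and smoothness on the whole space, not only between feasible points.
\item The gradient-mapping route needs no smoothness outside $\mathcal W_{\mathrm{Dale}}$. It evaluates $\mathcal L$ only at $W^{(k)}$, $W^{(k+1)}$ and $W^\ast$, and applies the descent lemma on the segment $[W^{(k)},W^{(k+1)}]\subset\mathcal W_{\mathrm{Dale}}$. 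The unprojected point enters only through the projection's variational inequality, never through a value of $\mathcal L$. So this route proves the statement under exactly its stated hypotheses, and your extra global-smoothness assumption can be dropped.
\end{itemize}
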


Training objective defined in Section~\ref{sec:arch:loss} is non-convex. Thus, the theorem applies only at a strict local minimum at which the standard condition, i.e., Lipschitz-continuous gradient, is met. Moreover, \gls{bptt} through the constraint is addressed in Remark~\ref{rem:bptt}.

\subsubsection{Game-theoretic Formulation \& Lyapunov Diagonal Stability}
\label{sec:theory:game}

Let $x = [\, r_E;\; r_I \,] \in \RR^{d_{model}}$, and $W_{\mathrm{eff}} \in \RR^{d_{model} \times d_{model}}$ be the signed effective recurrent matrix obtained by composing the four Dale-constrained matrices as defined in \eqref{eq:Weff-repeat} while defining the per-neuron energies following \cite{betteti2025competition} as
\begin{equation}
E_i(x,u_i) = -x_i\!\sum_{j=1}\!\bigl(1-\tfrac12 \delta_{ij}\bigr)\,W_{\mathrm{eff},ij}\,x_j \,-\, x_i u_i \,+\, \int_0^{x_i}\!\!\varphi^{-1}(s)\,ds,
\label{eq:energy-i-inline}
\end{equation}
where $\delta_{ij}$, and $\varphi^{-1}: \mathrm{range}(\varphi) \to \RR$ denote the Kronecker delta and the inverse of the neural activation function $\varphi$ introduced in \eqref{eq:wc-cont-I}, respectively. Moreover, $u_i$ is the per-neuron component of the cross-attention $a^{(t)}$, and the factor $(1 - \tfrac{1}{2}\delta_{ij})$ halves the diagonal contribution $W_{\mathrm{eff},ii}\,x_i^2$ so that $\partial_{x_i}$ recovers the full self-coupling term. Differentiating \eqref{eq:energy-i-inline} with respect to $x_i$ and using $(d/dx_i)\!\int_{0}^{x_i}\!\varphi^{-1}(s)\,ds = \varphi^{-1}(x_i)$
yields $\varphi^{-1}(x_i) - \partial_{x_i} E_i = (W_{\mathrm{eff}}\, x)_i + u_i$, which is the right-hand side of the Wilson-Cowan dynamics in \eqref{eq:wc-cont-I}. This provides the following benefits: 
\textit{i}) A principled soft penalty on the gradient as residual $\sum_i \|\partial_{x_i} E_i\|^2$, which is instantiated as the game loss in Section~\ref{sec:arch:loss}; and 
\textit{ii}) A saturation-based path to guaranteed existence of a unique equilibrium under mild contractivity conditions on $W_{\mathrm{eff}}$ and $\varphi$, formalized in Theorem~\ref{thm:fp-exist}.

\begin{theorem}[\gls{lds} implies global convergence]
\label{thm:lds-main}
If $W_\mathrm{eff}-I$ is \gls{lds} with $D\!\succ\!0$ such that $D(W_\mathrm{eff}{-}I){+}(W_\mathrm{eff}{-}I)\T D \!\prec\!0$, the linearized dynamics $\dot x{=}W_\mathrm{eff}x-x+b$ admits a unique Nash equilibrium $x^\ast{=}(I{-}W_\mathrm{eff})^{-1}b$, to which every trajectory converges \cite{betteti2025competition}.
\end{theorem}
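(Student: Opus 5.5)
Set $A := W_\mathrm{eff}-I$, so the linearized dynamics read $\dot x = Ax + b$. The argument is the standard Lyapunov one for a linear system, followed by identifying the equilibrium with the Nash point of the per-neuron energies \eqref{eq:energy-i-inline} in their linearized form.

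\textbf{Step 1: $A$ is nonsingular, so the equilibrium exists and is unique.} Suppose $Av=0$ with $v\neq 0$. Then
\[
v\T\bigl(DA + A\T D\bigr)v \;=\; 2\,v\T D A v \;=\; 0,
\]
which contradicts $DA+A\T D\prec 0$. Hence $A$ is invertible, and $\dot x=0$ has exactly one solution,
\[
x^\ast = -A^{-1}b = (I-W_\mathrm{eff})^{-1}b .
\]

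\textbf{Step 2: this equilibrium is the Nash equilibrium of the game.} In the linearized setting, $\varphi^{-1}$ is replaced by the identity, so $\int_0^{x_i}\varphi^{-1}(s)\,ds=\tfrac12 x_i^2$, and $u_i$ is replaced by $b_i$. The gradient identity derived after \eqref{eq:energy-i-inline} then gives
\[
\partial_{x_i}E_i = x_i - (W_\mathrm{eff}x)_i - b_i = -(Ax+b)_i .
\]
Thus the stationarity conditions $\nabla_{x_i}E_i(x^\ast)=0$ for all $i$ hold exactly when $Ax^\ast+b=0$. By Step 1 the Nash equilibrium is therefore unique and equals $x^\ast$.

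There is one point to state explicitly. Because $\partial^2_{x_i}E_i = 1 - W_{\mathrm{eff},ii}$, each $E_i$ is convex in its own coordinate only when $W_{\mathrm{eff},ii}<1$. If a Nash equilibrium is meant as a best response and not merely a stationary point, this condition is needed. It follows from the LDS hypothesis: the diagonal entries of $DA+A\T D$ are $2D_{ii}A_{ii}$, which must be negative, so $A_{ii}<0$, i.e.\ $W_{\mathrm{eff},ii}<1$.

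\textbf{Step 3: every trajectory converges to $x^\ast$.} Let $e = x - x^\ast$, so that $\dot e = Ae$. Take the Lyapunov function $V(e)=e\T D e$. Then
\[
\dot V = e\T\bigl(DA + A\T D\bigr)e \;\le\; -c\,\|e\|^2,
\]
where $c>0$ is the smallest eigenvalue of $-(DA+A\T D)$. Since $\|e\|^2 \ge V/\lambda_{\max}(D)$,
\[
\dot V \le -\frac{c}{\lambda_{\max}(D)}\,V ,
\]
and Grönwall's inequality gives exponential decay of $V$. Finally, $V \ge \lambda_{\min}(D)\,\|e\|^2$, so $\|e(t)\|\to 0$ exponentially fast from every initial condition. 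This establishes global convergence to $x^\ast$.

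\textbf{Main obstacle.} The linear algebra and the Lyapunov estimate are routine; the delicate part is the interpretive link in Step 2. The game-theoretic notion must be pinned to the linearized setting, namely $\varphi$ equal to the identity and $u=b$, so that the stationary points of the energies coincide exactly with the equilibria of the ODE. Diagonal (rather than general) $D$ is not needed for this linear argument. It only becomes relevant if one extends the result to monotone nonlinear $\varphi$, which is where the result cited from \cite{betteti2025competition} would be invoked.
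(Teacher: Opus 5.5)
Your proof is correct and follows the paper's argument: invertibility of $I-W_\mathrm{eff}$, then the weighted quadratic Lyapunov function $(x-x^\ast)^\top D(x-x^\ast)$; you differ only in getting invertibility from a direct null-vector contradiction (the paper goes through the Hurwitz property) and in closing with a Gr\"onwall bound that gives an explicit exponential rate, where the paper invokes LaSalle's invariance principle. One correction to your closing remark: your Step~2 check that $W_{\mathrm{eff},ii}<1$ relies on $(DA+A^\top D)_{ii}=2D_{ii}A_{ii}$, which holds only for diagonal $D$ (with a general $D\succ 0$ a Hurwitz $A$ can have $A_{ii}>0$, making $E_i$ concave in $x_i$), so diagonality already matters in the linear case if the stationary point is to be a genuine best-response Nash equilibrium rather than just an equilibrium of the ODE.
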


Therefore, \gls{lds} is monitored at runtime, but does not backpropagate through $\lambda_{\max}(W_\mathrm{eff}{+}W_\mathrm{eff}\T)/2$ given the eigen-decomposition is non-differentiable at crossings. Instead, gradient-based stability is achieved using the differentiable Perron surrogate provided below.

\subsubsection{Spectral Stability of Wilson-Cowan Dynamics}
\label{sec:theory:spec}

The Perron-based spectral regularizer is defined below, while its functionality and limitations are addressed. The analysis is carried out under the assumption of fully isolated E and I populations. Furthermore, the coupled \gls{ei} stability is provided under the \gls{lds} Theorem~\ref{thm:lds-main}.

\textbf{Isolated-E bound:} Linearizing \eqref{eq:wc-euler-I} around
$r_E^\ast{=}0$ without inhibitory signals yields $r_E^{(t+1)}\!=\!M_E\,r_E^{(t)}$ with $M_E\!=\!(1-\alpha_E)I+\alpha_E\Wee$. Given that the eigenvalues of $M_E$ are $\mu_i\!=\!(1-\alpha_E)+\alpha_E\lambda_i(\Wee)$, for the real non-negative Perron eigenvalue $\perr(\Wee)\!\ge\!0$, Schur stability $|\mu_i|\!<\!1$ for every $i$ is reduced to:
$\;\perr(\Wee)\,<\,1\; \quad\text{(isolated-E Schur bound).}$
That is, the \emph{E-only} sub-dynamics is contracting iff $\Wee$'s Perron eigenvalue is strictly below $1$, independent of the Euler step $\alpha_E$.

\textbf{Isolated-I bound:} The linearization on the inhibitory population, with their explicit minus sign, results in $\mu_j\!=\!(1-\alpha_I)-\alpha_I\lambda_j(\Wii)$. Thus, Schur stability $|\mu_j|\!<\!1$ on the Perron eigenvalue reduces to $\perr(\Wii)\,<\,2/\alpha_I-1\,=\,9$ at $\alpha_I\!=\!0.20$.
The upper threshold is binding in the I population, unlike the E population, where $(1-\alpha_E)$ leakage guarantees the lower $|\mu|\!>\!-1$ bound trivially as the minus sign flips $\mu_j$ past $-1$ for large $\lambda_j$.

\textbf{The coupled system \& limiters:}
In practice, with an activated inhibitory pathway, \gls{tide} operates above the isolated-E bound, e.g., on ImageNet-1K~\cite{russakovsky2015imagenet} $\persurr(\Wee)\!\approx\!14.7$ at convergence. The \gls{ei} network is stable as the effective-matrix linearization, $M_\mathrm{eff}\!=\!I-\mathrm{diag}(\alpha)+\mathrm{diag}(\alpha)W_\mathrm{eff}$
with sign-indefinite $W_\mathrm{eff}$ (\eqref{eq:W-eff} in the Appendix), has inhibitory signals that suppress the unstable excitatory modes. The Perron values of the isolated populations $\Wee,\Wii$ are, therefore, \emph{not} Schur conditions for the coupled \gls{ei} network; they act as a limiting factor for each non-negative population from growing. Empirically, we find that $\tau_{EE}\!=\!15,$ and $\tau_{II}\!=\!7$ keep gradient norms bounded and prevent instability during training on the ImageNet dataset.

\textbf{Differentiable Perron surrogate:} Because $W_{\cdot\cdot}\!\ge\!0$, the Perron-Frobenius theorem guarantees a real non-negative dominant eigenvalue. Therefore, it can be estimated by the sum-ratio power iteration
\begin{equation}
v_0\!=\!\one/n,\qquad v_{k+1}\!=\!W_{\cdot\cdot}\,v_k/\|W_{\cdot\cdot}\,v_k\|_2,\qquad
\persurr(W_{\cdot\cdot})\!=\!\one\T W_{\cdot\cdot}\,v_K/\one\T v_K,\quad K\!=\!10,
\label{eq:perron-sumratio-main}
\end{equation}
which is differentiable in $W_{\cdot\cdot}$ (Proposition~\ref{prop:sumratio}). On trained $\Wee$ we observe $\|\Wee\|_F/\persurr(\Wee)\!\approx\!2.5$; thus the sum-ratio penalty is tighter than an operator-norm regularizer, which would collapse $\Wee$ to near-diagonal matrices, removing recurrent interactions. For the off-diagonal blocks $\Wei$ and $ \Wie$, we monitor the maximum singular value to prevent gradient backpropagation, given that a sustained growth above the rolling median indicates the population-collapse.

\subsection{TIDE Architecture}
\label{sec:arch}

In this section, we present \gls{tide} architecture's components, including its feature extractor, \gls{ei} population-based \gls{nlm}, population-based synchronization mechanism, lateral inhibition, cross-attention, and surprise-gated associative memory, see Figure~\ref{fig:tide-vs-ctm}. Each of these components is described in detail below, along with its inputs, outputs, and guiding equations. Figure~\ref{fig:tide-vs-ctm} not only illustrates each of these components and their role but also shows the divergence from the \gls{ctm} framework.  Moreover, the complete training process of \gls{tide} is summarized in Algorithm~\ref{alg:tide} in Appendix~\ref{app:obj:alg}.

\subsubsection{Hierarchical Receptive Field}
\label{sec:arch:hrf}

Given the varying complexity of datasets used in this paper, where different backbones are required to extract and encode features from the input data, we instantiated two backbone variants for \gls{tide}. A simple shallow \gls{hrf} backbone that maps $32{\times}32$ or $28{\times}28$ input images, into an $8{\times}8$ grid of $d_\mathrm{attn}$-token positions, $P{=}64$ using multi-stage filters, while the second neuro-inspired deep \gls{hrf} uses ResNet-style backbone, followed by four hierarchical residual stages for extracting features from ImageNet-1K \cite{russakovsky2015imagenet}. The initial stage zero is a bank of learnable center-surround filters defined as:
\newline
\centerline{$
\phi^{(s)}(x) \;=\; \mathrm{ReLU}\,\!\Bigl(\mathrm{BN}\,\!\bigl(w_c^{(s)}\,C^{(s)}(x) - w_s^{(s)}\,S^{(s)}(x)\bigr)\Bigr),\quad s\!\in\!\{1,2,4,8\},
$}
with independent learnable center- and surround-convolutions $C^{(s)},S^{(s)}$, and scalar gains $(w_c^{(s)},w_s^{(s)})$ initialized at $(1,\,0.5)$, respectively, and $\mathrm{BN}$ denotes BatchNorm. The above expression subsumes the fixed-form \gls{dog} operator of~\cite{marr1980vision} as the special case where $C^{(s)}\!=\!G_s,\,S^{(s)}\!=\!G_{\kappa s}$ are frozen isotropic Gaussian distributions; see Remark~\ref{rem:dog-special} in Appendix~\ref{app:hrf}. Furthermore, stages one to four apply standard residual blocks with $2\times$ spatial down-sampling and channel expansion, while the final stage produces a $14\times 14$ grid of $1024$-dimensional tokens, $P{=}196$.

\textbf{Relation to CTM's backbone:} In contrast to \gls{ctm}, which uses the pre-trained ResNet-152 as its backbone, \gls{hrf} uses the identity map in its stages one to four as a parameter-space point (cf. Appendix~\ref{app:algo}). This results in differences in stage widths, stem, and block shape. Therefore, they are treated as distinct architectures rather than members of a single parameterized family, and the head-to-head comparisons in Section~\ref{sec:result} are marked accordingly.


\subsubsection{Population-specific NLM}
\label{sec:arch:nlm}

Each neuron has a two-layer gated \gls{mlp} with hidden dimension $H{=}4$ for small images or $H{=}32$ for ImageNet-1K, implemented as a batched \texttt{SuperLinear} operator that maintains an independent $M{\to}H$ linear map per neuron. The temporal weighting in the first layer uses an exponential kernel, $w_m=\mathrm{softmax}_m\,(-(M - m)/\tau)$ with $\tau{=}\tau_E{=}20$\,ms and $\tau{=}\tau_I{=}5$\,ms for excitatory and inhibitory NLM, matching AMPA and GABA$_A$ kinetics, respectively. Since using pre-activations leads to a positive feedback loop that destabilizes the dynamics, post-activations are stored in the FIFO buffer.

\subsubsection{Synchronization as Latent Representation}
\label{sec:arch:sync}

For each pair type $XY\!\in\!\{EE,EI,II\}$, a deterministic number of neuron pairs denoted by $p_{XY}$, is sampled from index tensors $\mathbf i \mathbf j$, to maintain recurrent sums of pair-wise product $\pi_k^{(t)}\!=\!r^{(t)}_{X,i_k}\,r^{(t)}_{Y,j_k}$:
\newline
\centerline{$
\nu_{XY}^{(t)} = e^{-\delta_{XY}}\,\nu_{XY}^{(t-1)} + \pi^{(t)},\qquad
\xi_{XY}^{(t)} = e^{-\delta_{XY}}\,\xi_{XY}^{(t-1)} + 1,
$}
with learnable decays $\delta_{XY} \in \RR^{p_{XY}}$ clipped to $[0,15]$. The step-$t$ synchronization vector is expressed as $z_{XY}^{(t)}\!=\!\texttt{Proj}_{XY}\!\bigl(\nu_{XY}^{(t)}/\sqrt{\xi_{XY}^{(t)}\!+\!\varepsilon}\bigr)$, where $\texttt{Proj}_{XY}$ is a linear projection from $\RR^{p_{XY}}$ to $\RR^{d_{XY}}$, which represents per-stream latent space. The full latent space is defined as their concatenation, $z^{(t)}\!=\!\texttt{LN}\,\!\bigl[z_{EE}^{(t)}; \,z_{EI}^{(t)}; \,z_{II}^{(t)}\bigr] \in \RR^{d_{\mathrm{sync}}}$. The individual streams are interpreted as \emph{binding} for E-E phase-lock, \emph{gating} for E-I coherence, and \emph{rhythm} for I-I coherence.

\subsubsection{Lateral Inhibition}
\label{sec:arch:wta}

Lateral inhibition is based on \gls{wta} by leveraging a nested E-I network within the excitatory population prior to the sync readout, see Algorithm~\ref{alg:tide}. A pool of auxiliary inhibitory neurons reads the current excitatory activity, sends back an inhibitory signal, which is subtracted from the excitatory neurons as follows:
\newline
\centerline{$
r_I^{(k)} = \mathrm{ReLU}(W_{EI}^\mathrm{lat}\,r_E^{(k-1)}),\quad
r_E^{(k)} = \mathrm{ReLU}(r_E^{(0)} - \gamma\,W_{IE}^\mathrm{lat}\,r_I^{(k)}),\quad k\!=\!1,\dots,K_\mathrm{WTA},
$}
where $^{(k)}$ are \gls{wta} iterations, $\gamma{>}0$ is a learnable gain, $W_{\cdot\cdot}^\mathrm{lat}\!\ge\!0$ are lateral weights, and $K_\mathrm{WTA}{=}5$. Following~\cite{betteti2025competition}, the limit of the expression is given by a Nash equilibrium of the Zero-sum game in which each excitatory neuron maximizes its own activity subject to a shared inhibitory budget. This results in a sparse representation in which only the most strongly driven neurons remain active.

\subsubsection{Surprise-gated Associative Memory with an E-I Retention Gate}
\label{sec:arch:memory}

Inspired by the Titans~\cite{behrouz2024titans} and MIRAS~\cite{behrouz2025miras} architectures, \gls{tide} uses an internal-state module as its memory. Similar to \gls{ctm}, a combination of a persistent buffer $m\!\in\!\RR^{d_\mathrm{memory}}$, a reconstruction head $f_\mathrm{rec}$ and a projection head $f_\mathrm{proj}$, enable the formation of surprise signal $s^{(t)}\!=\!\|f_\mathrm{rec}( m){-}z^{(t)}\|_2^2$. The \gls{ei} retention gate, $\iota^{(t)}\!=\!\sigma(-\kappa\,|\rho_{EI}^{(t)}{-}\rho_{EI}^\ast|)$ with learnable $\kappa\!>\!0$, is high when the population-normalized \gls{ei} activity ratio, $\rho^{(t)}_{EI}\!=\!(\|r_E^\mathrm{pre}\|_1/n_E)/(\|r_I\|_1/n_I)$, is near \gls{tide}'s target $\rho_{EI}^\ast\!=\!4$ and low otherwise.
The buffer is updated by a momentum-smoothed, surprise-gated rule
$ m^{(t)}\!=\! m^{(t-1)}\,\!+\,\! v^{(t)}$ with $ v^{(t)}\!=\!\mu v^{(t-1)}\,\!+\,\!g^{(t)}f_\mathrm{proj}(z^{(t)})$ and $g^{(t)}\!=\!\mathbbm 1[s^{(t)}{>}\theta_s](1-\iota^{(t)})$, for $\theta_s{=}0.5,\,\mu{=}0.9$. The memory state does not receive the task loss
gradient, while the heads and $\kappa$ are optimized via backpropagation.
The \gls{ei} retention gate is the only component of the architecture that \emph{couples} the memory update with the network's homeostatic state: when the E-I ratio is unbalanced, the network is in an unreliable regime; thus, the memory does not absorb new information.

\begin{figure}[t!]
  \centering
  \includegraphics[clip, width=1.0\textwidth]{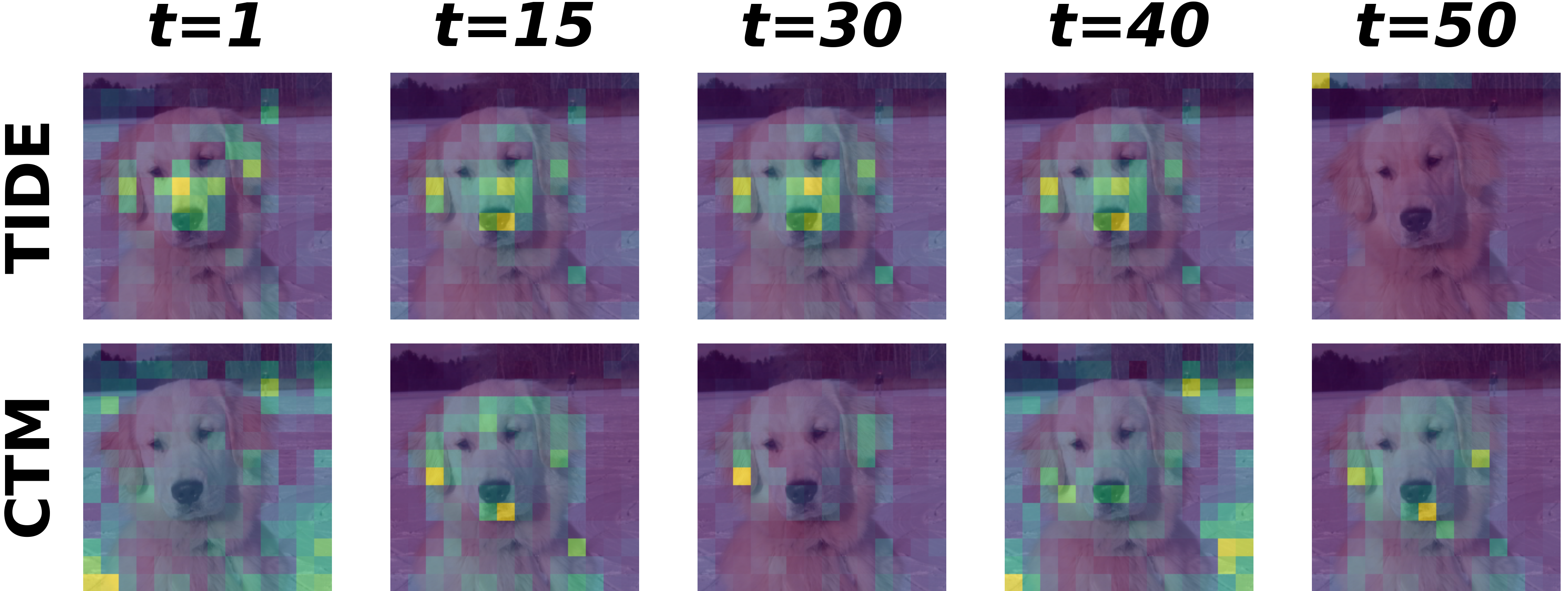}
  \caption{Temporal evolution of mean attention as saliency per computation step for \gls{tide} and CTM.}
  \label{fig:attention_compare}
\end{figure}

\subsubsection{Cross-attention \& Output Head}
\label{sec:arch:attn}

A standard multi-head attention with query $\texttt{Proj}_Q(z^{(t)})$ and keys/values supplied by the backbone is used to compute $a^{(t)}$ that is fed back into \eqref{eq:h-I-main}. For ImageNet-1K~\cite{russakovsky2015imagenet}, we allow an additional residual stream $a^{(t)}{\leftarrow}a_{\mathrm{attn}}^{(t)}{+}z^{(t)}$ to enable adaptive attention via joint attention and latent synchronization to improve the attention as exemplified in Figure~\ref{fig:attention_compare}, contrasting it to \gls{ctm}'s attention that is less stable. The logits are computed as $o^{(t)}\!=\!W_\mathrm{out}\,\texttt{LN}\,\bigl[\texttt{GLU}(W_\mathrm{hidden} [z^{(t)};m^{(t)}])\bigr]$ with the final prediction denoted by $o^{(T)}$, and the intermediate $\{o^{(t)}\}_{t<T}$, which drive the task loss and the certainty curve used for adaptive compute at inference.
To preserve the sign of the gradient, $\texttt{GLU}$ is used as its smooth gate keeps the gradient non-zero while operating on a normalized latent space that is sign-indefinite.

\subsubsection{Training Objective}
\label{sec:arch:loss}

The total loss is a weighted sum of five terms, each capturing a distinct design goal:
\begin{equation}
\mathcal L(\theta) \;=\; \mathcal L_\mathrm{task} \;+\; w(\texttt{step})\!\cdot\!\bigl(\lambda_\mathrm{EI}\mathcal L_\mathrm{EI} \!+\! \lambda_\mathrm{game}\mathcal L_\mathrm{game} \!+\! \lambda_\mathrm{sync}\mathcal L_\mathrm{sync}\bigr) \;+\; \lambda_\mathrm{spec}\mathcal L_\mathrm{spec},
\label{eq:total-loss-main}
\end{equation}
where $w(\texttt{step})\!\in\![0,1]$ is the curriculum warm-up
coefficient to allow warm-up training steps. Additionally, the weights for each loss are $(\lambda_{\mathrm{EI}},\lambda_{\mathrm{game}},\lambda_{\mathrm{sync}},\lambda_{\mathrm{spec}})\!=\!(10^{-2},10^{-3},10^{-4},10^{-1})$.

\textbf{Task loss:} \gls{tide} directly adopts \gls{ctm}'s task loss that is expressed as $\mathcal L_\mathrm{task} = \tfrac12\,\texttt{CE}\,(o^{(t_\mathrm{min})},y) + \tfrac12\,\texttt{CE}\,(o^{(t_\mathrm{cert})},y),$
where $t_\mathrm{min}\!=\!\arg\min_t\texttt{CE}\,(o^{(t)},y)$ and
$t_\mathrm{cert}\!=\!\arg\max_t\,c^{(t)}$ for the entropy-based certainty $c^{(t)}\!=\!1-H(p^{(t)})/\log C$. This loss not only encourages the model to provide a correct prediction at a given time step but also to increase confidence within the recurrent computations, thereby inducing adaptive compute behavior during internal computation steps.
 
\textbf{E-I balance:} To enforce \gls{ei} population balance, we penalize deviations of the activity ratio $\rho_{EI}$ from its designated target $\rho_{EI}^\ast = 4$ via $\mathcal L_\mathrm{EI} = \texttt{clip}(\rho_{EI}-\rho_{EI}^\ast,-50,50)^2,$ which acts as a soft homeostatic regularizer around the $80{:}20$ population split. The target $\rho_{EI}^\ast$ is a design choice that does not directly reflect the biological constant.

\textbf{Game loss:}
We use a scalar, clipped, size-normalized surrogate of the per-neuron
gradient residual, $\sum_i\|\partial_{x_i}E_i\|^2$, under mean-field approximation and \eqref{eq:energy-i-inline}: 
$\mathcal L_\mathrm{game} = \tfrac{1}{d_\mathrm{model}}\,\min(\left[n^{-1}_E\!\sum_{i=1}^{n_E}\! \mathcal{E}_{E}^{(i)}\;+\; n^{-1}_I\!\sum_{j=1}^{n_I}\! \mathcal{E}_{I}^{(j)}\right],\,100)$, 
where $\mathcal{E}_E =\frac{[\,(\bar w_{EE} - d_E)\, r^{(t)}_E \;-\; \bar w_{EI}\, \bar r_I +\; u_E\,]^{2}} {2\,(d_E - \bar w_{EE})}$, $\mathcal{E}_I =\frac{[\,\bar w_{IE}\, \bar r_E \;-\; (\bar w_{II} + d_I)\, r^{(t)}_I \;+\; u_I\,]^{2}} {2\,(d_I + \bar w_{II})}$, $\bar r_{\bullet}=n_{\bullet}^{-1} \sum_i (r_{\bullet})_i$, $d_E > 0$, and $d_I > 0$ denote population dissipation constants, and $\bar w_{\bullet} = \texttt{mean}(\mathrm{diag}(W_{\bullet}))$ as population-effective scalar weights.
Dividing by $d_\mathrm{model}$ makes the term scale-invariant, while clipping stabilizes transient spikes early in training.

\textbf{Synchronization and spectral regularizers:}
$\mathcal L_\mathrm{sync}\!=\!\|z^{(T)}\|_2^2/d_\mathrm{sync}$ is defined to keep the accumulators bounded. The spectral regularizer is formulated as $
\mathcal L_\mathrm{spec} = \mathrm{ReLU}\,(\persurr(\Wee)-\tau_{EE})^2 + \mathrm{ReLU}\,(\persurr(\Wii)-\tau_{II})^2,$
where $(\tau_{EE},\,\tau_{II})\!=\!(15,7)$, while using the sum-ratio Perron estimator of \eqref{eq:perron-sumratio-main}. As discussed in Section~\ref{sec:theory:spec}, we use these values to achieve stability during the training.

\begin{table}[!t]
\centering
\caption{Comparison between \gls{tide} and \gls{ctm} on five image-classification datasets. \gls{tide} results across multiple seeds: the \emph{Best Seed} column reports the highest-performing run with its best/final \texttt{top-1} accuracy, $\mathrm{Acc@1}$ (\%), and \texttt{mean\,$\pm$\,std} across the included seeds. \gls{ctm} is retrained based on~\cite{darlow2025continuous} for one seed, $42$. 
Additional details on the experiments are provided in Appendix~\ref{app:add}.}
\label{tab:main-bench2}
\footnotesize
\setlength{\tabcolsep}{2.5pt}
\renewcommand{\arraystretch}{1.05}
\begin{tabular}{@{}lccccccc@{}}
\toprule
& & & \multicolumn{3}{c}{\textbf{TIDE [multi-seeded]}}
& \multicolumn{2}{c@{}}{\textbf{CTM [single seed]}} \\
\cmidrule(lr){4-6} \cmidrule(lr){7-8}
\textbf{Task} & $d_{\mathrm{model}}$ & \textbf{Backbone}
 & \textbf{Steps} & \textbf{Best Seed (best / final)}
 & \textbf{\texttt{mean\,$\pm$\,std} (best / final)}
 & \textbf{Steps} & \textbf{Best} \\
\midrule
MNIST            & 256  & HRF   & 50K        & 99.67 / 99.63 & 99.62\,$\pm$\,0.04 / 99.59\,$\pm$\,0.06 & 200K & 99.59 \\
Fashion-MNIST    & 256  & HRF   & 50K        & 94.24 / 94.16 & 94.02\,$\pm$\,0.30 / 92.68\,$\pm$\,2.79 & 200K & 92.80 \\
CIFAR-10         & 512  & HRF   & 600K       & 90.60 / 90.50 & 90.57\,$\pm$\,0.04 / 90.48\,$\pm$\,0.04 & 600K & 86.16 \\
CIFAR-100        & 718  & HRF   & 300K       & 62.53 / 62.17 & 61.62\,$\pm$\,0.60 / 60.91\,$\pm$\,0.72 & 600K & 64.75 \\
ImageNet-1K      & 4096 & Deep-HRF & 100K    & 68.74 / 68.68 & 67.22\,$\pm$\,1.34 / 67.01\,$\pm$\,1.55 & 100K & 51.00 \\
ImageNet-1K      & 4096 & ---    & ---       & ---           & ---                                     & 500K & 71.78 \\
\bottomrule
\end{tabular}
\end{table}
\begin{table}[!b]
\centering
\caption{The result of the performed ablation studies on MNIST (M) and Fashion-MNIST (F-M) datasets with an identical seed. The best and final top-1 accuracy, $\mathrm{Acc@1}$ (\%), for evaluation and training phases is reported. Default hyperparameters rows are in bold, while the headers identify the hyperparameter being studied. ${}^\dagger$ and ${}^\ddagger$ denote training instability and population collapse, respectively, in which task loss dominates. Appendix~\ref{app:add} provides additional details on the ablation studies.}
\label{tab:ablation-compact}
\footnotesize
\setlength{\tabcolsep}{2pt}
\renewcommand{\arraystretch}{1.05}
\begin{tabular}{@{}lcc@{\hskip 8pt}lcc@{\hskip 8pt}lcc@{}}
\toprule
& \textbf{M} & \textbf{F-M} 
& & \textbf{M} & \textbf{F-M} 
& & \textbf{M} & \textbf{F-M} \\
\textbf{Setting} 
& \multicolumn{2}{c@{\hskip 12pt}}{\scriptsize $\mathrm{Acc@1}$\,(\%) best / final}
& \textbf{Setting} 
& \multicolumn{2}{c@{\hskip 12pt}}{\scriptsize $\mathrm{Acc@1}$\,(\%) best / final}
& \textbf{Setting} 
& \multicolumn{2}{c}{\scriptsize $\mathrm{Acc@1}$\,(\%) best / final} \\
\midrule
\multicolumn{3}{@{}l@{\hskip 12pt}}{\textbf{\textit{(i) Excitatory fraction $n_E/d_{\mathrm{model}}$}}}
& \multicolumn{3}{l@{\hskip 12pt}}{\textbf{\textit{(ii) Game-loss weight $\lambda_{\mathrm{game}}$}}}
& \multicolumn{3}{l}{\textbf{\textit{(iii) $\tau_I$ (ms), $\tau_E{=}20$ fixed}}} \\[5pt]
0.6  & 99.53 / 98.95 & 93.53 / 93.51 
& $0$               & 99.64 / 99.60 & 93.75 / 93.30 
& 3   & 99.58 / 99.56 & 93.64 / 93.50 \\
0.7  & 99.55 / 99.39 & 93.67 / 93.61 
& $\mathbf{10^{-3}}$ & \textbf{99.59 / 99.53} & \textbf{93.61 / 93.46}
& \textbf{5} & \textbf{99.58 / 99.52} & \textbf{93.15 / 92.56} \\
\textbf{0.8} & \textbf{99.53 / 99.51} & \textbf{93.53 / 86.19${}^{\dagger}$}
& $10^{-2}$         & 99.64 / 99.59${}^{\ddagger}$ & 94.15 / 93.95${}^{\ddagger}$
& 7   & 99.49 / 99.48 & 94.00 / 93.88 \\
0.9  & 99.55 / 99.28 & 93.49 / 93.34
& $10^{-1}$         & 99.62 / 99.61${}^{\ddagger}$ & 94.03 / 93.91${}^{\ddagger}$
& 10  & 99.55 / 99.55 & 93.78 / 93.73 \\
\midrule
\multicolumn{3}{@{}l@{\hskip 12pt}}{\textbf{\textit{(iv) $\tau_E$ (ms), $\tau_I{=}5$ fixed}}}
& \multicolumn{3}{l@{\hskip 12pt}}{\textbf{\textit{(v) Internal computation depth $T$}}}
& \multicolumn{3}{l}{\textbf{\textit{(vi) Lateral inhibition iterations $K_{\mathrm{WTA}}$}}} \\[5pt]
10  & 98.69 / 95.97 & 93.91 / 89.45${}^{\dagger}$
& 10  & 99.58 / 99.54 & 94.17 / 94.00 
& 1   & 99.53 / 99.46 & 94.10 / 94.10 \\
15  & 99.59 / 99.47 & 93.80 / 93.62
& 25  & 99.58 / 99.51 & 93.90 / 93.86 
& 3   & 99.58 / 99.49 & 93.67 / 93.64 \\
\textbf{20} & \textbf{99.62 / 99.61} & \textbf{94.23 / 93.73}
& \textbf{50} & \textbf{99.60 / 99.57} & \textbf{93.86 / 93.78}
& \textbf{5} & \textbf{99.42 / 98.99} & \textbf{93.06 / 92.54} \\
25  & 99.61 / 99.61 & 94.25 / 94.05
& 75  & 99.54 / 98.11 & 93.86 / 70.42${}^{\dagger}$
& 10  & 99.49 / 99.47 & 94.20 / 94.07 \\
30  & 99.54 / 99.50 & 92.95 / 92.85
& 100 & 98.77 / 97.34 & 92.05 / 24.45${}^{\dagger}$
& & & \\
\bottomrule
\end{tabular}
\end{table}

\section{Experimental Evaluation}
\label{sec:result}
\textbf{Setup:}
We evaluate TIDE on five image-classification datasets: MNIST~\cite{lecun1998gradient}, Fashion-MNIST~\cite{xiao2017fashion}, CIFAR-10 and CIFAR-100~\cite{krizhevsky2009cifar}, and ImageNet-1K~\cite{russakovsky2015imagenet}; Appendix~\ref{app:exp:hparams} specifies hyperparameters. 

\textbf{Main benchmark:}
Table~\ref{tab:main-bench2} provides the detailed comparative analysis between \gls{tide} and \gls{ctm} under multi-seeded simulations. Note that \gls{ctm} is retrained following~\cite{darlow2025continuous} with the same seed as the best \gls{tide} model to ensure reproducible results. It has been observed that in general \gls{tide} is more sample efficient and requires fewer steps to reach higher \texttt{top-1} accuracy. As shown in Table~\ref{tab:main-bench2}, \gls{tide} does not require a deep feature extractor as its backbone and is more performant.

\textbf{Ablations studies:}
MNIST and Fashion-MNIST are used to analyze the effects of various hyperparameter choices on learning outcomes and the stability of \gls{tide}. All models are trained for $50$\,K steps, with an identical simple backbone with $d_{model}=256$ across all ablation studies. Table~\ref{tab:ablation-compact} provides the single-seeded experiments with seed $42$ on \gls{tide}, while full breakdowns and ImageNet stability diagnostics are shown in Appendix~\ref{app:add}. It has been observed that internal computation steps $T\!\in\![10,50]$ provides stable learning, while $\tau_E\!\in\![15,25]$ and $\tau_I\!\in\![5,7]$ provide the highest accuracy.

\begin{figure}[t!]
  \centering
  \includegraphics[clip, trim=2.5cm 1.0cm 0.75cm 0.74cm, width=\textwidth]{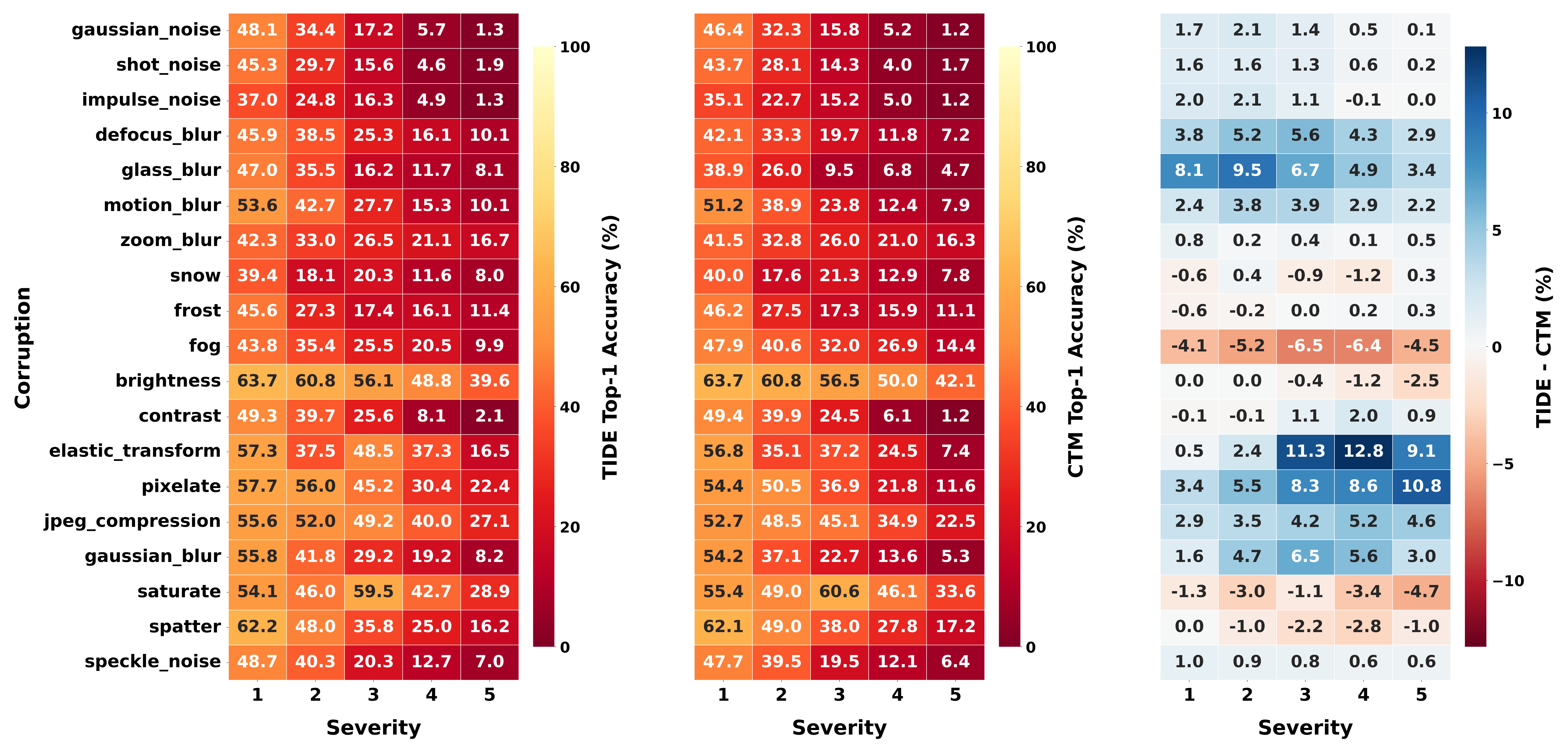}
  \caption{Robustness analysis using ImageNet-C~\cite{hendrycks2019robustness} for corrupted images with perturbations. Left panel presents results for \gls{tide}, center panel corresponds to \gls{ctm}, right panel reports differences.}
  \label{fig:robustness_c}
\end{figure}

\textbf{Robustness study:}
Tiny-ImageNet~\cite{le2015tiny}, ImageNet-C~\cite{hendrycks2019robustness}, and ImageNet-R~\cite{hendrycks2021many} were used to assess the \gls{ood} robustness of \gls{tide} and \gls{ctm} that are solely trained on the original ImageNet-1K. Figure~\ref{fig:robustness_c} summarizes the performance degradation of \gls{ctm} and \gls{tide} on ImageNet-C. \gls{tide} outperforms \gls{ctm} in the presence of most perturbations. In the presence of aerosol particle cases, such as fog and snow, \gls{ctm} is, however, more robust. Nevertheless, under all considered perturbations, \gls{tide} improves \gls{ctm}'s \texttt{top-1} accuracy by an average of $+1.65$\%. Furthermore, additional robustness evaluations for various datasets are provided in Appendix~\ref{app:add:robust}.
\section{Discussion \& Conclusions}
\label{sec:conclusions}

In this study, we presented \gls{tide}, a new architecture that extends \gls{ctm} with Wilson-Cowan dynamics and Dale's principle, enforcing non-negative weights. This design enables utilization of an asymmetric neuron circuitry for stabilizing temporal dynamics, while running an AMPA/GABA$_{\mathrm{A}}$-calibrated two-population recurrence for internal computation steps as in \gls{ctm}. Moreover, we used homeostatic \gls{ei} ratio activity to introduce a novel retention gate that serves as a surprise-gated memory update.

\gls{tide} can achieve high \texttt{top-1} accuracy, even with the population collapse, indicating that \gls{ctm} can be considered as a special case of \gls{tide}, without inhibitory neurons, $n_I=0$ (cf. Table~\ref{tab:ablation-compact}). Furthermore, in contrast to \gls{ctm} trained only for $500$\,K steps, \gls{tide}  trained for $100$\,K steps, already outperforms \gls{ctm} under varied severity of perturbations, Figure~\ref{fig:robustness_c}. This supports the hypothesis about stability and sample efficiency of \gls{tide} while justifying the design with neuro-inspired primitives, where fewer computational resources are needed to achieve generalization and better \gls{ood} performance.

Further performance gains can be achieved by replacing the scalar-weighted game loss with an augmented-Lagrangian formulation that couples the per-neuron Nash residual to $\rho_{EI} \approx \rho_{EI}^\star$, enabling manifold-based optimization instead of a task-specific gradient approach. Moreover, imposing stricter biophysical constraints and moving toward more bio-plausible modeling may improve the stability conditions under extensive inhibitory feedback, helping to address the training drift and computational time instability observed in Table \ref{tab:ablation-compact}. Finally, further investigation and evaluation are needed to scale \gls{tide} to sequence-based modeling, enabling spike-based implementations for greater energy efficiency.


\newpage
\small
\bibliography{main}

\begin{thebibliography}{10}

\bibitem{lecun1998gradient}
Yann LeCun, L\'{e}on Bottou, Yoshua Bengio, and Patrick Haffner.
\newblock Gradient-based learning applied to document recognition.
\newblock {\em Proceedings of the IEEE}, 86(11):2278--2324, 1998.

\bibitem{krizhevsky2012imagenet}
Alex Krizhevsky, Ilya Sutskever, and Geoffrey~E. Hinton.
\newblock {ImageNet} classification with deep convolutional neural networks.
\newblock {\em Communications of the ACM}, 60(6):84–90, 2017.

\bibitem{he2016deep}
Kaiming He, Xiangyu Zhang, Shaoqing Ren, and Jian Sun.
\newblock Deep residual learning for image recognition.
\newblock In {\em IEEE Conference on Computer Vision and Pattern Recognition (CVPR)}, pages 770--778, 2016.

\bibitem{vaswani2017attention}
Ashish Vaswani, Noam Shazeer, Niki Parmar, Jakob Uszkoreit, Llion Jones, Aidan~N. Gomez, {\L}ukasz Kaiser, and Illia Polosukhin.
\newblock Attention is all you need.
\newblock In {\em Advances in Neural Information Processing Systems (NeurIPS)}, volume~30, pages 5998--6008, 2017.

\bibitem{dosovitskiy2021image}
Alexey Dosovitskiy, Lucas Beyer, Alexander Kolesnikov, Dirk Weissenborn, Xiaohua Zhai, Thomas Unterthiner, Mostafa Dehghani, Matthias Minderer, Georg Heigold, Sylvain Gelly, Jakob Uszkoreit, and Neil Houlsby.
\newblock An image is worth 16x16 words: Transformers for image recognition at scale.
\newblock In {\em International Conference on Learning Representations (ICLR)}, 2021.

\bibitem{ChiYan2022CVNN}
ChiYan Lee, Hideyuki Hasegawa, and Shangce Gao.
\newblock Complex-valued neural networks: A comprehensive survey.
\newblock {\em IEEE/CAA Journal of Automatica Sinica}, 9(8):1406--1426, 2022.

\bibitem{lillicrap2016random}
Timothy~P. Lillicrap, Daniel Cownden, Douglas~B. Tweed, and Colin~J. Akerman.
\newblock Random synaptic feedback weights support error backpropagation for deep learning.
\newblock {\em Nature Communications}, 7:13276, 2016.

\bibitem{scellier2017equilibrium}
Benjamin Scellier and Yoshua Bengio.
\newblock Equilibrium propagation: Bridging the gap between energy-based models and backpropagation.
\newblock {\em Frontiers in Computational Neuroscience}, 11:24, 2017.

\bibitem{payeur2021burst}
Alexandre Payeur, Jordan Guerguiev, Friedemann Zenke, Blake~A. Richards, and Richard Naud.
\newblock Burst-dependent synaptic plasticity can coordinate learning in hierarchical circuits.
\newblock {\em Nature Neuroscience}, 24:1010--1019, 2021.

\bibitem{zador2019critique}
Anthony~M. Zador.
\newblock A critique of pure learning and what artificial neural networks can learn from animal brains.
\newblock {\em Nature Communications}, 10(1):3770, 2019.

\bibitem{isaacson2011inhibition}
Jeffry~S. Isaacson and Massimo Scanziani.
\newblock How inhibition shapes cortical activity.
\newblock {\em Neuron}, 72(2):231--243, 2011.

\bibitem{haber2022daleian}
Adam Haber and Elad Schneidman.
\newblock The computational and learning benefits of {D}aleian neural networks.
\newblock In {\em Advances in Neural Information Processing Systems (NeurIPS)}, volume~35, 2022.

\bibitem{cornford2021dales}
Jonathan Cornford, Damjan Kalajdzievski, Marco Leite, Am{\'e}lie Lamarquette, Dimitri~M. Kullmann, and Blake Richards.
\newblock Learning to live with {Dale}{\textquoteright}s principle: {ANN}s with separate excitatory and inhibitory units.
\newblock In {\em International Conference on Learning Representations (ICLR)}, pages 1--27, 2021.

\bibitem{haider2006neocortical}
Bilal Haider, Alvaro Duque, Andrea~R. Hasenstaub, and David~A. McCormick.
\newblock Neocortical network activity in vivo is generated through a dynamic balance of excitation and inhibition.
\newblock {\em The Journal of Neuroscience}, 26(17):4535--4545, 2006.

\bibitem{darlow2025continuous}
Luke Darlow, Ciaran Regan, Sebastian Risi, Jeffrey Seely, and Llion Jones.
\newblock Continuous thought machines.
\newblock In {\em Advances in Neural Information Processing Systems (NeurIPS)}, 2025.

\bibitem{dale1935pharmacology}
Henry Dale.
\newblock Pharmacology and nerve-endings.
\newblock {\em Proceedings of the Royal Society of Medicine}, 28(3):319--332, 1935.

\bibitem{eccles1954cholinergic}
John~C. Eccles, Paul Fatt, and Kyozo Koketsu.
\newblock Cholinergic and inhibitory synapses in a pathway from motor-axon collaterals to motoneurones.
\newblock {\em The Journal of Physiology}, 126(3):524--562, 1954.

\bibitem{betteti2025competition}
Simone Betteti, William Retnaraj, Alexander Davydov, Jorge Cort\'{e}s, and Francesco Bullo.
\newblock Competition, stability, and functionality in excitatory-inhibitory neural circuits.
\newblock {\em arXiv:2512.05252}, 2025.

\bibitem{schwarzschild2021can}
Avi Schwarzschild, Eitan Borgnia, Arjun Gupta, Furong Huang, Uzi Vishkin, Micah Goldblum, and Tom Goldstein.
\newblock Can you learn an algorithm? {G}eneralizing from easy to hard problems with recurrent networks.
\newblock In {\em Advances in Neural Information Processing Systems (NeurIPS)}, pages 6695--6706, 2021.

\bibitem{bai2019deep}
Shaojie Bai, J.~Zico Kolter, and Vladlen Koltun.
\newblock Deep equilibrium models.
\newblock In {\em Advances in Neural Information Processing Systems (NeurIPS)}, pages 688--699, 2019.

\bibitem{chen2018neural}
Ricky T.~Q. Chen, Yulia Rubanova, Jesse Bettencourt, and David Duvenaud.
\newblock Neural ordinary differential equations.
\newblock In {\em Advances in Neural Information Processing Systems (NeurIPS)}, 2018.

\bibitem{graves2016adaptive}
Alex Graves.
\newblock Adaptive computation time for recurrent neural networks.
\newblock {\em arXiv preprint arXiv:1603.08983}, 2016.

\bibitem{banino2021pondernet}
Andrea Banino, Jan Balaguer, and Charles Blundell.
\newblock {PonderNet}: Learning to ponder.
\newblock In {\em ICML Workshop on Automated Machine Learning}, 2021.

\bibitem{wilson1972excitatory}
Hugh~R. Wilson and Jack~D. Cowan.
\newblock Excitatory and inhibitory interactions in localized populations of model neurons.
\newblock {\em Biophysical Journal}, 12(1):1--24, 1972.

\bibitem{wilson1973mathematical}
Hugh~R. Wilson and Jack~D. Cowan.
\newblock A mathematical theory of the functional dynamics of cortical and thalamic nervous tissue.
\newblock {\em Kybernetik}, 13(2):55--80, 1973.

\bibitem{vanvreeswijk1996chaos}
Carl van Vreeswijk and Haim Sompolinsky.
\newblock Chaos in neuronal networks with balanced excitatory and inhibitory activity.
\newblock {\em Science}, 274(5293):1724--1726, 1996.

\bibitem{brunel2000dynamics}
Nicolas Brunel.
\newblock Dynamics of sparsely connected networks of excitatory and inhibitory spiking neurons.
\newblock {\em Journal of Computational Neuroscience}, 8(3):183--208, 2000.

\bibitem{vogels2011inhibitory}
Tim~P. Vogels, Henning Sprekeler, Friedemann Zenke, Claudia Clopath, and Wulfram Gerstner.
\newblock Inhibitory plasticity balances excitation and inhibition in sensory pathways and memory networks.
\newblock {\em Science}, 334(6062):1569--1573, 2011.

\bibitem{turrigiano2008homeostatic}
Gina~G. Turrigiano.
\newblock The self-tuning neuron: Synaptic scaling of excitatory synapses.
\newblock {\em Cell}, 135(3):422--435, 2008.

\bibitem{harris2013cortical}
Kenneth~D. Harris and Thomas~D. Mrsic-Flogel.
\newblock Cortical connectivity and sensory coding.
\newblock {\em Nature}, 503(7474):51--58, 2013.

\bibitem{hopfield1982neural}
J.~J. Hopfield.
\newblock Neural networks and physical systems with emergent collective computational abilities.
\newblock {\em Proceedings of the National Academy of Sciences}, 79(8):2554--2558, 1982.

\bibitem{kazemian2024cortex}
Atlas Kazemian, Eric Elmoznino, and Michael~F. Bonner.
\newblock Convolutional architectures are cortex-aligned de novo.
\newblock {\em Nature Machine Intelligence}, 7:1834--1844, 2025.

\bibitem{riesenhuber1999hierarchical}
Maximilian Riesenhuber and Tomaso Poggio.
\newblock Hierarchical models of object recognition in cortex.
\newblock {\em Nature Neuroscience}, 2(11):1019--1025, 1999.

\bibitem{yamins2014performance}
Daniel L.~K. Yamins, Ha~Hong, Charles~F. Cadieu, Ethan~A. Solomon, Darren Seibert, and James~J. DiCarlo.
\newblock Performance-optimized hierarchical models predict neural responses in higher visual cortex.
\newblock {\em Proceedings of the National Academy of Sciences}, 111(23):8619--8624, 2014.

\bibitem{schrimpf2020integrative}
Martin Schrimpf, Jonas Kubilius, Michael~J. Lee, N.~Apurva Ratan~Murty, Robert Ajemian, and James~J. DiCarlo.
\newblock Integrative benchmarking to advance neurally mechanistic models of human intelligence.
\newblock {\em Neuron}, 108(3):413--423, 2020.

\bibitem{behrouz2024titans}
Ali Behrouz, Peilin Zhong, and Vahab Mirrokni.
\newblock Titans: Learning to memorize at test time.
\newblock In {\em Advances in Neural Information Processing Systems (NeurIPS)}, pages 1--38, 2025.

\bibitem{behrouz2025miras}
Ali Behrouz, Meisam Razaviyayn, Peilin Zhong, and Vahab Mirrokni.
\newblock It's all connected: A journey through test-time memorization, attentional bias, retention, and online optimization.
\newblock {\em arXiv:2504.13173}, 2025.

\bibitem{gu2023mamba}
Albert Gu and Tri Dao.
\newblock {Mamba}: Linear-time sequence modeling with selective state spaces.
\newblock {\em arXiv:2312.00752}, 2023.

\bibitem{beck2024xlstm}
Maximilian Beck, Korbinian P{\"o}ppel, Markus Spanring, Andreas Auer, Oleksandra Prudnikova, Michael Kopp, G{\"u}nter Klambauer, Johannes Brandstetter, and Sepp Hochreiter.
\newblock {xLSTM}: Extended long short-term memory.
\newblock In {\em Advances in Neural Information Processing Systems (NeurIPS)}, 2024.

\bibitem{zhang2019rmsnorm}
Biao Zhang and Rico Sennrich.
\newblock Root mean square layer normalization.
\newblock In {\em Advances in Neural Information Processing Systems (NeurIPS)}, 2019.

\bibitem{loshchilov2019decoupled}
Ilya Loshchilov and Frank Hutter.
\newblock Decoupled weight decay regularization.
\newblock In {\em International Conference on Learning Representations (ICLR)}, 2019.

\bibitem{loshchilov2017sgdr}
Ilya Loshchilov and Frank Hutter.
\newblock {SGDR}: Stochastic gradient descent with warm restarts.
\newblock In {\em International Conference on Learning Representations (ICLR)}, 2017.

\bibitem{micikevicius2018mixed}
Paulius Micikevicius, Sharan Narang, Jonah Alben, Gregory~F. Diamos, Erich Elsen, David Garc\'{i}a, Boris Ginsburg, Michael Houston, Oleksii Kuchaiev, Ganesh Venkatesh, and Hao Wu.
\newblock Mixed precision training.
\newblock In {\em International Conference on Learning Representations (ICLR)}, 2018.

\bibitem{russakovsky2015imagenet}
Olga Russakovsky, Jia Deng, Hao Su, Jonathan Krause, Sanjeev Satheesh, Sean Ma, Zhiheng Huang, Andrej Karpathy, Aditya Khosla, Michael Bernstein, Alexander~C. Berg, and Li~Fei-Fei.
\newblock {ImageNet} large scale visual recognition challenge.
\newblock {\em International Journal of Computer Vision}, 115(3):211--252, 2015.

\bibitem{marr1980vision}
David Marr and Ellen Hildreth.
\newblock Theory of edge detection.
\newblock {\em Proceedings of the Royal Society of London. Series B, Biological Sciences}, 207(1167):187--217, 1980.

\bibitem{xiao2017fashion}
Han Xiao, Kashif Rasul, and Roland Vollgraf.
\newblock {Fashion-MNIST}: A novel image dataset for benchmarking machine learning algorithms.
\newblock {\em arXiv:1708.07747}, 2017.

\bibitem{krizhevsky2009cifar}
Alex Krizhevsky.
\newblock Learning multiple layers of features from tiny images.
\newblock Technical report, University of Toronto, 2009.

\bibitem{hendrycks2019robustness}
Dan Hendrycks and Thomas Dietterich.
\newblock Benchmarking neural network robustness to common corruptions and perturbations.
\newblock In {\em International Conference on Learning Representations (ICLR)}, 2019.

\bibitem{le2015tiny}
Ya~Le and Xuan Yang.
\newblock Tiny {ImageNet} visual recognition challenge.
\newblock Technical report, Stanford University, 2015.

\bibitem{hendrycks2021many}
Dan Hendrycks, Steven Basart, Norman Mu, Saurav Kadavath, Frank Wang, Evan Dorundo, Rahul Desai, Tyler Zhu, Samyak Parajuli, Mike Guo, Dawn Song, Jacob Steinhardt, and Justin Gilmer.
\newblock The many faces of robustness: A critical analysis of out-of- distribution generalization.
\newblock In {\em IEEE/CVF International Conference on Computer Vision (ICCV)}, pages 8340--8349, 2021.

\bibitem{granas2003fixed}
Andrzej Granas and James Dugundji.
\newblock {\em Fixed Point Theory}.
\newblock Springer Monographs in Mathematics. Springer, 2003.

\bibitem{vanvreeswijk1998chaotic}
Carl van Vreeswijk and Haim Sompolinsky.
\newblock Chaotic balanced state in a model of cortical circuits.
\newblock {\em Neural Computation}, 10(6):1321--1371, 1998.

\bibitem{ahmadian2021dynamical}
Yashar Ahmadian and Kenneth~D. Miller.
\newblock What is the dynamical regime of cerebral cortex?
\newblock {\em Neuron}, 109(21):3373--3391, 2021.

\bibitem{sahoo2024simple}
Subham~Sekhar Sahoo, Marianne Arriola, Yair Schiff, Aaron Gokaslan, Edgar Marroquin, Justin~T. Chiu, Alexander Rush, and Volodymyr Kuleshov.
\newblock Simple and effective masked diffusion language models.
\newblock In {\em Advances in Neural Information Processing Systems (NeurIPS)}, 2024.

\bibitem{khalil2002nonlinear}
Hassan~K. Khalil.
\newblock {\em Nonlinear Systems}.
\newblock Prentice Hall, Upper Saddle River, NJ, 3rd edition, 2002.

\bibitem{hornjohnson2013}
Roger~A. Horn and Charles~R. Johnson.
\newblock {\em Matrix Analysis}.
\newblock Cambridge University Press, New York, NY, USA, 2nd edition, 2013.

\bibitem{brunel_hakim1999}
Nicolas Brunel and Vincent Hakim.
\newblock Fast global oscillations in networks of integrate-and-fire neurons with low firing rates.
\newblock {\em Neural Computation}, 11(7):1621--1671, 1999.

\bibitem{kuffler1953rgc}
Stephen~W. Kuffler.
\newblock Discharge patterns and functional organization of mammalian retina.
\newblock {\em Journal of Neurophysiology}, 16(1):37--68, 1953.

\bibitem{dacey2003rgc}
Dennis~M. Dacey, Beth~B. Peterson, Farrel~R. Robinson, and Paul~D. Gamlin.
\newblock Fireworks in the primate retina: In vitro photodynamics reveals diverse {LGN}-projecting ganglion cell types.
\newblock {\em Neuron}, 37(1):15--27, 2003.

\bibitem{hubel1962receptive}
David~H. Hubel and Torsten~N. Wiesel.
\newblock Receptive fields, binocular interaction and functional architecture in the cat's visual cortex.
\newblock {\em The Journal of Physiology}, 160(1):106--154, 1962.

\bibitem{lindeberg1994scale}
Tony Lindeberg.
\newblock {\em Scale-Space Theory in Computer Vision}.
\newblock The Kluwer International Series in Engineering and Computer Science. Kluwer Academic Publishers, Boston, MA, 1994.

\end{thebibliography}
\bibliographystyle{unsrt}

\newpage
\section*{NeurIPS Paper Checklist}

\begin{enumerate}

\item {\bf Claims}
    \item[] Question: Do the main claims made in the abstract and introduction accurately reflect the paper's contributions and scope?
    \item[] Answer: \answerYes{} 
    \item[] Justification: The main theoretical contribution of the paper is described in Sections~3-4, while its empirical validation is provided in Section~5 with additional details following in Appendix~F.

\item {\bf Limitations}
    \item[] Question: Does the paper discuss the limitations of the work performed by the authors?
    \item[] Answer: \answerYes{} 
    \item[] Justification: In Section~6, we discuss several limitations, including the addition of an augmented-Lagrangian formulation to couple the per-neuron Nash residual to $\rho_{EI} \approx \rho_{EI}^*$, enabling manifold-based optimization instead of a task-specific gradient approach. Moreover, imposing strict biophysical constraints to further improve the current architecture should be considered. Additional limitations and open questions for future work are provided in Appendix~H.

\item {\bf Theory assumptions and proofs}
    \item[] Question: For each theoretical result, does the paper provide the full set of assumptions and a complete (and correct) proof?
    \item[] Answer: \answerYes{}. 
    \item[] Justification: All the theorems, formulas, and proofs in the paper are numbered and cross-referenced. 
    We state the main theoretical results in the main text and provide proofs and additional details in the Appendix, e.g., in Appendices~A-C.

    \item {\bf Experimental result reproducibility}
    \item[] Question: Does the paper fully disclose all the information needed to reproduce the main experimental results of the paper to the extent that it affects the main claims and/or conclusions of the paper (regardless of whether the code and data are provided or not)?
    \item[] Answer: \answerYes{} 
    \item[] Justification: The information needed to reproduce the main experimental results of the paper is provided in the Appendix. Appendix~E provides the pseudocode of the proposed architecture. Appendix~F provides all hyperparameters and experimental setup necessary to reproduce the experiments. To streamline reproducibility, we are attaching the source code implementing the architecture.

\item {\bf Open access to data and code}
    \item[] Question: Does the paper provide open access to the data and code, with sufficient instructions to faithfully reproduce the main experimental results, as described in supplemental material?
    \item[] Answer: \answerYes{} 
    \item[] Justification:  The datasets used for the experiments reported in this study are open-source and are available from Hugging Face. Along with this submission, we provide the source code that implements the architecture. The experimental protocol, preprocessing steps, and hyperparameters used in simulations are provided in Appendix~F. The scripts necessary to rerun the simulations will be released prior to publication of the paper.

\item {\bf Experimental setting/details}
    \item[] Question: Does the paper specify all the training and test details (e.g., data splits, hyperparameters, how they were chosen, type of optimizer) necessary to understand the results?
    \item[] Answer: \answerYes{} 
    \item[] Justification: All the details are provided in Appendix~F.

\item {\bf Experiment statistical significance}
    \item[] Question: Does the paper report error bars suitably and correctly defined or other appropriate information about the statistical significance of the experiments?
    \item[] Answer: \answerYes{} 
    \item[] Justification: Whenever possible, we reported the results of several simulations with different seeds for PRG. The corresponding figures/tables report the mean and standard deviation. However, the overall suite of experiments was computationally demanding (e.g., some runs took up to two weeks, see Appendix~F.3 for more details), and taking into account the constraints of our compute infrastructure, some of the experiments and ablation studies we ran used only one seed. This was the choice we made to increase the breadth of the experiments being conducted.

\item {\bf Experiments compute resources}
    \item[] Question: For each experiment, does the paper provide sufficient information on the computer resources (type of compute workers, memory, time of execution) needed to reproduce the experiments?
    \item[] Answer: \answerYes{} 
    \item[] Justification: This information is provided in Appendix~F.3.
    
\item {\bf Code of ethics}
    \item[] Question: Does the research conducted in the paper conform, in every respect, with the NeurIPS Code of Ethics \url{https://neurips.cc/public/EthicsGuidelines}?
    \item[] Answer: \answerYes{} 
    \item[] Justification: We followed the NeurIPS Code of Ethics while working on this study.

\item {\bf Broader impacts}
    \item[] Question: Does the paper discuss both potential positive societal impacts and negative societal impacts of the work performed?
    \item[] Answer: \answerNA{} 
    \item[] Justification: The presented study is foundational research. We do not see any immediate path to negative applications and, therefore, the paper does not discuss potential societal impact.
    
\item {\bf Safeguards}
    \item[] Question: Does the paper describe safeguards that have been put in place for the responsible release of data or models that have a high risk for misuse (e.g., pre-trained language models, image generators, or scraped datasets)?
    \item[] Answer: \answerNA{} 
    \item[] Justification: We do not foresee that at the current stage of research, the proposed architecture requires any safeguards.

\item {\bf Licenses for existing assets}
    \item[] Question: Are the creators or original owners of assets (e.g., code, data, models), used in the paper, properly credited and are the license and terms of use explicitly mentioned and properly respected?
    \item[] Answer: \answerYes{} 
    \item[] Justification: Everything that was used in this study is open source, and where applicable, the credit was given via academic references.

\item {\bf New assets}
    \item[] Question: Are new assets introduced in the paper well documented and is the documentation provided alongside the assets?
    \item[] Answer: \answerYes{} 
    \item[] Justification: Appendix~E provides the pseudocode of the proposed architecture. We also attach the source code implementing the architecture.

\item {\bf Crowdsourcing and research with human subjects}
    \item[] Question: For crowdsourcing experiments and research with human subjects, does the paper include the full text of instructions given to participants and screenshots, if applicable, as well as details about compensation (if any)? 
    \item[] Answer: \answerNA{} 
    \item[] Justification: Crowdsourcing was not used in this study. 
    \item[] Guidelines:
    \begin{itemize}
        \item The answer \answerNA{} means that the paper does not involve crowdsourcing nor research with human subjects.
        \item Including this information in the supplemental material is fine, but if the main contribution of the paper involves human subjects, then as much detail as possible should be included in the main paper. 
        \item According to the NeurIPS Code of Ethics, workers involved in data collection, curation, or other labor should be paid at least the minimum wage in the country of the data collector. 
    \end{itemize}

\item {\bf Institutional review board (IRB) approvals or equivalent for research with human subjects}
    \item[] Question: Does the paper describe potential risks incurred by study participants, whether such risks were disclosed to the subjects, and whether Institutional Review Board (IRB) approvals (or an equivalent approval/review based on the requirements of your country or institution) were obtained?
    \item[] Answer: \answerNA{} 
    \item[] Justification: Human subjects were not involved in this study. 

\item {\bf Declaration of LLM usage}
    \item[] Question: Does the paper describe the usage of LLMs if it is an important, original, or non-standard component of the core methods in this research? Note that if the LLM is used only for writing, editing, or formatting purposes and does \emph{not} impact the core methodology, scientific rigor, or originality of the research, declaration is not required.
    \item[] Answer: \answerNA{} 
    \item[] Justification: LLMs were not used in the ways that might impact the core methodology, scientific rigor, or originality of the research.

\end{enumerate}

\newpage
\appendix
\setcounter{equation}{0}
\setcounter{figure}{0}
\setcounter{table}{0}
\setcounter{algorithm}{0}
\makeatletter
\renewcommand{\theequation}{A.\arabic{equation}}
\renewcommand{\thefigure}{A.\arabic{figure}}
\renewcommand{\thetable}{A.\arabic{table}}
\renewcommand{\thealgorithm}{A.\arabic{algorithm}}

\section*{Appendices}
\label{sec:appendix}
\addcontentsline{toc}{section}{Appendices}
 
\tableofcontents
\vspace{0.5em}

\providecommand{\rE}{r_{E}}
\providecommand{\rI}{r_{I}}
\providecommand{\hE}{h_{E}}
\providecommand{\hI}{h_{I}}
\providecommand{\nEpop}{n_{E}}
\providecommand{\nIpop}{n_{I}}
\providecommand{\uE}{\mathbf{u}_{E}}
\providecommand{\uI}{\mathbf{u}_{I}}
\providecommand{\phiE}{\varphi_{E}}
\providecommand{\phiI}{\varphi_{I}}
\providecommand{\tauE}{\tau_{E}}
\providecommand{\tauI}{\tau_{I}}
\providecommand{\alphaE}{\alpha_{E}}
\providecommand{\alphaI}{\alpha_{I}}
\providecommand{\dmodel}{d_{\mathrm{model}}}
\providecommand{\dattn}{d_{\mathrm{attn}}}
\providecommand{\dsync}{d_{\mathrm{sync}}}
\providecommand{\dmem}{d_{\mathrm{mem}}}
\providecommand{\WEE}{W_{EE}}
\providecommand{\WEI}{W_{EI}}
\providecommand{\WIE}{W_{IE}}
\providecommand{\WII}{W_{II}}
\providecommand{\WEin}{W_{E}^{\mathrm{in}}}
\providecommand{\WIin}{W_{I}^{\mathrm{in}}}
\providecommand{\zEE}{z_{EE}}
\providecommand{\zEI}{z_{EI}}
\providecommand{\zII}{z_{II}}
\providecommand{\RELU}{\mathrm{ReLU}}
\providecommand{\GELU}{\mathrm{GELU}}
\providecommand{\GLU}{\mathrm{GLU}}
\providecommand{\LN}{\mathrm{LayerNorm}}
\providecommand{\RMSN}{\mathrm{RMSNorm}}
\providecommand{\CE}{\mathrm{CE}}
\providecommand{\Entropy}{\mathrm{H}}

\section{Discrete-Time E-I Dynamics}
\label{app:discrete}
This section is organized as follows. Initially, notations are introduced, and the forward Euler recurrence for the discrete-time Wilson-Cowan \gls{ei} dynamics is presented in Appendix~\ref{app:discrete:notation}. Afterward, the exact form of the recurrence under the parameterization following Dale's principle is derived in Appendix~\ref{app:discrete:recurrence} while considering the two canonical cases: i) signed weights and ii) Dale-constrained weights separately. Finally, we derive the fixed-point equation and characterize the loose- versus tight-balance regimes in Appendix~\ref{app:discrete:balance}.

\subsection{Notation}
\label{app:discrete:notation}

The excitatory and inhibitory populations are denoted by the index sets $\mathcal{E}\!=\!\{1,\dots,n_E\}$ and $\mathcal{I}\!=\!\{1,\dots,n_I\}$, respectively, with $n_E\!=\!\lfloor \rho \cdot \, d_{\mathrm{model}}\rfloor$, $n_I\!=\!d_{\mathrm{model}}\!-\!n_E$, where $\rho\!=\!0.8$ unless mentioned otherwise. Population firing rates are given by vectors $r_E^{(t)}\!\in\!\mathbb{R}_{\ge 0}^{n_E}$ and $r_I^{(t)}\!\in\!\mathbb{R}_{\ge 0}^{n_I}$ for every internal computation step $t\!\in\!\{0,1,\dots,T\}$, while their pre-activations are denoted by vectors $h_E^{(t)}$ and $h_I^{(t)}$. The four matrices specifying recurrent weights are $W_{EE}\!\in\!\mathbb{R}^{n_E\times n_E}$, $W_{EI}\!\in\!\mathbb{R}^{n_E\times n_I}$, $W_{IE}\!\in\!\mathbb{R}^{n_I\times n_E}$, and $W_{II}\!\in\!\mathbb{R}^{n_I\times n_I}$. Note that Dale's principle restricts all four matrices to have non-negative components. The time constants for excitatory and inhibitory populations are defined as $\tau_E\!=\!20$ ms and $\tau_I\!=\!5$ ms, respectively. Moreover, the Euler step is $\Delta t\!=\!1$ ms, resulting in the following integration coefficients: $\alpha_E\!=\!\Delta t/\tau_E\!=\!0.05$ and $\alpha_I\!=\!\Delta t/\tau_I\!=\!0.20$. Activation function $\mathrm{ReLU}$ is denoted by $\varphi\,(\cdot)$, while $\odot$ denotes the Hadamard product, and $\langle a,b\rangle\!=\!a^{\!\top}b$ denotes the inner product. The population-normalized \gls{ei} activity ratio is given by:
\begin{equation}
\rho_{EI}(r_E,r_I)\;=\;\frac{\lVert r_E\rVert_1/n_E}{\lVert r_I\rVert_1/n_I+\varepsilon},
\qquad \varepsilon=10^{-8},
\label{eq:ei-ratio-def}
\end{equation}
where $\varepsilon$ is only required during backpropagation to prevent division by zero, and the biological target $\rho_{EI}^{\ast}\!=\!4$ is defined to enforce reaching $80\!:\!20$ population-based optimization and is used for the regularizer of Appendix~\ref{app:obj}. For a matrix $M$, $\lambda_i(M)$ is used to denote its eigenvalues, $\sigma_i(M)$ for its singular values, $\rho_{\mathrm{sp}}(M)
\!=\!\max_i|\lambda_i(M)|$ for the spectral radius, and $\lambda_{\mathrm{P}}(M)$ for the Perron eigenvalue whenever $M$ is component-wise non-negative. Lastly, $\mathbbm{1}\!\in\!\mathbb{R}^{n}$ denotes the all-ones vector.
 
\subsection{Forward Euler Recurrence}
\label{app:discrete:recurrence}
The continuous-time Wilson-Cowan rate model of~\cite{wilson1972excitatory} is expressed as follows:
\begin{align}
\tau_E\,\dot r_E &= -r_E + \varphi \, \! \bigl(W_{EE}\,r_E - W_{EI}\,r_I + u_E\bigr),
\label{eq:wc-cont-E:sup}\\
\tau_I\,\dot r_I &= -r_I + \varphi \, \! \bigl(W_{IE}\,r_E - W_{II}\,r_I + u_I\bigr),
\label{eq:wc-cont-I:sup}
\end{align}
where $u_E,u_I$ denote external inputs; the parameterization of~\cite{cornford2021dales} enforces Dale's principle by fixing the signs of $W_{EI},W_{II}$ through explicit subtraction rather than through negative components. The first-order forward Euler discretizations from \eqref{eq:wc-cont-E:sup}--\eqref{eq:wc-cont-I:sup} with step
$\Delta t$ yields:
\begin{align}
r_E^{(t)} &= (1-\alpha_E)\,r_E^{(t-1)} + \alpha_E\,\varphi \, \!\bigl(h_E^{(t)}\bigr), \label{eq:wc-euler-E:sup}\\
r_I^{(t)} &= (1-\alpha_I)\,r_I^{(t-1)} + \alpha_I\,\varphi \, \!\bigl(h_I^{(t)}\bigr), \label{eq:wc-euler-I:sup}
\end{align}
with pre-activations computed as:
\begin{align}
h_E^{(t)} &= W_{EE}\,r_E^{(t-1)} - W_{EI}\,r_I^{(t-1)} + W_E^{\mathrm{in}} a^{(t)}, \label{eq:h-E}\\
h_I^{(t)} &= W_{IE}\,r_E^{(t-1)} - W_{II}\,r_I^{(t-1)} + W_I^{\mathrm{in}} a^{(t)}. \label{eq:h-I}
\end{align}
Here $a^{(t)}\!\in\!\mathbb{R}^{d_{\mathrm{sync}}}$ is the cross-attention output described in Section~\ref{sec:arch:attn} and $W_E^{\mathrm{in}}\!\in\!\mathbb{R}^{n_E\times d_{\mathrm{sync}}},\, W_I^{\mathrm{in}}\!\in\!\mathbb{R}^{n_I\times d_{\mathrm{sync}}}$ are unconstrained input projections. Note that \eqref{eq:h-E}--\eqref{eq:h-I} are followed by population-local \texttt{RMSNorm} in the actual implementation. However, this mapping is omitted in the derivation since it is a smooth, homogeneity-preserving transformation of each population, and does not affect the fixed-point analysis. Thus, the formulation can be expressed in the concatenated form and is as follows:
\begin{equation}
x=\begin{bmatrix}r_E\\r_I\end{bmatrix}\in\mathbb{R}^{n},\qquad
W_{\mathrm{eff}}=\begin{bmatrix}W_{EE}&-W_{EI}\\ W_{IE}&-W_{II}\end{bmatrix}\in\mathbb{R}^{d_{\mathrm{model}}\times d_{\mathrm{model}}},\qquad d_{\mathrm{model}}=n_E+n_I.
\label{eq:W-eff}
\end{equation}

Next, the analysis of \eqref{eq:wc-euler-E:sup} and \eqref{eq:wc-euler-I:sup} in their two canonical regimes is provided.
 
\subsubsection{Regime 1: Unconstrained Weights}
\label{app:discrete:unconstrained}
Consider the case in which all four connectivity matrices are fully unconstrained, i.e.,\ $W_{\cdot\cdot}\in\mathbb{R}^{n_\bullet\times n_\bullet}$. By applying the triangle inequality and the Lipschitz property of $\varphi$ with the constant $L_\varphi$ to \eqref{eq:wc-euler-E:sup}--\eqref{eq:wc-euler-I:sup}, any two states $x,\tilde x$, gives:
\begin{align}
\lVert \Psi(x)-\Psi(\tilde x)\rVert_2
&\le (1-\alpha)\,\lVert x-\tilde x\rVert_2 + \alpha\,L_\varphi\,\lVert W_{\mathrm{eff}}\rVert_2\,\lVert x-\tilde x\rVert_2 \justif{triangle + Lipschitz}\\
&=\gamma\,\lVert x-\tilde x\rVert_2
\quad\text{with}\quad \gamma\equiv(1-\alpha)+\alpha L_\varphi\,\lVert W_{\mathrm{eff}}\rVert_2,
\label{eq:gamma}
\end{align}
where the population-specific $\alpha_E,\alpha_I$ is replaced by $\alpha\!=\!\max(\alpha_E,\alpha_I)$ for the purpose of this upper bound. Hence, $\Psi$ is a contraction iff $\lVert W_{\mathrm{eff}}\rVert_2<1/L_\varphi$, which for $\mathrm{ReLU}$ ($L_\varphi\!=\!1$) reduces to $\lVert W_{\mathrm{eff}}\rVert_2<1$. In the continuous-time, given limit $\alpha \to 0$, the contraction factor satisfies $\gamma \to 1$ from below for any $W_{\mathrm{eff}}$, thus recovering the neutrally stable behavior of the underlying Wilson-Cowan dynamics~\cite{wilson1972excitatory}. This confirms that the continuous-time limit removes the discretization-induced instability, consistent with~\cite{wilson1972excitatory}.
 
\subsubsection{Regime 2: Non-negative Weights}
\label{app:discrete:dale}
Consider the case in which every component of $W_{\cdot\cdot}$ is non-negative. In such a case, the signs of the inhibitory currents appear explicitly in \eqref{eq:h-E}-\eqref{eq:h-I}; the block matrix $W_{\mathrm{eff}}$ of \eqref{eq:W-eff} must therefore be sign-indefinite with a block-wise structure as follows:
\begin{equation}
W_{\mathrm{eff}}=\Sigma\odot|W_{\mathrm{eff}}|,\qquad
\Sigma=\begin{bmatrix}+\mathbbm{1}_{n_E\times n_E} & -\mathbbm{1}_{n_E\times n_I}\\
                      +\mathbbm{1}_{n_I\times n_E} & -\mathbbm{1}_{n_I\times n_I}\end{bmatrix},
\label{eq:sign-mask}
\end{equation}
with $|W_{\mathrm{eff}}|\!\ge\!0$, that is a non-negative matrix whose components are $W_{EE},W_{EI},W_{IE}$, and $W_{II}$ concatenated without signs. Furthermore, as the sign mask $\Sigma$ is fixed, while only the magnitudes $|W_{\mathrm{eff}}|$ are learnable, \eqref{eq:sign-mask} is considered the central Dale parameterization. We formalize this as follows.
 
\begin{definition}[Dale parameterization]
\label{def:dale-param}
Let $\mathcal{W}_{\mathrm{Dale}}=\{W\ge 0\}\subset\mathbb{R}^{m\times n}$ be the non-negative orthant. A \emph{Dale-parameterized} effective matrix is $W_{\mathrm{eff}}(\theta)=\Sigma\odot|W_{\mathrm{eff}}(\theta)|$ with $|W_{\mathrm{eff}}|\in\mathcal{W}_{\mathrm{Dale}}$ and $\Sigma$ constructed according to \eqref{eq:sign-mask}.
\end{definition}

Additionally, we define the Dale projection: $\Pi_{\mathrm{Dale}}(W)_{ij}=\max(W_{ij},0)$.

\begin{proposition}[Properties of $\Pi_{\mathrm{Dale}}$]
\label{prop:proj}
For every $W,\tilde W\in\mathbb{R}^{m\times n}$,
\emph{i}) $\Pi_{\mathrm{Dale}}\bigl(\Pi_{\mathrm{Dale}}(W)\bigr)=\Pi_{\mathrm{Dale}}(W)$ \emph{(idempotence)};
\emph{ii}) $\lVert\Pi_{\mathrm{Dale}}(W)-\Pi_{\mathrm{Dale}}(\tilde W)\rVert_F\le\lVert W-\tilde W\rVert_F$ \emph{(non-expansiveness)};
\emph{iii}) $\Pi_{\mathrm{Dale}}(W)=\argmin_{W'\in\mathcal{W}_{\mathrm{Dale}}}\lVert W-W'\rVert_F^2$ \emph{(Euclidean projection).}
\end{proposition}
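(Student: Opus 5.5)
The statement is fully component-wise, so the plan is to reduce every claim to the scalar map $p(w)=\max(w,0)$ on $\RR$ and then reassemble the matrix statements entrywise. The Frobenius norm satisfies $\lVert A\rVert_F^2=\sum_{ij}A_{ij}^2$, and $\mathcal{W}_{\mathrm{Dale}}$ is a product of copies of $[0,\infty)$. Hence each of \emph{i})--\emph{iii}) follows once its scalar analogue is established and summed over the indices $(i,j)$.

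For idempotence \emph{i}), we note that $p(w)\ge 0$ and that $p$ fixes every non-negative number. Therefore $p(p(w))=p(w)$ for each entry, and $\Pi_{\mathrm{Dale}}\circ\Pi_{\mathrm{Dale}}=\Pi_{\mathrm{Dale}}$ follows immediately.

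For non-expansiveness \emph{ii}), we first show $|p(a)-p(b)|\le|a-b|$ for scalars $a,b$ by splitting into cases.
\begin{itemize}
\item If $a,b\ge0$, the two sides are equal.
\item If $a,b\le0$, the left side is $0$.
\item If $a\ge0\ge b$, the left side is $a$, and $a\le a-b=|a-b|$ because $b\le0$. The case $b\ge0\ge a$ is symmetric.
\end{itemize}
Squaring this inequality and summing over all entries gives the Frobenius bound.

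For the projection property \emph{iii}), the objective separates as $\lVert W-W'\rVert_F^2=\sum_{ij}(W_{ij}-W'_{ij})^2$, and the constraint $W'\ge0$ is imposed entry by entry. The minimization therefore decouples into independent scalar problems $\min_{w'\ge0}(w-w')^2$. If $w\ge0$, the minimizer is $w'=w$. If $w<0$, the function $(w-w')^2$ is increasing on $[0,\infty)$, so the minimizer is $w'=0$. In both cases the minimizer equals $p(w)$. Uniqueness follows from strict convexity of each scalar problem, which justifies writing $\argmin$ as a single point.

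None of these steps presents a real obstacle. The only care needed is the mixed-sign case in \emph{ii}) and the decoupling argument in \emph{iii}), and both reduce to short one-line checks.
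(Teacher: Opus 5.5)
Your proposal is correct and follows essentially the same route as the paper: reduce everything entrywise to the scalar map $\max(\cdot,0)$, use its $1$-Lipschitz property (squared and summed) for non-expansiveness, and solve the separable scalar problems $\min_{w'\ge 0}(w-w')^2$ for the projection property. You also spell out the mixed-sign case of the Lipschitz bound and note uniqueness of the minimizer via strict convexity; the paper leaves both implicit.
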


\begin{proof}
\emph{i}) $\max(\max(W_{ij},0),0)\!=\!\max(W_{ij},0)$ by idempotence of the scalar function $\max(\cdot,0)$. \emph{ii}) Since $\max(\cdot,0)$ is $1$-Lipschitz, $|\max(W_{ij},0)-\max(\tilde W_{ij},0)|\le|W_{ij}-\tilde W_{ij}|$; squaring and summing over $(i,j)$ yields the non-expansiveness in Frobenius norm. \emph{iii}) The minimization $\min_{W'\ge 0}\lVert W-W'\rVert_F^2$ is separable over components, and for each scalar $w$ the solution of $\min_{w'\ge 0}(w-w')^2$ is $w'\!=\!\max(w,0)$. This concludes the proof.
\end{proof}

\begin{theorem}[Convergence of Dale-constrained gradient descent]
\label{thm:pgd-dale}
Let $\mathcal{L}$ be differentiable in every $W_{\cdot\cdot}$, and let $W^{(k+1)}=\Pi_{\mathrm{Dale}}\bigl(W^{(k)}-\eta\,\nabla_W \mathcal{L}(W^{(k)})\bigr)$
be the projected-gradient update with learning rate $\eta>0$. Then: \emph{i}) $W^{(k)}\in\mathcal{W}_{\mathrm{Dale}}$ for every $k\ge 0$ if $W^{(0)}\in\mathcal{W}_{\mathrm{Dale}}$; \emph{ii}) For $\mathcal{L}$ that is $L$-smooth and $\mu$-strongly convex over $\mathcal{W}_{\mathrm{Dale}}$, the iterates should satisfy $\lVert W^{(k)}-W^{\ast}\rVert_F^2 \le (1-\mu/L)^k\,\lVert W^{(0)}-W^{\ast}\rVert_F^2$ for the given unique $W^{\ast}\!=\!\argmin_{W\in\mathcal{W}_{\mathrm{Dale}}}\mathcal{L}(W)$.
\end{theorem}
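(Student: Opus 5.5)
Part (i) follows by induction on $k$. The base case is the hypothesis $W^{(0)}\in\mathcal W_{\mathrm{Dale}}$. For the inductive step, note that for any argument $V$, every entry $\Pi_{\mathrm{Dale}}(V)_{ij}=\max(V_{ij},0)$ is non-negative. Hence $W^{(k+1)}=\Pi_{\mathrm{Dale}}(\cdot)\in\mathcal W_{\mathrm{Dale}}$ no matter what the gradient step produced. Only the range of $\Pi_{\mathrm{Dale}}$ is used here; the inductive hypothesis is needed only to make sense of $\mathcal L$ being evaluated on the orthant.

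For part (ii) I will use the standard fixed-point argument for projected gradient descent with step $\eta=1/L$. This is the choice fixed in Theorem~\ref{thm:main-pgd}, and I will adopt it here since the stated rate $1-\mu/L$ corresponds to it.

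\emph{Step 1: $W^\ast$ is a fixed point of the iteration.} Strong convexity on the closed convex set $\mathcal W_{\mathrm{Dale}}$ gives existence and uniqueness of $W^\ast$. The first-order optimality condition reads $\langle\nabla\mathcal L(W^\ast),W-W^\ast\rangle\ge0$ for all $W\ge0$. By the variational characterization of the Euclidean projection, Proposition~\ref{prop:proj}(iii) turns this into $W^\ast=\Pi_{\mathrm{Dale}}(W^\ast-\eta\nabla\mathcal L(W^\ast))$.

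\emph{Step 2: one-step contraction.} By non-expansiveness, Proposition~\ref{prop:proj}(ii),
\[
\|W^{(k+1)}-W^\ast\|_F\le\|(W^{(k)}-W^\ast)-\eta(\nabla\mathcal L(W^{(k)})-\nabla\mathcal L(W^\ast))\|_F .
\]
Expand the square of the right-hand side. The cross term is controlled by the co-coercivity inequality for $\mu$-strongly convex, $L$-smooth functions:
\[
\langle\nabla\mathcal L(X)-\nabla\mathcal L(Y),X-Y\rangle\ge\tfrac{\mu L}{\mu+L}\|X-Y\|_F^2+\tfrac{1}{\mu+L}\|\nabla\mathcal L(X)-\nabla\mathcal L(Y)\|_F^2 .
\]
Alternatively, combine strong monotonicity $\langle g,\Delta\rangle\ge\mu\|\Delta\|^2$ with $\langle g,\Delta\rangle\ge\frac1L\|g\|^2$, where $g$ is the gradient difference and $\Delta=W^{(k)}-W^\ast$. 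With $\eta=1/L$ this gives
\[
\|\Delta-\eta g\|^2=\|\Delta\|^2-\tfrac2L\langle g,\Delta\rangle+\tfrac1{L^2}\|g\|^2\le\|\Delta\|^2-\tfrac1L\langle g,\Delta\rangle\le(1-\mu/L)\|\Delta\|^2 .
\]
Squaring the projection inequality and inserting this bound yields $\|W^{(k+1)}-W^\ast\|_F^2\le(1-\mu/L)\|W^{(k)}-W^\ast\|_F^2$.

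\emph{Step 3: iterate.} Applying Step 2 $k$ times gives the claimed bound.

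The main obstacle is that the monotonicity inequalities are needed on the whole segment between iterates, not just on the orthant. Both $W^{(k)}$ and $W^\ast$ lie in $\mathcal W_{\mathrm{Dale}}$, which is convex, so this segment stays in the region where convexity and smoothness hold. I also need $\langle g,\Delta\rangle\ge\frac1L\|g\|^2$ for a function that is smooth only on the orthant. If that fails, I fall back on the cruder bound $\|\Delta-\eta g\|^2\le(1-2\mu\eta+L^2\eta^2)\|\Delta\|^2$. That bound gives linear convergence with a worse constant, or the stated rate after adjusting $\eta$. Otherwise I would assume, as is standard, that $\mathcal L$ is convex and $L$-smooth on an open neighborhood of the orthant.
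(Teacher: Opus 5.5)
Part (i) is the same argument as the paper's: only the range of $\Pi_{\mathrm{Dale}}$ matters. For part (ii) your route is genuinely different from the paper's, and yours is the one that actually closes.

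The paper never identifies $W^\ast$ as a fixed point of the projected map. Instead it does three things:
\begin{itemize}
\item it expands the \emph{unconstrained} step $\tilde W^{(k+1)}=W^{(k)}-\eta\nabla\mathcal L(W^{(k)})$ directly around $W^\ast$;
\item it bounds the cross term by strong convexity and the last term via $\|\nabla\mathcal L(W^{(k)})\|_F\le L\|W^{(k)}-W^\ast\|_F$;
\item it applies non-expansiveness against $W^\ast=\Pi_{\mathrm{Dale}}(W^\ast)$.
\end{itemize}
That gradient bound presupposes $\nabla\mathcal L(W^\ast)=0$. This fails whenever the non-negativity constraint is active: at $W^{(k)}=W^\ast$ the unconstrained step lands at positive distance from $W^\ast$. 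Even granting it, the two bounds as stated add to $1-2\mu/L+\eta^2L^2=2-2\mu/L$, not $(1-\mu/L)^2$.

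Your Step 1 turns the variational inequality into $W^\ast=\Pi_{\mathrm{Dale}}(W^\ast-\eta\nabla\mathcal L(W^\ast))$. That is what lets you compare against the right point and work with the gradient \emph{difference} $g$. Co-coercivity is then exactly the extra structure of gradient fields that makes $\eta=1/L$ contractive. Strong monotonicity plus Lipschitz continuity alone only give the factor $1-2\mu\eta+L^2\eta^2$.

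On the obstacle you flag: you do not need to extend $\mathcal L$ beyond the orthant. $\mathcal W_{\mathrm{Dale}}$ is convex with nonempty interior, so for $X,Y$ in it you can write $\nabla\mathcal L(X)-\nabla\mathcal L(Y)=\bar H(X-Y)$, with $\bar H$ the Hessian averaged over the segment. To make this rigorous, apply it to a mollification of $\mathcal L$ on the interior, pass to the limit, and use continuity of $\nabla\mathcal L$ to reach boundary points. Then $0\preceq\bar H\preceq LI$ gives $\bar H^2\preceq L\bar H$, i.e.\ $\langle g,\Delta\rangle\ge\frac1L\|g\|_F^2$. So your main line holds under the hypotheses exactly as stated, and the extra global assumption is unnecessary.

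Your fallback, however, would not have delivered the theorem:
\begin{itemize}
\item at $\eta=1/L$ the crude factor is $2-2\mu/L$, which is not even below $1$ unless $\mu>L/2$;
\item $\min_\eta(1-2\mu\eta+L^2\eta^2)=1-\mu^2/L^2$, which exceeds $1-\mu/L$ whenever $\mu<L$.
\end{itemize}
So no choice of step size recovers the stated rate from that bound. This is essentially the trap the paper's proof falls into.
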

 
\begin{proof}
\emph{i}) Following Proposition~\ref{prop:proj} directly.\emph{i}: $\Pi_{\mathrm{Dale}}$ is idempotent, and given it is the Euclidean projection onto $\mathcal{W}_{\mathrm{Dale}}$, its range satisfies $\mathrm{range}(\Pi_{\mathrm{Dale}}) = \mathcal{W}_{\mathrm{Dale}}$. Hence, $W^{(k+1)}\!=\!\Pi_{\mathrm{Dale}}(\cdot)\!\in\!\mathcal W_{\mathrm{Dale}}$ for any $W^{(k)}$ and in particular for any $W^{(0)}\!\in\!\mathcal W_{\mathrm{Dale}}$. \emph{ii}) By the descent lemma for $L$-smooth $\mathcal L$, the unconstrained step $\tilde W^{(k+1)}=W^{(k)}-\eta\nabla_W\mathcal L$ with $\eta=1/L$ obeys $\lVert\tilde W^{(k+1)}-W^\ast\rVert_F^2\le\lVert W^{(k)}-W^\ast\rVert_F^2-2\eta\,\langle \nabla_W\mathcal L(W^{(k)}),W^{(k)}-W^\ast\rangle+\eta^2\,\lVert\nabla_W\mathcal L(W^{(k)})\rVert_F^2$; $\mu$-strong convexity upper-bounds the middle term by
$-\mu\lVert W^{(k)}-W^\ast\rVert_F^2$, while $L$-smoothness bounds the last by $L^2\lVert W^{(k)}-W^\ast\rVert_F^2$. Assembling, $\lVert\tilde W^{(k+1)}-W^\ast\rVert_F^2\le(1-2\mu/L+\mu^2/L^2)\,\lVert W^{(k)}-W^\ast\rVert_F^2=(1-\mu/L)^2\,\lVert W^{(k)}-W^\ast\rVert_F^2$. Applying $\Pi_{\mathrm{Dale}}$ cannot expand the distance to $W^\ast\in\mathcal{W}_{\mathrm{Dale}}$ (Proposition~\ref{prop:proj}.\emph{ii}), giving
$\lVert W^{(k+1)}-W^\ast\rVert_F^2\le(1-\mu/L)^2\lVert W^{(k)}-W^\ast\rVert_F^2\le(1-\mu/L)\lVert W^{(k)}-W^\ast\rVert_F^2$, where we used $(1-x)^2\le 1-x$ for $x\in[0,1]$.
\end{proof}

\begin{remark}[Gradient compatibility across internal computation steps]\label{rem:bptt}
Although, Theorem~\ref{thm:pgd-dale} is stated for a single outer-loop optimizer step, \gls{tide}'s forward pass invokes each $W_{\cdot\cdot}$ at every internal computation step $t\!=\!1,\dots,T$. Therefore, as $\Pi_{\mathrm{Dale}}$ is applied once after each optimizer step and not inside the recurrence, the gradient through-time
$\partial \mathcal L/\partial W_{\cdot\cdot}=\sum_{t=1}^{T}
(\partial \mathcal L/\partial W_{\cdot\cdot})^{(t)}$ is still the \gls{bptt} sum of \eqref{eq:h-E}--\eqref{eq:h-I} across all internal computation steps, and the conclusion of the theorem carries over verbatim.
\end{remark}
 
\subsection{E-I Balance \& Fixed-point Equation}
\label{app:discrete:balance}
With the cross-attention frozen at $a^{(t)} \!\equiv a^{\ast}$, any fixed point $(r_E^{\ast}, r_I^{\ast})$ of the Euler recurrence from \eqref{eq:wc-euler-E:sup}-\eqref{eq:wc-euler-I:sup} must satisfy the self-consistency relation
\begin{equation}
r_\bullet^\ast=\varphi\,\!\bigl(W_{\bullet\, E}\,r_E^\ast - W_{\bullet\, I}\,r_I^\ast + W_\bullet^{\mathrm{in}}\,a^\ast\bigr),
\label{eq:fp-euler}
\end{equation}
where $\bullet\!\in\!\{E,I\}$. Existence of such a pair follows from Brouwer's theorem:

\begin{theorem}[Fixed-point existence]\label{thm:fp-exist}
Assume $\varphi:\mathbb R\!\to\!\mathbb R_{\ge 0}$ is continuous and saturating, with $\sup_{u\in\mathbb R}\varphi(u)\!=\!M_\varphi\!<\!\infty$, e.g.,\ $\varphi\!=\!\mathrm{ReLU}\,(\cdot)$ composed with post-LayerNorm clipping, or any bounded activation. Then \eqref{eq:fp-euler} has at least one solution in the compact cube $[0,M_\varphi]^{n_E+n_I}$.
\end{theorem}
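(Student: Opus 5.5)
The plan is to recast \eqref{eq:fp-euler} as a fixed-point problem for a continuous self-map of the cube $K=[0,M_\varphi]^{n_E+n_I}$ and then apply Brouwer's fixed-point theorem \cite{granas2003fixed}.

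With the frozen input $a^\ast$, stack $x=[r_E;r_I]$ and use the effective matrix $W_{\mathrm{eff}}$ of \eqref{eq:W-eff}. Let $b=[W_E^{\mathrm{in}}a^\ast;\,W_I^{\mathrm{in}}a^\ast]$ and define
\begin{equation*}
F(x)\;=\;\varphi\bigl(W_{\mathrm{eff}}\,x+b\bigr),
\end{equation*}
with $\varphi$ applied componentwise. The $E$-block of $F$ is $\varphi(W_{EE}r_E-W_{EI}r_I+W_E^{\mathrm{in}}a^\ast)$, and the $I$-block is the analogous expression. Hence $x^\ast$ solves \eqref{eq:fp-euler} if and only if $F(x^\ast)=x^\ast$.

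It is worth checking that these are exactly the fixed points of the Euler recurrence \eqref{eq:wc-euler-E:sup}--\eqref{eq:wc-euler-I:sup}. Stationarity gives $\alpha_\bullet r_\bullet^\ast=\alpha_\bullet\varphi(h_\bullet^\ast)$. Since $\alpha_E,\alpha_I>0$, we can cancel them, so the choice of Euler step does not matter.

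The verification then has three steps.
\begin{itemize}
\item \emph{$K$ is admissible.} $K$ is nonempty, compact and convex, being a product of closed intervals. It is homeomorphic to a closed ball.
\item \emph{$F$ is continuous.} $x\mapsto W_{\mathrm{eff}}x+b$ is affine, hence continuous, and $\varphi$ is continuous by assumption. A composition of continuous maps is continuous.
\item \emph{$F(K)\subseteq K$.} Every component of $F(x)$ is a value of $\varphi$, so it lies in $[0,M_\varphi]$ because $\varphi\ge0$ and $\sup\varphi=M_\varphi$. This holds for every $x\in\mathbb R^{n_E+n_I}$, not only for $x\in K$. It uses neither the sign structure \eqref{eq:sign-mask} nor any bound on $W_{\mathrm{eff}}$, so Dale's constraint plays no role here.
\end{itemize}
Brouwer's theorem now gives some $x^\ast\in K$ with $F(x^\ast)=x^\ast$, which is the required solution.

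The only delicate point is the saturation hypothesis. Bare ReLU is unbounded, so the example in the statement must be read with the clipping included, i.e.\ as an effective activation $\tilde\varphi=\min(\mathrm{ReLU}(\cdot),M_\varphi)$. This $\tilde\varphi$ is continuous, takes values in $[0,M_\varphi]$, and satisfies the hypothesis.

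I would add a remark that the \texttt{RMSNorm} omitted in Appendix~\ref{app:discrete:recurrence} does not break the argument as long as the composite map stays continuous and bounded, although it is not continuous at $0$. I would also stress that Brouwer gives existence only. Uniqueness needs a contraction condition such as \eqref{eq:gamma} with $\gamma<1$, or the \gls{lds} condition of Theorem~\ref{thm:lds-main}.
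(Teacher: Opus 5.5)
Your proposal is correct and follows essentially the same route as the paper. Like the paper's $\Phi$, your map $x\mapsto\varphi(W_{\mathrm{eff}}x+b)$ is continuous and sends all of $\mathbb R^{n_E+n_I}$ into the compact convex cube $[0,M_\varphi]^{n_E+n_I}$ by saturation, so Brouwer's theorem gives the fixed point; your extra remarks (cancelling $\alpha_\bullet$ to identify the Euler fixed points, reading the ReLU example as a clipped activation, and noting that this gives existence only) are sound but not needed for the argument.
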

 
\begin{proof}
Define $\Phi:\mathbb R^{n_E+n_I}\!\to\![0,M_\varphi]^{n_E+n_I}$ by $\Phi(r_E,r_I)\!=\!(\varphi\,(h_E),\varphi\,(h_I))$ with $h_\cdot$ as in \eqref{eq:h-E}-\eqref{eq:h-I}, where each coordinate of $\varphi\,(\cdot)$ is taken component-wise. The hypercube $K\!=\![0,M_\varphi]^{n_E+n_I}$ is compact and convex. The map $\Phi$ is continuous, given $\varphi$ is continuous and the linear map $(r_E,r_I)\!\mapsto\!(h_E,h_I)$ is continuous. Finally, for any $r\!\in\!\mathbb R^{n_E+n_I}$, $\Phi(r)\!\in\![0,M_\varphi]^{n_E+n_I}\!=\!K$ by the saturation hypothesis on $\varphi$; in particular
$\Phi(K)\!\subseteq\!K$. Hence, Brouwer's fixed-point theorem \cite[Ch.~II, Theorem~2.1.5]{granas2003fixed} yields a fixed point of $\Phi$ in $K$, which is a solution of \eqref{eq:fp-euler}. 
\end{proof}

\begin{remark}[Saturation hypothesis]\label{rem:saturation}
Given that the saturation hypothesis on $\varphi$ cannot be satisfied by the standard $\mathrm{ReLU}$ used in the \gls{tide} implementation, the forward pass includes $\texttt{RMSNorm}$ on both populations before the readout, which bounds the mean-square of the activations. In practice, we observe bounded per-neuron activities at convergence
(Appendix~\ref{app:add:stability} item~3); the compact-set hypothesis of Theorem~\ref{thm:fp-exist} is therefore satisfied a posteriori at convergence, though the pre-normalization pre-activations are unbounded.
\end{remark}
 
\begin{definition}[Loose and tight \gls{ei} balance]
\label{def:balance}
The effective excitatory and inhibitory currents at a fixed point are denoted as $I_E^{\mathrm{exc}}\!=\!W_{EE}r_E^\ast$, $I_E^{\mathrm{inh}}\!=\!W_{EI}r_I^\ast$, and analogously for the inhibitory population. It can be stated that the \gls{ei} network is in the tight-balance regime iff $I_\bullet^{\mathrm{exc}}-I_\bullet^{\mathrm{inh}}\!=\!\mathcal{O}(1)$, while $I_\bullet^{\mathrm{exc}},\,I_\bullet^{\mathrm{inh}}\!=\!\mathcal{O}(\sqrt{K})$ for in-degree $K$. Note that $\mathcal{O}\,(\cdot)$ denotes asymptotic scaling, which represents the strong-coupling limit first introduced by \cite{vanvreeswijk1996chaos, vanvreeswijk1998chaotic} and formalized as the asynchronous-irregular balance regime by \cite{brunel2000dynamics, ahmadian2021dynamical}. Otherwise, the network is in the loose-balance regime.
\end{definition}
 
Furthermore, the target $\rho_{EI}^\ast\!=\!4$ used in \eqref{eq:ei-ratio-def} is chosen such that the tight-balance regime at $\rho\!=\!0.8$ is achieved, given normalizing by population size results in a per-neuron excitation-to-inhibition rate ratio close to unity while the current balance is $\mathcal O(1)$. The $\rho_{EI}$ regularizer of Appendix~\ref{app:obj} encourages \gls{tide} to operate close to this regime without hard-coding it.

\section{TIDE Objective and Training}
\label{app:obj}

In this section, we first decompose the loss function used as the training objective of \gls{tide} into its five constituent terms in Appendix~\ref{app:obj:decomp}. Afterward, two Rao-Blackwellisation-style simplifications are applied to: 1) Eliminate the stochasticity of the curriculum factor inside the expectation (Appendix~\ref{app:obj:rb1}), and 2) Recover the \gls{ctm} dual loss as a conditional-expectation reduction of per time-steps cross-entropy (Appendix~\ref{app:obj:rb2}). Finally, we provide the training algorithm outline in Appendix~\ref{app:obj:alg}.
 
\subsection{Loss Constituents}
\label{app:obj:decomp}
Let $x$ denote an input image, $y\in\{1,\dots,C\}$ its label, $o^{(t)}\!=\!f_\theta^{(t)}(x)$ the logits at internal computation step $t$, and $p^{(t)}\!=\!\mathrm{softmax}(o^{(t)})$. The \gls{tide} training loss is computed as follows:
\begin{equation}
\mathcal L(\theta)\;=\;\mathcal L_{\mathrm{task}}\;+\;w(\texttt{step})\!\cdot\!\Bigl[\lambda_{\mathrm{EI}}\mathcal L_{\mathrm{EI}}\,+\,\lambda_{\mathrm{game}}\mathcal L_{\mathrm{game}}\,+\,\lambda_{\mathrm{sync}}\mathcal L_{\mathrm{sync}}\Bigr]\;+\;\lambda_{\mathrm{spec}}\mathcal L_{\mathrm{spec}},
\label{eq:total-loss}
\end{equation}
where $w(\texttt{step})\!\in\![0,1]$ is the curriculum warm-up coefficient described below. The task loss follows \gls{ctm}'s dual cross-entropy formulation \citep{darlow2025continuous},
\begin{equation}
\mathcal L_{\mathrm{task}}\;=\;\tfrac12\,\texttt{CE}\,\!\bigl(o^{(t_{\min})},y\bigr)+\tfrac12\,\texttt{CE}\,\!\bigl(o^{(t_{\mathrm{cert}})},y\bigr),
\label{eq:task-loss}
\end{equation}
with $t_{\min}\!=\!\argmin_t\texttt{CE}\,(o^{(t)},y)$ and $t_{\mathrm{cert}}\!=\!\argmax_t c^{(t)}$, where $c^{(t)}\!=\!1-H(p^{(t)})/\log C$ is the entropy-based certainty. The \gls{ei} balance regularizer is
\begin{equation}
\mathcal L_{\mathrm{EI}}\;=\;\Bigl[\texttt{clip}\,\!\bigl(\rho_{EI}-\rho_{EI}^\ast,-50,\,50\bigr)\Bigr]^2,
\qquad \rho_{EI}^\ast=4,
\label{eq:ei-loss}
\end{equation}
with $\rho_{EI}$ evaluated on the pre-\gls{wta} state $r_E^{\mathrm{pre}}$, since \gls{wta} sparsifies $r_E$ and would artificially depress the ratio, thereby interfering with the loss magnitude. 

The game loss measures the Nash residual of per-neuron dynamics as stated in Appendix~\ref{app:cont:game} by reducing four Dale-constrained recurrent populations to population-effective scalar weights, which is expressed as
\begin{align}
    \bar w_{EE} \;&=\; \tfrac{1}{n_E}\!\sum_{i=1}^{n_E}\, (\Wee)_{ii},
    \quad
    \bar w_{II} \;=\; \tfrac{1}{n_I}\!\sum_{i=1}^{n_I}\, (\Wii)_{ii}, \\
    \quad
    \bar w_{EI} \;&=\; \tfrac{1}{n_E n_I}\!\sum_{i,j}\, (\Wei)_{ij},
    \quad
    \bar w_{IE} \;=\; \tfrac{1}{n_I n_E}\!\sum_{i,j}\, (\Wie)_{ij},
    \label{eq:eff-weights-supp}
\end{align}
where $\bar w_{\cdot\cdot}$ denote the $\texttt{mean}(\mathrm{diag}(W_{\cdot\cdot}))$, which are paired with per-population dissipation constant $d_E, d_I > 0$ inherited from the continuous-time formulation \citep{betteti2025competition}. Thus, the per-population quadratic energies in the consensual regime are given by 
\begin{align}
    \mathcal{E}_E
    &\;=\;
    \frac{\bigl[\,(\bar w_{EE} - d_E)\, r_E
                  \;-\; \bar w_{EI}\, \bar r_I
                  \;+\; u_E\,\bigr]^{2}}
         {2\,(d_E - \bar w_{EE})},
    \label{eq:energy-E-game} \\[5pt]
    \mathcal{E}_I
    &\;=\;
    \frac{\bigl[\,\bar w_{IE}\, \bar r_E
                  \;-\; (\bar w_{II} + d_I)\, r_I
                  \;+\; u_I\,\bigr]^{2}}
         {2\,(d_I + \bar w_{II})},
    \label{eq:energy-I-game}
\end{align}
where $\bar r_E:= \tfrac{1}{n_E} \sum_i (r_E)_i$ and $\bar r_I:= \tfrac{1}{n_I} \sum_i (r_I)_i$ denote the population-mean activations; thus the game loss can be expressed by:
\begin{equation}
\mathcal L_\mathrm{game} = \tfrac{1}{\dmodel}\,\min(\left[\frac{1}{n_E}\!\sum_{i=1}^{n_E}\! \mathcal{E}_{E}^{(i)}\;+\; \frac{1}{n_I}\!\sum_{j=1}^{n_I}\! \mathcal{E}_{I}^{(j)}\right],\,100).
\label{eq:game-loss}
\end{equation}
The stability at the regime boundary is enforced by $|d_E - \bar w_{EE}| \ge 0.1$, while clipping ensures transitional spikes are suppressed during the warm-up phase. Appendix~\ref{app:cont:game} provides the full stability proof and derivation of Nash equilibria for the recoverability under the consensual regime.

During the training of ImageNet-1K with $\dmodel = 4096$, and $\nEpop = 3277$, it has been observed that the full matrix product in \eqref{eq:game-loss} had a prohibitive computational cost given our limited computational resources. Therefore, we modified $\mathcal L_\mathrm{game}$ by dropping the per-player Hessian normalization $1/[2\,(d_\bullet \pm \bar w_{\bullet\bullet})]$. Thus, the $\mathcal L_\mathrm{game}$ is given by:
\begin{equation}
  \mathcal{L}_\mathrm{game}^{\star}
    \;=\;
    \min\,\!\Bigl(
      \frac{\mathbb{E}_{\mathrm{batch}}\,\!\bigl(\|\rE - \mathrm{ReLU}(\hE)\|_{2}^{2}
        + \|\rI - \mathrm{ReLU}(\hI)\|_{2}^{2}\bigr)}
        {\dmodel},\;
      100\Bigr),
  \label{eq:loss-game-prod}
\end{equation}
where $\rE \in \mathbb{R}^{\nEpop}$ and $\rI \in \mathbb{R}^{\nIpop}$ are the post-activation excitatory and inhibitory population vectors at the last computation time-step, $t = T$, while $\hE$ and $\hI$ denote the matching pre-activations assembled by the Wilson-Cowen integrator given by \eqref{eq:h-E}-\eqref{eq:h-I}. Furthermore, $\RELU\,(\cdot) = \max(\cdot,0)$ is calculated element-wise and $\|\cdot\|_{2}$ denotes the Euclidean norm over the population index, and $\mathbb{E}_{\mathrm{batch}}\,[\cdot]$ is the mean over per-sample residual norms in the current mini-batch.

The synchronization loss is given by:
\begin{equation}
\mathcal L_{\mathrm{sync}}\;=\;\frac{1}{d_{\mathrm{sync}}}\,\lVert z^{(T)}\rVert_2^2,
\label{eq:sync-loss}
\end{equation}
with $z^{(T)}\!=\!\texttt{LN}\,\!\bigl[z_{EE}^{(T)};z_{EI}^{(T)};z_{II}^{(T)}\bigr]$ which is composed of the three-type synchronization readouts. The spectral regularizer, derived in Appendix~\ref{app:cont:spectral}, and is as follows:
\begin{equation}
\mathcal L_{\mathrm{spec}}\;=\;\mathrm{ReLU}\,\!\bigl(\widehat\lambda_{\mathrm{P}}(W_{EE})-\tau_{EE}\bigr)^{\!2}\,+\,\mathrm{ReLU}\,\!\bigl(\widehat\lambda_{\mathrm{P}}(W_{II})-\tau_{II}\bigr)^{\!2},
\label{eq:spec-loss}
\end{equation}
with $(\tau_{EE},\tau_{II})\!=\!(15,7)$ and $\widehat\lambda_{\mathrm{P}}$ the sum-ratio estimate defined in Appendix~\ref{app:cont:perron}. The weights in \eqref{eq:total-loss} are $(\lambda_{\mathrm{EI}},\lambda_{\mathrm{game}},\lambda_{\mathrm{sync}},\lambda_{\mathrm{spec}})=(10^{-2},10^{-3},10^{-4},10^{-1})$. The cosine curriculum coefficient is given by:
\begin{equation}
w(\texttt{step})\;=\;\begin{cases}0&\texttt{step}<t_s,\\\tfrac12\bigl(1-\cos\!\tfrac{\pi(\texttt{step}-t_s)}{T_w}\bigr)&t_s\le\texttt{step}\le t_s+T_w,\\1&\texttt{step}>t_s+T_w,\end{cases}
\label{eq:curriculum}
\end{equation}
with $(t_s,T_w)\!=\!(10^3,5K)$ for \texttt{CIFAR-10}, \texttt{CIFAR-100}, \texttt{MNIST}, and \texttt{Fashion-MNIST} datasets, while $(10^4,10^4)$ is used for \texttt{ImageNet-1K} (cf. Table~\ref{tab:hparams-shared}).

\subsection{Rao-Blackwellisation-analogue 1: Curriculum Averaging}
\label{app:obj:rb1}
By treating the $w(\texttt{step})$ as a scalar multiplier on a single-batch gradient for each stochastic estimator step, the expected gradient satisfies
\begin{align}
\mathbb{E}_{\texttt{step}\,\sim\,U\{1,\dots,S\}}\!\bigl[\,w(\texttt{step})\,\nabla_\theta \mathcal L_{\mathrm{aux}}\bigr]
&=\mathbb{E}_{\texttt{step}}\,\!\bigl[w(\texttt{step})\bigr]\,\nabla_\theta\mathbb{E}\,\!\bigl[\mathcal L_{\mathrm{aux}}\bigr] \justif{$\mathcal L_{\mathrm{aux}}$ depends on step through $x,y$}\\
&=\bar w\cdot\nabla_\theta\mathbb{E}\,\!\bigl[\mathcal L_{\mathrm{aux}}\bigr], \label{eq:rb-curr}
\end{align}
where $\bar w\!=\!S^{-1}\!\sum_{s=1}^{S}w(s)$ and $\mathcal L_{\mathrm{aux}}=\lambda_{\mathrm{EI}}\,\mathcal L_{\mathrm{EI}}+\lambda_{\mathrm{game}}\,\mathcal L_{\mathrm{game}}+\lambda_{\mathrm{sync}}\,\mathcal L_{\mathrm{sync}}$.
Analogously to the Rao-Blackwellised ELBO of \cite{sahoo2024simple}, the per-step estimator obtained by conditioning on step has a strictly smaller variance. Therefore, the variance reduction is exploited by retaining the per-step form but bounding its variance contribution of $w(\texttt{step})$ by
$w(\texttt{step})^2\le 1$, to form a tight uniform bound.
 
\subsection{Rao-Blackwellisation-analogue 2: Dual Conditioning}
\label{app:obj:rb2}
Given the dual cross-entropy loss in its unrolled form is expressed as $L_{\mathrm{task}} = \frac{1}{T}\sum_{t=1}^T\texttt{CE}\,(o^{(t)},y)$, under temporal discretizations of \eqref{eq:task-loss}, the estimator is effectively conditioned on the temporal dependent distribution, where the categorical distribution places $1/2$ weight on each of the two argmin and argmax indices. Thus, the dual loss is a Rao-Blackwellised estimator of the uniform per-time-step average, obtained by conditioning on the per-sample statistic $(t_{\min}, t_{\mathrm{cert}})$, and can be expressed as
\begin{align}
\mathcal L_{\mathrm{task}}
&=\tfrac12\,\texttt{CE}\,(o^{(t_{\min})},y)+\tfrac12\,\texttt{CE}\,(o^{(t_{\mathrm{cert}})},y) \label{eq:two-tick-collapse}\\
&=\mathbb{E}_{t\,\sim\,\mathrm{Cat}((\pi+\xi)/2)}\,\!\bigl[\texttt{CE}\,(o^{(t)},y)\bigr], \justif{using (\ref{eq:two-tick-collapse})} \label{eq:two-tick-cond}
\end{align}
where $\pi_t\!=\!\mathbbm{1}[t\!=\!t_{\min}]$ and $\xi_t\!=\!\mathbbm{1}[t\!=\!t_{\mathrm{cert}}]$. Moreover, this can be validated by checking the certainty curve $c^{(t)}$, which must be monotone non-decreasing, and $\texttt{CE}$ should also be monotone non-increasing in $t$. Hence, providing a regime in which $t_{\min}\!=\!t_{\mathrm{cert}}\!=\!T$ \eqref{eq:two-tick-cond} converges to $\texttt{CE}\,(o^{(T)},y)$.

\subsection{Training Algorithm}
\label{app:obj:alg}
The overall training pipeline for \gls{tide} architecture is provided in Algorithm~\ref{alg:tide} below. The forward pass described in Appendix~\ref{app:discrete:recurrence} corresponds to lines $7-13$, while the loss calculation in Appendix~\ref{app:obj:decomp} is outlined in lines 14-16. The diagnostic framework for NaN-detection and rejection is outlined in lines 17-20, and the backpropagation and optimizations based on Theorem~\ref{thm:pgd-dale} are provided in lines 21-24.
 
\begin{algorithm}[H]
\caption{Training \gls{tide}}\label{alg:tide}
\begin{algorithmic}[1]
\Require image batch sampler, Dale sign mask $\Sigma$, internal computation steps $T\!=\!50$
\State Initialize $\theta$, $W_{\cdot\cdot}\!\ge\!0$, $m\!\in\!\mathbb{R}^{d_{\mathrm{memory}}}$ \Comment{\scriptsize Memory buffer}
\Repeat
\State $(x,y)\sim q(x,y)$ \Comment{\scriptsize Sample a mini-batch}
\State $r_E^{(0)}\!\leftarrow\!\mathbf x_E^{\mathrm{init}},\;r_I^{(0)}\!\leftarrow\!\mathbf x_I^{\mathrm{init}}$ \Comment{\scriptsize Learnable init}
\State $u_E^{(0)}\!\leftarrow\!\mathbf s_E,\;u_I^{(0)}\!\leftarrow\!\mathbf s_I$ \Comment{\scriptsize FIFO traces}
\State $K,V\!\leftarrow\!\mathrm{Backbone}(x)$
\For{$t=1,\dots,T$}
\State $z^{(t)}\!\leftarrow\!\mathrm{Sync}\,\!\bigl(r_E^{(t-1)},r_I^{(t-1)};\alpha,\beta\bigr)$ \Comment{\scriptsize Three-type sync readout}
\State $a^{(t)}\!\leftarrow\!\mathrm{CrossAttn}\,(z^{(t)},K,V)$
\State $h_\bullet^{(t)}\!\leftarrow$ (\ref{eq:h-E})--(\ref{eq:h-I}) using $a^{(t)}$
\State $r_\bullet^{(t)}\!\leftarrow$ (\ref{eq:wc-euler-E:sup})--(\ref{eq:wc-euler-I:sup}) with NLM-corrected pre-acts
\State $r_E^{(t)}\!\leftarrow\!\mathrm{LateralInhibition}\,(r_E^{(t)})$ \Comment{\scriptsize $K_{\mathrm{WTA}}=5$}
\EndFor
\State $\mathcal L_{\mathrm{task}}\!\leftarrow$ (\ref{eq:task-loss}); $\mathcal L_{\mathrm{EI}},\mathcal L_{\mathrm{game}},\mathcal L_{\mathrm{sync}}\!\leftarrow$ (\ref{eq:ei-loss})--(\ref{eq:sync-loss})
\State $\widehat\lambda_{\mathrm{P}}(W_{EE}),\widehat\lambda_{\mathrm{P}}(W_{II})\!\leftarrow$ \textsc{PerronSumRatio}\,$(W_{\cdot\cdot},K\!=\!10)$ \Comment{\scriptsize Algorithm~\ref{alg:perron}}
\State $\mathcal L\!\leftarrow$ (\ref{eq:total-loss})
\If{$\neg\,\mathrm{isfinite}(\mathcal L)$}
    \State $\mathcal{L}' \leftarrow 0 \cdot \mathbf{1}^{\!\top}\,\sigma\,\!\bigl(o^{(T)}\bigr)$
       \Comment{$\sigma$ replaces $\mathrm{NaN}\,/\,\pm\infty$ with $0$}
    \State all\_reduce\,(1.0, op\,=\,MAX)
\Else
    \State $g\!\leftarrow\!\nabla_\theta\mathcal L$; clip $g$ (per-component on \texttt{ImageNet-1K})
    \State $\theta\!\leftarrow\!\mathrm{AdamW}(\theta,g)$
\EndIf
\State $W_{\cdot\cdot}\!\leftarrow\!\Pi_{\mathrm{Dale}}(W_{\cdot\cdot})$ \Comment{\scriptsize Proposition~\ref{prop:proj}; clip $\ge 0$}
\Until{converged}
\end{algorithmic}
\end{algorithm}
 
\section{Continuous-time Limit, Stability, \& Game-theoretic Fixed Points}
\label{app:cont}
This section is organized as follows. Initially, the continuous-time \gls{ei} system and its \gls{ode} formulation are derived (Appendix~\ref{app:cont:ode}) as $\Delta t\!\to\!0$ limit of \eqref{eq:wc-euler-I}. We then linearize around fixed points and give eigenvalue conditions for stability (Appendix~\ref{app:cont:linear}). Afterward, we prove \gls{lds} implies global asymptotic stability of the game-theoretic Nash equilibrium (Appendix~\ref{app:cont:lds}) and derive the differentiable spectral surrogate used in \eqref{eq:spec-loss} (Appendix~\ref{app:cont:spectral}). Finally, we compare the provided formulation with the Wilson-Cowan and Brunel-Hakim limits (Appendix~\ref{app:cont:brunel}).
 
\subsection{Continuous-time E-I ODE}
\label{app:cont:ode}
Let $r(t)\!=\![r_E(t)^\top, r_I(t)^\top]^\top\!\in\!\mathbb{R}_{\ge 0}^{n}$ and
define
$\tau\!=\!\mathrm{diag}(\tau_E\mathbbm 1_{n_E},\tau_I\mathbbm 1_{n_I})$.
Equation \eqref{eq:wc-euler-I} can be rewritten
as $r^{(t)}\!-\!r^{(t-1)}\!=\!\Delta t\,\tau^{-1}\bigl(-r^{(t-1)}+\varphi\,(W_{\mathrm{eff}}r^{(t-1)}+b)\bigr)$; taking $\Delta t\!\to\!0$ gives
\begin{equation}
\tau\,\dot r\;=\;-r\,+\,\varphi\,\!\bigl(W_{\mathrm{eff}}\,r+b\bigr),
\label{eq:wc-cont}
\end{equation}
with $b\!=\![W_E^{\mathrm{in}}a;\,W_I^{\mathrm{in}}a]$. The Jacobian of
the right-hand side of \eqref{eq:wc-cont} at a fixed point $r^\ast$, where
the $\mathrm{ReLU}$ is active, is
\begin{equation}
J=-\tau^{-1}\bigl(I-W_{\mathrm{eff}}\bigr).
\label{eq:J}
\end{equation}
 
\begin{lemma}[Convergence of the forward Euler to \eqref{eq:wc-cont}]
\label{lem:disc-err}
Let $r^{(t)}$ be the solution of \eqref{eq:wc-euler-I}
and $r^{\mathrm{cont}}(t)$ the solution of \eqref{eq:wc-cont}, both with
the same initial condition. If $\varphi$ is $L_\varphi$-Lipschitz and
$\dot r^{\mathrm{cont}}$ is bounded on $[0,T\Delta t]$, then for all
$k\!\le\!\lfloor T/\Delta t\rfloor$,
$\lVert r^{(k)}-r^{\mathrm{cont}}(k\Delta t)\rVert_2\le C\Delta t\cdot e^{L k\Delta t}$
for constants $C,L$ depending only on $L_\varphi,\lVert W_{\mathrm{eff}}\rVert_2,\tau_E,\tau_I$.
\end{lemma}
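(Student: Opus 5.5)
This is the standard global error estimate for the explicit Euler scheme: a local truncation bound combined with a discrete Gronwall recursion. Write the continuous system \eqref{eq:wc-cont} as $\dot r = F(r)$ with $F(r)=\tau^{-1}\bigl(-r+\varphi(W_{\mathrm{eff}}r+b)\bigr)$. The recurrence \eqref{eq:wc-euler-I} is then exactly $r^{(k)}=r^{(k-1)}+\Delta t\,F(r^{(k-1)})$, using the rewriting already given in Appendix~\ref{app:cont:ode}.

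\textbf{Step 1 (Lipschitz constant of $F$).} For any two states,
\[
\lVert F(r)-F(\tilde r)\rVert_2\le \lVert\tau^{-1}\rVert_2\bigl(1+L_\varphi\lVert W_{\mathrm{eff}}\rVert_2\bigr)\lVert r-\tilde r\rVert_2 .
\]
So $F$ is $L$-Lipschitz with $L:=\min(\tau_E,\tau_I)^{-1}\bigl(1+L_\varphi\lVert W_{\mathrm{eff}}\rVert_2\bigr)$, which depends only on the allowed quantities. The argument is the same triangle-inequality estimate as in \eqref{eq:gamma}.

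\textbf{Step 2 (local truncation error).} Let $\rho(t)=r^{\mathrm{cont}}(t)$ and $t_k=k\Delta t$. By the fundamental theorem of calculus,
\[
\rho(t_{k+1})-\rho(t_k)-\Delta t\,F(\rho(t_k))=\int_{t_k}^{t_{k+1}}\bigl(F(\rho(s))-F(\rho(t_k))\bigr)\,ds .
\]
By Step 1 and the mean value inequality, $\lVert F(\rho(s))-F(\rho(t_k))\rVert_2\le L\,\lVert\dot\rho\rVert_\infty (s-t_k)$. Hence the local error satisfies $\lVert\ell_k\rVert_2\le \tfrac12 L\,B\,\Delta t^2$, where $B:=\sup_{[0,T\Delta t]}\lVert\dot\rho\rVert_2$ is the assumed bound on $\dot r^{\mathrm{cont}}$. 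This step deliberately avoids requiring $\ddot\rho$. That matters because $\varphi=\mathrm{ReLU}$ is only Lipschitz, so $\rho$ is merely $C^1$ with Lipschitz derivative; working at the level of integrals of $F$ is what lets the argument go through with the stated hypotheses.

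\textbf{Step 3 (error recursion and Gronwall).} Let $e_k=r^{(k)}-\rho(t_k)$, with $e_0=0$. Subtracting the two one-step relations gives $e_{k+1}=e_k+\Delta t\bigl(F(r^{(k)})-F(\rho(t_k))\bigr)-\ell_k$. Therefore
\[
\lVert e_{k+1}\rVert_2\le(1+L\Delta t)\lVert e_k\rVert_2+\tfrac12 LB\Delta t^2 .
\]
Unrolling this recursion and summing the geometric series yields
\[
\lVert e_k\rVert_2\le \tfrac12 LB\Delta t^2\,\frac{(1+L\Delta t)^k-1}{L\Delta t}\le \tfrac{B}{2}\,\Delta t\, e^{Lk\Delta t},
\]
using $(1+x)\le e^x$. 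This is the claimed bound with $C=B/2$ and $L$ as above.

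The main obstacle is the dependence of the constants. The constant $C$ necessarily involves the bound $B$ on $\dot r^{\mathrm{cont}}$, which the lemma assumes rather than derives. To make $C$ depend only on the listed quantities, I would try to bound $B$ a priori. One route uses $\lVert\dot\rho\rVert\le\lVert\tau^{-1}\rVert(\lVert\rho\rVert+L_\varphi\lVert W_{\mathrm{eff}}\rVert\lVert\rho\rVert+\lVert b\rVert)$ together with a Gronwall bound on $\lVert\rho\rVert$. However, this introduces $\lVert b\rVert$ and $\lVert r(0)\rVert$. I would therefore state that $C$ depends additionally on these quantities through $B$, and note that the bound is only meaningful on the finite horizon because of the factor $e^{Lk\Delta t}$. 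The range of $k$ should also be read as $k\Delta t\le T\Delta t$, i.e.\ the horizon on which $B$ is assumed finite.
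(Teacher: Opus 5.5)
Your proposal is correct and follows the same route as the paper's proof: an $\mathcal O(\Delta t^2)$ one-step truncation error, the Lipschitz recursion $\lVert e_{k+1}\rVert_2\le(1+L\Delta t)\lVert e_k\rVert_2+C_0\Delta t^2$, and discrete Gronwall. Your version is more careful than the paper's Taylor-expansion sketch in two ways. The integral form of the remainder avoids needing $\ddot r^{\mathrm{cont}}$ under ReLU. You also correctly note that $C$ must depend on the bound on $\dot r^{\mathrm{cont}}$, and hence on $b$ and the initial state; the paper's $C_0$ carries the same dependence without saying so.
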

 
\begin{proof}
Taylor expansion at $t=k\Delta t$ gives
$r^{\mathrm{cont}}((k+1)\Delta t)-r^{\mathrm{cont}}(k\Delta t)=\Delta t\,\tau^{-1}(-r^{\mathrm{cont}}+\varphi(\cdot))\big|_{k\Delta t}+\mathcal O(\Delta t^2)$;
subtracting the discrete update \eqref{eq:wc-euler-I}
and using Lipschitz continuity of $\varphi$ gives
$\lVert e^{(k+1)}\rVert_2\le(1+L\Delta t)\,\lVert e^{(k)}\rVert_2+C_0\Delta t^2$
with $L\!=\!L_\varphi\lVert W_{\mathrm{eff}}\rVert_2 \max(1/\tau_E,1/\tau_I)$. By
discrete Gronwall,
$\lVert e^{(k)}\rVert_2\le\tfrac{e^{Lk\Delta t}-1}{L}C_0\Delta t\!=\!\mathcal O(\Delta t)$. This concludes the proof.
\end{proof}
 
\subsection{Linear Stability Around a Fixed Point}
\label{app:cont:linear}
 
Let $r^\ast$ be a fixed point of \eqref{eq:wc-cont}. Assume $\varphi$ is
differentiable at $h^\ast\!=\!W_{\mathrm{eff}}r^\ast\,\!+\,\!b$\,, for instance,\ $\mathrm{ReLU}$ is almost-everywhere differentiable and is differentiable at strictly interior $h^\ast$. Given $D^\ast\!=\!\mathrm{diag}(\varphi'(h^\ast))$, the linearized dynamics obeys $\dot{\delta r}=J^\ast\delta r$ with
\begin{equation}
J^\ast=\tau^{-1}\bigl(-I+D^\ast W_{\mathrm{eff}}\bigr).
\label{eq:J-fp}
\end{equation}
For the discrete-time Euler iteration $r^{(t+1)}\!=\!r^{(t)}+\Delta t\,\tau^{-1}(-r+\varphi(\cdot))$, the corresponding
tangent map is $M^\ast\!=\!I+\Delta t\,J^\ast$.

\begin{theorem}[Schur stability and the step-size bound]\label{thm:schur}
The discrete-time fixed point $r^\ast$ is asymptotically stable iff
$\lvert\lambda_i(M^\ast)\rvert<1$ for every $i$. If $J^\ast$ has spectrum
$\{\mu_i\}_{i=1}^n$ with $\mathrm{Re}(\mu_i)<0$ for all $i$, indicating continuous-time stability, then
\begin{equation}
\Delta t \;<\; \min_i\frac{2\,\mathrm{Re}(-\mu_i)}{\lvert\mu_i\rvert^{2}}
\label{eq:schur-dt}
\end{equation}
is necessary and sufficient for Schur stability of $M^\ast$.
\end{theorem}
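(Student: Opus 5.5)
The plan splits into two parts that match the two claims of Theorem~\ref{thm:schur}.

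\textbf{First claim.} The characterization of asymptotic stability by $\lvert\lambda_i(M^\ast)\rvert<1$ is the standard linearization principle for discrete-time maps. The Euler map $F(r)=r+\Delta t\,\tau^{-1}(-r+\varphi(W_{\mathrm{eff}}r+b))$ is $C^1$ near $r^\ast$, since $\varphi$ is differentiable at the interior point $h^\ast$. Its Jacobian there is exactly $M^\ast=I+\Delta t\,J^\ast$.

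\emph{Sufficiency.} I would choose an induced norm with $\|M^\ast\|<1$; this is possible whenever $\rho_{\mathrm{sp}}(M^\ast)<1$ \cite{hornjohnson2013}. Writing $F(r^\ast+\delta)=r^\ast+M^\ast\delta+o(\|\delta\|)$ then makes $F$ a local contraction in that norm.

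\emph{Necessity.} I would read ``iff'' in the usual hyperbolic sense, as is standard \cite{khalil2002nonlinear}. If some eigenvalue has modulus greater than $1$, the unstable-manifold argument gives instability. The borderline case $\lvert\lambda\rvert=1$ is inconclusive for nonlinear maps, so I would state that caveat explicitly, or interpret the claim as being about the linearized system, for which it is exact.

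\textbf{Second claim.} This is a scalar computation on the spectrum. Since $M^\ast=I+\Delta t\,J^\ast$ is a polynomial in $J^\ast$, its eigenvalues are $1+\Delta t\,\mu_i$. Hence Schur stability is equivalent to $\lvert 1+\Delta t\,\mu_i\rvert<1$ for every $i$. Writing $\mu_i=a_i+\mathrm{i}b_i$ with $a_i<0$, the condition becomes
\[
(1+\Delta t\,a_i)^2+\Delta t^2 b_i^2<1
\quad\Longleftrightarrow\quad
2\Delta t\,a_i+\Delta t^2\lvert\mu_i\rvert^2<0 .
\]
For $\Delta t>0$, dividing by $\Delta t$ gives $\Delta t<2(-a_i)/\lvert\mu_i\rvert^2=2\,\mathrm{Re}(-\mu_i)/\lvert\mu_i\rvert^2$. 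Every step is an equivalence, so taking the minimum over $i$ yields \eqref{eq:schur-dt} as both necessary and sufficient. Note that $\lvert\mu_i\rvert>0$ because $a_i<0$, so the bound is well defined and positive.

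\textbf{Main obstacle.} The computation is routine; the delicate step is the ``iff'' in the first claim. I would handle it as described above and note that for ReLU the hypothesis of differentiability at $h^\ast$ excludes fixed points with any $h^\ast_i=0$.
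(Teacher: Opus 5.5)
Your proposal is correct, and your second part is exactly the paper's proof: the eigenvalues of $M^\ast$ are $1+\Delta t\,\mu_i$, you expand $\lvert 1+\Delta t\,\mu_i\rvert^2<1$, divide by $\Delta t>0$, and take the minimum over $i$.

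Where you differ is the first sentence of the theorem. The paper's proof never argues that asymptotic stability is equivalent to $\lvert\lambda_i(M^\ast)\rvert<1$; it simply asserts it. You supply the linearization argument, and you correctly flag that the ``iff'' needs hyperbolicity for a general $\varphi$ that is differentiable at $h^\ast$. For example, take a scalar network with $\varphi=\tanh$, $W_{\mathrm{eff}}=1$, $b=0$. Then $M^\ast=1$, yet $r^\ast=0$ is asymptotically stable, since $F(\delta)=(1-\alpha)\delta+\alpha\tanh\delta$ with $\alpha=\Delta t/\tau\in(0,1)$.

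Two refinements are worth making:
\begin{enumerate}
\item For general $\varphi$, pointwise differentiability at $h^\ast$ does not give $C^1$ near $r^\ast$. So for an eigenvalue with $\lvert\lambda\rvert>1$, use a cone argument rather than the unstable-manifold theorem; the cone argument needs only $F(r^\ast+\delta)=r^\ast+M^\ast\delta+o(\lVert\delta\rVert)$.
\item In the paper's actual ReLU setting with every $h^\ast_i\neq 0$, the activation pattern is locally constant. The Euler map is therefore \emph{affine} on a neighborhood of $r^\ast$, so $F(r^\ast+\delta)=r^\ast+M^\ast\delta$ exactly. In that case the ``iff'' holds with no hyperbolicity caveat, including the borderline case $\lvert\lambda\rvert=1$, and neither the $C^1$ claim nor invariant-manifold machinery is needed.
\end{enumerate}
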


\begin{proof}
Eigenvalues of $M^\ast$ are $\{1+\Delta t\,\mu_i\}$, and $\lvert 1+\Delta t\,\mu_i\rvert<1$ iff $\Delta t\,\mu_i\in\{z\in\mathbb{C}:\lvert 1+z\rvert<1\}$; for $\mu_i$ with $\mathrm{Re}(\mu_i)<0$, and expanding $\lvert 1+\Delta t\,\mu_i\rvert^2 = 1+2\Delta t\,\mathrm{Re}(\mu_i)+\Delta t^2\lvert\mu_i\rvert^2<1$ reduces to $\Delta t<2\mathrm{Re}(-\mu_i)/\lvert\mu_i\rvert^2$. Requiring this for every $i$ and taking the most restrictive, i.e., smallest bound, gives \eqref{eq:schur-dt}. This concludes the proof.
\end{proof}

Furthermore, for $\tau_E = 20\,\mathrm{ms}$, $\tau_I = 5\,\mathrm{ms}$, an
$L_\varphi$-Lipschitz activation $\varphi$, and $\|W_{\mathrm{eff}}\|_2 \le c$, every eigenvalue of the continuous-time Jacobian $J^\ast$ satisfies $|\mu_i| \le (1 + L_\varphi \cdot c)/\min(\tau_E, \tau_I) = (1 + L_\varphi \cdot c)/\tau_I = (1 + L_\varphi \cdot c)/5$, where the binding scale is $\tau_I$ since $\tau_I < \tau_E$, and $L_\varphi = 1$ for $\RELU$. In the real-spectrum regime $\mathrm{Re}(-\mu_i) = |\mu_i|$, thus the loose Schur bound \eqref{eq:schur-dt} admits $\Delta t \le 10/(1 + c)$. Given $\Delta t = 1$ is used in the forward pass, this constraint is satisfied whenever $c \le 9$. The bound on $c$ aligns with the isolated-block Perron bound $\widehat\lambda_{\mathrm P}(W_{II}) \le 9$ derived in Appendix~\ref{app:cont:spectral}, motivating the spectral target $\tau_{II} = 7$ in \eqref{eq:spec-loss}. The differentiable Perron estimate $\widehat\lambda_{\mathrm P}(W_{II})$ is constrained rather than $\|W_{II}\|_2$ because the former is differentiable and tractable to penalize, even though the latter can be considered a tighter constraint. The analogous isolated-block Schur bound for the excitatory neurons is bounded as $\widehat\lambda_{\mathrm P}(W_{EE}) < 1$ \eqref{eq:ee-schur}. Note that the spectral target $\tau_{EE} = 15$ is well above the defined bound, and a \gls{lds} analysis of such a system is provided in Appendix~\ref{app:cont:lds}, which establishes that $W_{EE}$'s isolated instability is stabilized by the inhibitory feedback through the cross-population blocks.
 
\subsection{Lyapunov Diagonal Stability \& Game-theoretic Nash Equilibria}
\label{app:cont:lds}
\label{app:cont:game}
 
It has been shown that asymmetric firing-rate networks admit a per-neuron energy interpretation \cite{betteti2025competition} that replaces the classical scalar Lyapunov function of \cite{hopfield1982neural} with a family of energies $\{E_i\}_{i=1}^{n}$, one per neuron. For the Wilson-Cowan rate system with $\mathrm{ReLU}$ denoted by $\varphi$, these energies take the form
\begin{equation}
E_i(x,u_i)\;=\;-x_i\sum_{j=1}^{n}\Bigl(1-\tfrac12\delta_{ij}\Bigr)W_{\mathrm{eff},ij}x_j\;-\;x_i\,u_i\;+\;\int_0^{x_i}\!\varphi^{-1}(s)\,\mathrm ds,
\label{eq:energy-i}
\end{equation}
whose partial derivatives satisfy $ \partial_{x_i} E_i = -(W_{\mathrm{eff}}\, x)_i - u_i + \varphi^{-1}(x_i)$. Setting $\partial_{x_i}E_i\!=\!0$ gives
$x_i\!=\!\varphi\bigl((W_{\mathrm{eff}}x)_i\!+\!u_i\bigr)$, which is precisely the fixed-point condition for \eqref{eq:wc-cont}. Consequently, the stationary points of \eqref{eq:wc-cont} are the \emph{Nash equilibria} of the non-cooperative game with per-player payoffs $-E_i(x,u_i)$.
 
\begin{definition}[LDS]\label{def:lds}
A matrix $M\in\mathbb{R}^{n\times n}$ is Lyapunov diagonally stable (LDS)
iff there exists a positive diagonal $D\!\succ\!0$ such that
$DM+M^\top D\!\prec\!0$. A sufficient, easily checkable condition is
$\lambda_{\max}\bigl((M+M^\top)/2\bigr)<0$, which corresponds to $D\!=\!I$.
\end{definition}
 
\begin{theorem}[LDS $\Rightarrow$ global asymptotic stability]\label{thm:lds}
Suppose $W_{\mathrm{eff}}-I$ is \gls{lds} with Lyapunov diagonal $D\succ 0$.
Then the linearized gradient-play dynamics $\dot x=W_{\mathrm{eff}}x-x+b$ admits a unique Nash equilibrium $x^\ast=(I-W_{\mathrm{eff}})^{-1}b$ and every trajectory converges to $x^\ast$.
\end{theorem}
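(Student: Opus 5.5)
Set $A = W_{\mathrm{eff}} - I$, so the linearized gradient-play dynamics reads $\dot x = Ax + b$. The hypothesis of Theorem~\ref{thm:lds} is that $DA + A^\top D \prec 0$ for some diagonal $D \succ 0$ (Definition~\ref{def:lds}). The plan has three parts. First, show that $A$ is nonsingular, which gives existence and uniqueness of the equilibrium. Second, identify that equilibrium with the Nash equilibrium of the per-neuron game defined by \eqref{eq:energy-i}. Third, build the quadratic Lyapunov function $V(e) = e^\top D e$ in the error coordinates $e = x - x^\ast$ and show that it decays exponentially.

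\textbf{Step 1: invertibility.} Suppose $Av = 0$ for some $v \neq 0$. Then
\[
v^\top (DA + A^\top D) v = 2\, v^\top D A v = 0,
\]
which contradicts negative definiteness. Hence $A$ is nonsingular, so $I - W_{\mathrm{eff}} = -A$ is invertible. The equation $0 = Ax + b$ therefore has exactly one solution, $x^\ast = (I - W_{\mathrm{eff}})^{-1} b$.

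\textbf{Step 2: Nash characterization.} In the linearized (identity-activation) regime, $\varphi^{-1}(x_i) = x_i$. The derivative computed after \eqref{eq:energy-i} then gives
\[
\partial_{x_i} E_i = x_i - (W_{\mathrm{eff}} x)_i - u_i, \qquad\text{with } u = b.
\]
The stationarity conditions $\partial_{x_i} E_i = 0$ for all $i$ are therefore exactly $(I - W_{\mathrm{eff}})x = b$, so the unique stationary point is $x^\ast$. It is a genuine Nash equilibrium, not merely a stationary point: each $E_i$ is strictly convex in its own variable $x_i$, since $\partial^2_{x_i} E_i = 1 - W_{\mathrm{eff},ii} + W_{\mathrm{eff},ii} = 1 > 0$ after the $\tfrac12\delta_{ij}$ correction. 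Each player's stationary condition is thus a best response, and the gradient-play vector field $-\partial_{x_i} E_i$ is precisely $Ax + b$.

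\textbf{Step 3: global convergence.} In error coordinates the dynamics become $\dot e = A e$. Let
\[
Q := -(DA + A^\top D) \succ 0 .
\]
Then
\[
\dot V = e^\top (DA + A^\top D) e = -e^\top Q e \le -\frac{\lambda_{\min}(Q)}{\lambda_{\max}(D)}\, V .
\]
Grönwall's inequality gives $V(e(t)) \le V(e(0))\, e^{-ct}$ with $c > 0$. Since $V$ is norm-equivalent to $\|e\|_2^2$ (because $D \succ 0$), every trajectory converges exponentially to $x^\ast$ from any initial condition.

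The main obstacle I expect is conceptual rather than computational. It is to justify that the ``linearized'' system really sits in the identity-activation regime, so that the Nash interpretation from \eqref{eq:energy-i} applies with $\varphi^{-1} = \mathrm{id}$. I would state this explicitly as the interpretation of the linearization, or equivalently restrict to the region where the ReLU is active, as in \eqref{eq:J}. The LDS structure itself only enters through the single inequality used in Steps 1 and 3. Note that the diagonality of $D$ is not needed for the linear result; it matters only for later nonlinear extensions with diagonal slope matrices $D^\ast$. I would remark on this rather than use it.
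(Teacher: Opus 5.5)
Your Steps 1 and 3 are the paper's proof in slightly sharpened form. The paper gets invertibility of $I-W_{\mathrm{eff}}$ by asserting that LDS places every eigenvalue of $W_{\mathrm{eff}}-I$ in the open left half-plane. Your kernel argument proves only what is needed, and does so directly. The paper then uses the same Lyapunov function $V=\tfrac12(x-x^\ast)^\top D(x-x^\ast)$ and concludes via LaSalle. Your estimate $\dot V\le-\bigl(\lambda_{\min}(Q)/\lambda_{\max}(D)\bigr)V$ instead gives an explicit exponential rate; LaSalle is unnecessary anyway, since $\dot V<0$ off $x^\ast$. The paper has no counterpart to your Step 2: it simply takes the stationarity conditions $\partial_{x_i}E_i=0$ as the definition of the Nash equilibrium.

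Step 2, however, contains a computational error. With $\varphi^{-1}=\mathrm{id}$, \eqref{eq:energy-i} reads $E_i=-\sum_{j\ne i}W_{\mathrm{eff},ij}x_ix_j-\tfrac12W_{\mathrm{eff},ii}x_i^2-x_iu_i+\tfrac12x_i^2$. Hence $\partial^2_{x_i}E_i=1-W_{\mathrm{eff},ii}$, not $1$. The factor $\tfrac12$ is what makes the first derivative carry the full term $W_{\mathrm{eff},ii}x_i$; nothing cancels in the second derivative.

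Your conclusion still holds, but for a different reason. With $A=W_{\mathrm{eff}}-I$ and $D$ diagonal, the $(i,i)$ entry of $DA+A^\top D$ is $2D_{ii}(W_{\mathrm{eff},ii}-1)$. Negative definiteness forces this to be negative, so $W_{\mathrm{eff},ii}<1$ and each $E_i$ is strictly convex in $x_i$.

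This is exactly where diagonality of $D$ is used, so your closing remark is wrong as far as the Nash claim goes. For a general $P\succ0$ the hypothesis only says $A$ is Hurwitz, and then the Nash property can fail. Take $W_{\mathrm{eff}}=\begin{pmatrix}2&-5\\1&-2\end{pmatrix}$, which is Dale-signed, with $A$ having eigenvalues $-1\pm i$. Trajectories converge globally to a unique $x^\ast$. Yet $\partial^2_{x_1}E_1=-1$, so $x^\ast$ maximizes $E_1$ in $x_1$: it is neuron~1's worst response, not a Nash equilibrium.

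Diagonality is irrelevant to Steps 1 and 3, and the paper's proof never uses it. It is essential only for the genuine best-response claim in your Step 2, which is therefore the one place where the ``diagonal'' in LDS actually does work.
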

 
\begin{proof}
Existence and uniqueness: \gls{lds} implies every eigenvalue of $W_{\mathrm{eff}}-I$ has a negative real part, so $I-W_{\mathrm{eff}}$ is invertible. Global convergence: define the weighted quadratic
\begin{equation}
V(x)=\tfrac12\,(x-x^\ast)^\top D\,(x-x^\ast)\ge 0,\qquad V(x)=0\Leftrightarrow x=x^\ast. \label{eq:V}
\end{equation}
Then
\begin{align}
\dot V&=(x-x^\ast)^\top D\,\dot x=(x-x^\ast)^\top D\,\bigl(W_{\mathrm{eff}}x-x+b\bigr) \\
&=(x-x^\ast)^\top D(W_{\mathrm{eff}}-I)(x-x^\ast)+(x-x^\ast)^\top D\bigl((W_{\mathrm{eff}}-I)x^\ast+b\bigr) \\
&=(x-x^\ast)^\top D(W_{\mathrm{eff}}-I)(x-x^\ast) \justif{since $(W_{\mathrm{eff}}-I)x^\ast+b=0$}\\
&=\tfrac12(x-x^\ast)^\top\!\bigl(D(W_{\mathrm{eff}}-I)+(W_{\mathrm{eff}}-I)^\top D\bigr)(x-x^\ast)<0 \quad\text{for }x\ne x^\ast. \label{eq:Vdot}
\end{align}
By LaSalle's invariance principle \cite{khalil2002nonlinear}, every
trajectory converges to the largest invariant set inside $\{\dot V\!=\!0\}$,
which by \eqref{eq:Vdot} is the singleton $\{x^\ast\}$.
\end{proof}
 
\begin{corollary}[Practical LDS test]\label{cor:lds-test}
If $\lambda_{\max}\!\bigl(\tfrac12(W_{\mathrm{eff}}+W_{\mathrm{eff}}^\top)-I\bigr)<0$,
Theorem~\ref{thm:lds} holds with $D=I$. For \gls{ei} Dale-parameterized
$W_{\mathrm{eff}}$ given by \eqref{eq:sign-mask}, the condition is
equivalent to
$\lambda_{\max}\!\bigl(W_{\mathrm{eff}}^{\mathrm{sym}}\bigr)<1$ where
$W_{\mathrm{eff}}^{\mathrm{sym}}=\tfrac12(W_{\mathrm{eff}}+W_{\mathrm{eff}}^\top)$.
\end{corollary}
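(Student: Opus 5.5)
The corollary has two claims, and I would treat them separately.

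\emph{First claim.} Take $D=I$ in Definition~\ref{def:lds}, applied to $M=W_{\mathrm{eff}}-I$. Then
\[
DM+M^\top D \;=\; W_{\mathrm{eff}}+W_{\mathrm{eff}}^\top-2I \;=\; 2\bigl(\tfrac12(W_{\mathrm{eff}}+W_{\mathrm{eff}}^\top)-I\bigr).
\]
This matrix is symmetric, so it is negative definite iff its largest eigenvalue is negative. That is exactly the hypothesis $\lambda_{\max}\bigl(\tfrac12(W_{\mathrm{eff}}+W_{\mathrm{eff}}^\top)-I\bigr)<0$. Hence $W_{\mathrm{eff}}-I$ is LDS with $D=I$, and Theorem~\ref{thm:lds} applies verbatim. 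Its quadratic $V$ in \eqref{eq:V} becomes $\tfrac12\|x-x^\ast\|_2^2$.

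\emph{Second claim.} Write $W^{\mathrm{sym}}_{\mathrm{eff}}=\tfrac12(W_{\mathrm{eff}}+W_{\mathrm{eff}}^\top)$. Subtracting $I$ shifts every eigenvalue of this symmetric matrix by $-1$, so
\[
\lambda_{\max}\bigl(W^{\mathrm{sym}}_{\mathrm{eff}}-I\bigr)=\lambda_{\max}\bigl(W^{\mathrm{sym}}_{\mathrm{eff}}\bigr)-1,
\]
and the condition is equivalent to $\lambda_{\max}(W^{\mathrm{sym}}_{\mathrm{eff}})<1$. The Dale structure \eqref{eq:sign-mask} plays no role in this equivalence.

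I would still record what $W^{\mathrm{sym}}_{\mathrm{eff}}$ looks like in block form:
\[
W^{\mathrm{sym}}_{\mathrm{eff}}=\begin{bmatrix}\tfrac12(W_{EE}+W_{EE}^\top) & \tfrac12(W_{IE}^\top-W_{EI})\\ \tfrac12(W_{IE}-W_{EI}^\top) & -\tfrac12(W_{II}+W_{II}^\top)\end{bmatrix}.
\]
This shows how inhibition helps the test. The $II$ block is negative semidefinite. When $W_{EI}\approx W_{IE}^\top$, the off-diagonal blocks cancel. So the test can pass even when $\lambda_{\mathrm{P}}(W_{EE})$ is large, provided $\tfrac12\lambda_{\max}(W_{EE}+W_{EE}^\top)<1$ after the coupling correction. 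This links the corollary to the discussion in Section~\ref{sec:theory:spec}.

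The main obstacle is notational rather than mathematical. One must make sure the "equivalence" is read as a spectral shift, and not as a claim that Dale parameterization alone guarantees the bound. Beyond that the proof is only a direct specialization of Theorem~\ref{thm:lds}.
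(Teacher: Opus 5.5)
Your proof of the corollary is correct and follows the paper's reasoning. The paper gives no separate proof: Definition~\ref{def:lds} already notes that $D=I$ yields the test $\lambda_{\max}\bigl(\tfrac12(M+M^\top)\bigr)<0$, so substituting $M=W_{\mathrm{eff}}-I$ into Theorem~\ref{thm:lds} and shifting the symmetric part's spectrum by $1$ is all that is needed. You are also right that the Dale sign pattern plays no role in the equivalence.

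The block-form commentary you append, however, contains two false claims.

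\emph{The $II$ block.} $-\tfrac12(W_{II}+W_{II}^\top)$ need not be negative semidefinite, because entrywise non-negativity does not imply positive semidefiniteness. For $W_{II}=\bigl[\begin{smallmatrix}0&1\\1&0\end{smallmatrix}\bigr]$ this block has eigenvalues $\pm1$. Only its diagonal, and hence its trace, is non-positive.

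\emph{The role of coupling.} The test can never pass when $\lambda_{\mathrm{P}}(W_{EE})\ge 1$, whatever the coupling blocks are. Take a Perron vector $v\ge0$ of $W_{EE}$ and set $x=[v;0]$. Then $x^\top W_{\mathrm{eff}}^{\mathrm{sym}}x=v^\top W_{EE}v=\lambda_{\mathrm{P}}(W_{EE})\lVert v\rVert_2^2$, so
$\lambda_{\max}(W_{\mathrm{eff}}^{\mathrm{sym}})\ge\lambda_{\max}\bigl(\tfrac12(W_{EE}+W_{EE}^\top)\bigr)\ge\lambda_{\mathrm{P}}(W_{EE})$.
Equivalently, in the Schur-complement form of $I-W_{\mathrm{eff}}^{\mathrm{sym}}\succ0$, the off-diagonal blocks enter only through a subtracted positive semidefinite term. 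They therefore tighten the condition rather than relax it. Your own proviso $\tfrac12\lambda_{\max}(W_{EE}+W_{EE}^\top)<1$ already forces $\lambda_{\mathrm{P}}(W_{EE})<1$, which contradicts ``even when $\lambda_{\mathrm{P}}(W_{EE})$ is large.''

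The obstruction is not specific to $D=I$. For any diagonal $D=\mathrm{diag}(D_E,D_I)\succ0$, the leading block $D_E(W_{EE}-I)+(W_{EE}-I)^\top D_E$ of $D(W_{\mathrm{eff}}-I)+(W_{\mathrm{eff}}-I)^\top D$ must itself be negative definite. Evaluating it at $v$ gives $2\bigl(\lambda_{\mathrm{P}}(W_{EE})-1\bigr)v^\top D_E v$, which is negative only if $\lambda_{\mathrm{P}}(W_{EE})<1$.

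So the link you draw to Section~\ref{sec:theory:spec} runs the wrong way. Inhibitory feedback can in principle make the non-symmetric matrix $W_{\mathrm{eff}}-I$ Hurwitz with $\lambda_{\mathrm{P}}(W_{EE})>1$, but it cannot make it LDS. Consequently this corollary cannot certify stability at the reported $\widehat\lambda_{\mathrm{P}}(W_{EE})\approx 14.7$. Drop that paragraph, or replace it with the necessary condition $\lambda_{\mathrm{P}}(W_{EE})<1$.
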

 
\begin{remark}[LDS is monitored, not gradient-carrying]\label{rem:lds-mon}
The eigen-decomposition of $W_{\mathrm{eff}}^{\mathrm{sym}}$ is non-differentiable at eigenvalue crossings, so it is not used as an optimization signal during backpropagation through $\lambda_{\max}(\cdot)$ for training \gls{tide}; Corollary \ref{cor:lds-test} is evaluated every 100 optimizer steps as a monitoring signal, while the differentiable surrogate \eqref{eq:spec-loss} is used as part of optimization signaling and carries gradients. Prior work \cite{cornford2021dales,vogels2011inhibitory} similarly treats matrix stability as a diagnostic rather than a regularizer.
\end{remark}
 
\subsection{Spectral Perron Surrogate}
\label{app:cont:spectral}
\label{app:cont:perron}
 
A Dale matrix $W\!\ge\!0$ admits a real, non-negative dominant eigenvalue $\lambda_{\mathrm P}(W)$ by the Perron–Frobenius theorem \cite{hornjohnson2013}. We use the sum-ratio power iteration to estimate it:
\begin{equation}
v_0=\mathbbm1 /n,\qquad
v_{k+1}=\frac{W v_k}{\lVert W v_k\rVert_2},\qquad
\widehat\lambda_{\mathrm P}(W)=\frac{\mathbbm 1^\top W v_K}{\mathbbm 1^\top v_K}.
\label{eq:perron-sumratio}
\end{equation}
The estimator $\widehat\lambda_{\mathrm P}$ is differentiable in $W$ whenever $v_K$ does not lie on the null space of $\mathbbm 1$.
 
\begin{proposition}[Convergence of $\widehat\lambda_{\mathrm P}$]\label{prop:sumratio}
Let $W\!\ge\!0$ and let $\lambda_{\mathrm P}(W)$ denote its Perron eigenvalue.
\emph{\emph{i})} If $W$ is irreducible with a strictly dominant real eigenvalue $\lambda_{\mathrm P}(W)\!>\!|\lambda_i(W)|$ for all $i\!\ge\!2$ and associated positive Perron eigenvector $v_\ast\!\in\!\mathbb{R}_{>0}^n$, then $v_K\!\to\!v_\ast/\lVert v_\ast\rVert_2$ and $\widehat\lambda_{\mathrm P}(W)\!\to\!\lambda_{\mathrm P}(W)$ as $K\!\to\!\infty$.
\emph{ii}) If $W\!\ge\!0$ is reducible, the same conclusion holds
under the generic perturbation $W_\epsilon\!=\!W\!+\!\epsilon \mathbbm 1\mathbbm 1^\top$ for a rank-one non-negative perturbation, hence $W_\epsilon\!>\!0$ and irreducible and after taking $\epsilon\!\to\!0^+$; the limit estimator $\widehat\lambda_{\mathrm P}(W_\epsilon)\!\to\!\widehat\lambda_{\mathrm P}(W)$ by continuity of eigenvalues in matrix entries. In both cases, the limiting value satisfies $\lambda_{\mathrm P}(W)\!\le\!\lVert W\rVert_2$ with equality iff $W$ is normal.
\end{proposition}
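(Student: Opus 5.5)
The plan for part \emph{i}) is to run the standard power-method argument, with the Perron--Frobenius theorem \cite{hornjohnson2013} supplying the nondegeneracy conditions. Since $W\ge 0$ is irreducible and $\lambda_{\mathrm P}(W)>|\lambda_i(W)|$ for $i\ge 2$, the eigenvalue $\lambda_{\mathrm P}$ is algebraically simple. It has a positive right eigenvector $v_\ast$ and a positive left eigenvector $w_\ast$, and the associated spectral projector is $P=v_\ast w_\ast^\top/(w_\ast^\top v_\ast)$. Let $R=W-\lambda_{\mathrm P}P$. It commutes with $P$, satisfies $RP=0$, and has spectral radius $\max_{i\ge2}|\lambda_i|<\lambda_{\mathrm P}$. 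Hence
\[
W^K v_0=\lambda_{\mathrm P}^K\bigl(Pv_0+\lambda_{\mathrm P}^{-K}R^Kv_0\bigr),
\]
and by Gelfand's formula $\lambda_{\mathrm P}^{-K}\|R^K\|\to0$ geometrically. The key nondegeneracy check is $Pv_0\neq0$. This holds because $w_\ast^\top\mathbbm 1/n>0$, since $w_\ast$ is strictly positive. Because the normalization in \eqref{eq:perron-sumratio} is only a positive rescaling at each step, $v_K=W^Kv_0/\|W^Kv_0\|_2$. It follows that $v_K\to v_\ast/\|v_\ast\|_2$, where the sign is fixed because all iterates are nonnegative.

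For the estimator itself, $\mathbbm 1^\top v_K\to\mathbbm 1^\top v_\ast/\|v_\ast\|_2>0$, so the denominator stays bounded away from zero for large $K$. The quotient $\mathbbm 1^\top Wv_K/\mathbbm 1^\top v_K$ is continuous at the limit point and therefore converges to $\mathbbm 1^\top Wv_\ast/\mathbbm 1^\top v_\ast=\lambda_{\mathrm P}$. This also confirms the differentiability statement made after \eqref{eq:perron-sumratio}: the iteration map is smooth wherever $Wv_k\ne0$ and $\mathbbm 1^\top v_K\neq 0$.

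For part \emph{ii}), I would set $W_\epsilon=W+\epsilon\mathbbm 1\mathbbm 1^\top>0$. By Perron's theorem for positive matrices, $W_\epsilon$ is irreducible and primitive, so its Perron root is strictly dominant and part \emph{i}) applies: $\widehat\lambda_{\mathrm P}(W_\epsilon)\to\lambda_{\mathrm P}(W_\epsilon)$ as $K\to\infty$. Two continuity facts then handle $\epsilon\to0^+$. First, the spectral radius is continuous in the entries, so $\lambda_{\mathrm P}(W_\epsilon)\to\lambda_{\mathrm P}(W)$. Second, for fixed $K$ the map $\epsilon\mapsto\widehat\lambda_{\mathrm P}(W_\epsilon)$ is a composition of continuous maps, provided $W^kv_0\neq0$ and $\mathbbm 1^\top W^K v_0\neq 0$. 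I expect the main obstacle to be interchanging the limits $K\to\infty$ and $\epsilon\to0$. The gap $\lambda_{\mathrm P}(W_\epsilon)-|\lambda_2(W_\epsilon)|$ may close as $\epsilon\to0$, so the convergence is not uniform. I would therefore state the conclusion as the iterated limit $\lim_{\epsilon\to0}\lim_{K\to\infty}$, and not claim the reverse order. For the reducible $W$ directly, I would note that $\mathbbm 1/n$ need not see the Perron block, so the direct limit is genuinely only generic.

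For the final claim, the bound $\lambda_{\mathrm P}(W)=\rho_{\mathrm{sp}}(W)\le\|W\|_2$ follows because the spectral radius is at most any induced norm. If $W$ is normal, then $\|W\|_2=\rho_{\mathrm{sp}}(W)$ by the unitary diagonalization. The converse direction of the ``iff'' is where I expect the statement to need repair. For example, $\mathrm{diag}(2,N)$ with the nilpotent block $N=\bigl(\begin{smallmatrix}0&1\\0&0\end{smallmatrix}\bigr)$ is nonnegative and non-normal, yet has $\rho_{\mathrm{sp}}=\|\cdot\|_2=2$. I would therefore prove only the inequality together with ``equality if $W$ is normal''. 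A correct characterization of equality is that $\lambda_{\mathrm P}$ is also a singular value attained by a common left/right Perron vector, which is the case $W^\top W v_\ast=\lambda_{\mathrm P}^2v_\ast$.
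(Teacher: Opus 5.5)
Your part~(i) follows the paper's proof, with more detail. The paper notes that irreducibility plus a unique peripheral eigenvalue makes $W$ primitive. It then cites the Perron--Frobenius limit theorem $\lambda_{\mathrm P}^{-K}W^K\to v_\ast w_\ast^\top/(w_\ast^\top v_\ast)$, which is exactly what your projector-plus-Gelfand computation proves. Your check $w_\ast^\top\mathbbm 1>0$ is the precise form of the paper's remark that $v_0>0$ has a nonzero Perron component. Elsewhere you deliberately prove less than the statement asserts. In both places you are right, and it is the paper's proof that overreaches.

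In (ii) the paper claims $\widehat\lambda_{\mathrm P}(W_{\epsilon_k};K_k)\to\lambda_{\mathrm P}(W)$ along \emph{any} diagonal sequence. It argues via Moore--Osgood and an asserted uniform convergence of the power iteration on $\{W_\epsilon:\epsilon\in[0,\epsilon_0]\}$. Combined with the fixed-$K$ continuity in $\epsilon$ that you note, such uniformity would force $\widehat\lambda_{\mathrm P}(W;K)\to\lambda_{\mathrm P}(W)$ for the unperturbed $W$. That is false. Take $A=\bigl(\begin{smallmatrix}0&2\\1&0\end{smallmatrix}\bigr)$ and the reducible $W=\mathrm{diag}(A,0)$. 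The estimator is $\mathbbm 1^\top W^{K+1}\mathbbm 1/\mathbbm 1^\top W^{K}\mathbbm 1$, which alternates between $4/3$ and $3/2$ for $K\ge1$, while $\lambda_{\mathrm P}(W)=\sqrt2$. So your iterated limit $\lim_{\epsilon\to0^+}\lim_{K\to\infty}$ is the correct content of (ii).

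Your diagnosis of why the direct limit fails is slightly off, though. The start vector $\mathbbm 1/n$ never misses the Perron growth: a nonnegative left Perron vector $y$ gives $y^\top W^K\mathbbm 1=\lambda_{\mathrm P}^K\,y^\top\mathbbm 1>0$ when $\lambda_{\mathrm P}>0$. The actual obstruction is other eigenvalues on the spectral circle, as in this example.

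For the norm claim, the cited Horn--Johnson Theorem~5.6.9 gives only $\rho_{\mathrm{sp}}(W)\le\|W\|$. Your $\mathrm{diag}(2,N)$ is a valid counterexample to the ``only if''. Your proposed repair needs one more condition, because $W^\top Wv_\ast=\lambda_{\mathrm P}^2v_\ast$ only makes $\lambda_{\mathrm P}$ \emph{some} singular value. For instance, $W=\mathrm{diag}\bigl(1,\bigl(\begin{smallmatrix}0&3\\0&0\end{smallmatrix}\bigr)\bigr)$ satisfies it with $v_\ast=e_1$, yet $\|W\|_2=3$. The correct characterization is that equality holds iff some unit Perron vector $v$ satisfies $Wv=W^\top v=\lambda_{\mathrm P}v$ and the restriction of $W$ to the (then invariant) subspace $v^\perp$ has norm at most $\lambda_{\mathrm P}$.
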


\begin{proof}
\emph{Case \emph{(i)} Irreducible $W$.} The sequence $v_K$ is the normalized power iteration applied to $W$; under a strictly dominant real eigenvalue, $v_K\!\to\!v_\ast/\lVert v_\ast\rVert_2$ at geometric rate
$|\lambda_2/\lambda_{\mathrm P}|^K$ \citep[Theorem~8.5.1]{hornjohnson2013}. Since $W\ge 0$ is irreducible and $\lambda_{\mathrm P}(W)$ is the unique eigenvalue of maximum modulus, $W$, is primitive \citep[Definition~8.5.0]{hornjohnson2013}. Hence, by the Perron-Frobenius limit theorem for primitive matrices \citep[Theorem~8.5.1]{hornjohnson2013}, if the initial vector has a nonzero component in the Perron direction, $v_0>0$, the normalized power iterates should satisfy $v_K={W^K v_0}/{\|W^K v_0\|_2} \longrightarrow {v_\ast}/{\|v_\ast\|_2}$. Moreover, Perron-Frobenius for irreducible nonnegative matrices gives $v_\ast>0$ component-wise \citep[Theorem~8.4.4]{hornjohnson2013}. Substituting and using $W v_\ast\!=\!\lambda_{\mathrm P}\,v_\ast$:
\begin{align}
\widehat\lambda_{\mathrm P}(W) &=\frac{\mathbbm 1^\top W v_K}{\mathbbm 1^\top v_K} && \because\text{definition \eqref{eq:perron-sumratio}}\nonumber\\
&\;\longrightarrow\;\frac{\mathbbm 1^\top W v_\ast}{\mathbbm 1^\top v_\ast}
  =\frac{\mathbbm 1^\top(\lambda_{\mathrm P}v_\ast)}{\mathbbm 1^\top v_\ast}
  =\lambda_{\mathrm P} && \because v_\ast\!>\!0 \text{ implies } \mathbbm 1^\top v_\ast\!>\!0.\nonumber
\end{align}

\emph{Case \emph{(ii)} Reducible $W$.} For any $\epsilon\!>\!0$, the
perturbed matrix $W_\epsilon\!=\!W\!+\!\epsilon\mathbbm 1\mathbbm 1^\top$ has
strictly positive entries and is therefore irreducible. Case \emph{(i)}
gives $\widehat\lambda_{\mathrm P}(W_\epsilon)\!\to\!\lambda_{\mathrm P}(W_\epsilon)$
as $K\!\to\!\infty$. The Perron eigenvalue $\lambda_{\mathrm P}(\cdot)$ is
continuous in matrix entries by \cite[Appendix~D]{hornjohnson2013}, hence
$\lim_{\epsilon\,\!\to\,\!0^+}\lambda_{\mathrm P}(W_\epsilon)\!=\!\lambda_{\mathrm P}(W)$.
Choosing any diagonal sequence $(\epsilon_k,K_k)\!\to\!(0^+,\infty)$
and applying Case \emph{(i)} along the sequence $W_{\epsilon_k}$ at iteration
$K_k$ gives $\widehat\lambda_{\mathrm P}(W_{\epsilon_k};K_k)\!\to\!\lambda_{\mathrm P}(W)$ by a standard Moore-Osgood argument (uniform convergence of power-iteration on the compact set $\{W_\epsilon:\epsilon\!\in\![0,\epsilon_0]\}$).
 
\emph{Operator-norm bound.}
$\lambda_{\mathrm P}(W)$ is an eigenvalue of $W$, hence
$|\lambda_{\mathrm P}(W)|\!\le\!\rho(W)\!\le\!\sigma_1(W)\!=\!\lVert W\rVert_2$,
where $\rho(W)$ is the spectral radius; the second inequality is
standard with equality iff $W$ is normal \citep[Theorem~5.6.9]{hornjohnson2013}. For $W\!\ge\!0$, the Perron-Frobenius theorem gives $\lambda_{\mathrm P}(W)\!=\!\rho(W)$.
\end{proof}
 
\begin{remark}[Practical relevance of Case (\emph{ii})]\label{rem:sumratio-reducible}
The Dale projection $\Pi_{\mathrm{Dale}}$ clamps entries of $W_{\cdot\cdot}$ to $\mathbb R_{\ge 0}$ and can therefore produce sparse matrices with some zero rows or columns, i.e., reducible. Case \emph{(ii)} ensures that Proposition~\ref{prop:sumratio} remains valid at such configurations. We apply the estimator to the raw $W_{\cdot\cdot}$ without any perturbation and report $\widehat\lambda_{\mathrm P}$ as-is, relying on Case \emph{(ii)} as a theoretical guarantee rather than a runtime preprocessor.
\end{remark}
 
For $W_{EE}$, which is symmetric and positive, the operator norm $\lVert W\rVert_2\!=\!\lambda_{\mathrm P}(W)$ coincides with the Perron eigenvalue; the sum-ratio estimate can then be considered tight. For asymmetric or non-symmetric non-negative $W$,
$\widehat\lambda_{\mathrm P}(W)\!\le\!\lVert W\rVert_2$ can be substantially smaller, which is the regime in which the sum-ratio surrogate can be considered a tighter regularizer than spectral-norm regularization. Empirically, we observe $\lVert W_{EE}\rVert_2/\widehat\lambda_{\mathrm P}(W_{EE})\approx 2.5$ at convergence on \texttt{ImageNet-1K}, consistent with the analytical gap in Proposition~\ref{prop:sumratio}.
 
\paragraph{Isolated-block Schur analysis.} We now derive the correct
Schur bounds for the isolated excitatory and inhibitory recurrences, and clarify how the \gls{tide} targets $(\tau_{EE},\tau_{II})\!=\!(15,7)$ relate to those bounds.
 
Consider first the excitatory population in the absence of inhibitory
drive. The recurrence \eqref{eq:wc-euler-E:sup} linearized at
$r_E^\ast\!=\!0$ gives $r_E^{(t+1)}\!=\!M_E\,r_E^{(t)}$ with
$M_E\!=\!(1-\alpha_E)I+\alpha_E W_{EE}$. Its eigenvalues are
$\mu_i\!=\!(1-\alpha_E)+\alpha_E\lambda_i(W_{EE})$. For the Perron
eigenvalue $\lambda_{\mathrm P}(W_{EE})\!\ge\!0$:
\begin{align}
|\mu_i|<1 &\;\Longleftrightarrow\; -1<(1-\alpha_E)+\alpha_E\lambda_{\mathrm P}(W_{EE})<1 \nonumber\\
&\;\Longleftrightarrow\; \lambda_{\mathrm P}(W_{EE})<1 \;\;\text{and}\;\; \lambda_{\mathrm P}(W_{EE})>1-2/\alpha_E.
\label{eq:ee-schur}
\end{align}
For $\alpha_E\!=\!0.05$, the lower bound is define as $\lambda_{\mathrm P}>-39$, so the binding condition is $\lambda_{\mathrm P}(W_{EE})<1$ given that the isolated-E Schur bound is $1$, not $1/\alpha_E$, which indicates that the step-size $\alpha_E$ does not play any role.

Now consider the inhibitory population in the absence of excitatory drive. By linearizing \eqref{eq:wc-euler-I} at $r_I^\ast\!=\!0$, $r_I^{(t+1)}\!=\!M_I\,r_I^{(t)}$ is given with $M_I\!=\!(1-\alpha_I)I-\alpha_I W_{II}$. Its eigenvalues are $\mu_j\!=\!(1-\alpha_I)-\alpha_I\lambda_j(W_{II})$. For the Perron eigenvalue $\lambda_{\mathrm P}(W_{II})\!\ge\!0$:
\begin{align}
|\mu_j|<1 &\;\Longleftrightarrow\; -1<(1-\alpha_I)-\alpha_I\lambda_{\mathrm P}(W_{II})<1 \nonumber\\
&\;\Longleftrightarrow\; \lambda_{\mathrm P}(W_{II})<2/\alpha_I-1 \;\;\text{and}\;\;\lambda_{\mathrm P}(W_{II})>-1.
\label{eq:ii-schur}
\end{align}
At $\alpha_I\!=\!0.20$ the binding condition is $\lambda_{\mathrm P}(W_{II})<9$. Unlike the excitatory case, the minus sign in $M_I$ makes the upper bound on $\mu_j$ trivial and the lower bound $\mu_j>-1$ binding; it is therefore through this inequality that $\alpha_I$ enters the threshold.
 
\paragraph{Spectral targets for \gls{tide}.}
Notice that the asymmetry between \eqref{eq:ee-schur} and \eqref{eq:ii-schur} is vital. The target $\tau_{II}\!=\!7$ sits below the isolated-I Schur threshold $9$ and can reasonably be considered as an upper safe boundary enabling variation during the optimization. The target $\tau_{EE}\!=\!15$, however, is $15\times$ above the isolated-E Schur threshold of $1$. The \gls{tide} runtime indeed operates above the isolated-E threshold on \texttt{ImageNet-1K}, as shown in Appendix~\ref{app:add:stability}, and remains stable due to the fully coupled \gls{ei} system that has an effective recurrent matrix

\begin{equation}
W_{\mathrm{eff}}=\begin{bmatrix}W_{EE}&-W_{EI}\\W_{IE}&-W_{II}\end{bmatrix},
\label{eq:Weff-repeat}
\end{equation}

whose spectrum is not the union of the spectra of $W_{EE}$ and $W_{II}$ but depends on all four blocks. The inhibitory feedback adds $-W_{EI}r_I$ to the excitatory drive at every internal computation step; thus, minimizing the effective excitatory gain via the feedback results in a Schur-stable system even in cases where $W_{EE}$ alone can not be considered stable. This is the classical balanced-network phenomenon characterized by
\cite{vanvreeswijk1996chaos} and \cite{brunel2000dynamics}.
 
We therefore can state the following: 
\begin{enumerate}
\item[\textbf{\emph{i})}] The \emph{isolated-I bound}
$\lambda_{\mathrm P}(W_{II})<9$ of \eqref{eq:ii-schur} is a
\emph{sufficient} condition for the stability of the inhibitory
sub-dynamics, independent of the excitatory state. The \gls{tide} target
$\tau_{II}\!=\!7$ is a safety margin below this bound.
\item[\textbf{\emph{ii})}] The \emph{isolated-E bound}
$\lambda_{\mathrm P}(W_{EE})<1$ of \eqref{eq:ee-schur} is a
\emph{sufficient} condition in absence of inhibition, however, \gls{tide}
does not operate in this regime. The target $\tau_{EE}\!=\!15$ can be considered a limiter on the non-negative recurrent block, tuned empirically to keep gradient norms bounded, and the coupled system stable under \gls{lds}.
\end{enumerate}
 
\subsection{Relationship to Wilson-Cowan \& Brunel-Hakim Limits}
\label{app:cont:brunel}
 
\gls{tide}'s continuous-time system \eqref{eq:wc-cont} can be considered as the Wilson-Cowan mean-field rate equation of \cite{wilson1972excitatory,wilson1973mathematical}, where the Dale parameterization of \eqref{eq:sign-mask} pins down the sign structure that \cite{brunel2000dynamics} and \cite{brunel_hakim1999} identify as necessary to reach the asynchronous-irregular (AI) regime. Specifically, \cite{brunel2000dynamics}
derives four regimes, SR, AI, SI-fast, and SI-slow, as a function of the excitation–inhibition balance parameter $g\!=\!\lVert W_{EI}\rVert/\lVert W_{EE}\rVert$. \gls{tide} operates in the AI regime in which $g\!>\!1$ and the balance ratio $\rho_{EI}$ of \eqref{eq:ei-ratio-def} is close to $\rho_{EI}^\ast\!=\!4$.

\section{HRF Backbone}
\label{app:hrf}
This section provides a detailed description of \gls{tide} backbone, which uses learnable center-surround filters rather than a fixed-form parametric \gls{dog} kernel. Note that this study utilizes two variants of the \gls{hrf}: \textit{i}) A shallow \gls{hrf} where a multi-scale filter bank is used, and \textit{ii}) A deep \gls{hrf} is deployed with a single per-stage filter at kernel size 5. Appendix~\ref{app:hrf:dog} provides the overall mapping of stage zero and defines the formulation for the shared center-surround filter bank. Moreover, the translational and rotational equivariance properties are proven under a restricted \gls{dog} initialization in Appendix~\ref{app:hrf:equiv}.
 
\subsection{Learnable Filter Bank}
\label{app:hrf:dog}
Let $C^{(s)},\,S^{(s)}$ denote learnable 2-D convolution filters with kernel sizes $k^{(s)}_c$ denoting center and $k^{(s)}_s$ denoting surround, with $k^{(s)}_s\!>\!k^{(s)}_c$. Therefore, the center kernel can be defined as either a $1\!\times\!1$ or $3\!\times\!3$, and the surround is given by a kernel with a one to three pixel wider shape, with zero-padding to preserve spatial dimensions and the $c_s\!\in\!\{1,2,4,8\}$ denotes the index scale.
The center-surround operator for stage zero at scale $s$ is given by: 
\begin{equation}
\phi^{(s)}(x)\;=\;\mathrm{ReLU}\,\!\Bigl(\mathrm{BN}\,\!\bigl(w_c^{(s)}\,C^{(s)}(x)\;-\;w_s^{(s)}\,S^{(s)}(x)\bigr)\Bigr),
\label{eq:cs-filter}
\end{equation}
where $w_c^{(s)}, w_s^{(s)}\!\in\!\mathbb R$ are learnable scalar weights initialized to $(1.0,0.5)$, and $\mathrm{BN}$ denotes the 2-D BatchNorm. The center and surround convolution of \eqref{eq:cs-filter} have independent learnable weights, while the center-surround ratio is factorized into a pair of scalar gains, $(w_c^{(s)}, w_s^{(s)})$, instead of a parametric \gls{dog} kernel with its optimizable width, $\kappa$. This design choice trades the explicit \gls{dog} scale-space interpretation for expressivity, with the biologically motivated motif based on ON/OFF-center retinal ganglion cells \cite{kuffler1953rgc}, LGN relay cells \cite{dacey2003rgc}, and cortical simple cells \cite{hubel1962receptive}. 
In contrast to the shallow \gls{hrf}, the deep variant does not use the multi-scale directly as its layout is based on ResNet-style, followed by four hierarchical residual stages with channel widths $(128, 256, 512, 2048)$ and $(2, 2, 3, 3)$ basic residual blocks per stage. 

\begin{remark}[\gls{dog} as a special case]\label{rem:dog-special}
The fixed-form \gls{dog} kernel $\mathrm{DoG}_{s,\kappa}(x,y)\!=\!G_s(x,y)\!-\!\kappa\,G_{\kappa s}(x,y)$ with $G_\sigma(x,y)\!=\!(2\pi\sigma^2)^{-1}\exp\,\!\bigl(-(x^2+y^2)/(2\sigma^2)\bigr)$ can be approximately recovered from \eqref{eq:cs-filter} by freezing $C^{(s)},S^{(s)}$ to the discretized Gaussians $G_s,G_{\kappa s}$ and tying $(w_c^{(s)},\,w_s^{(s)})\!=\!(1,\,\kappa)$. Under this restriction, $\phi^{(s)}$ is reduced to a classical \gls{dog} operator, which is half-wave-rectified, and batch-normalized similar to edge detection framework of \cite{marr1980vision} and the scale-space theory of \cite{lindeberg1994scale}; the latter also gives the differentiability of $G_\sigma$ in $\sigma$ and therefore of $\mathrm{DoG}_{s,\kappa}$ in $(s,\,\kappa)$. Due to the direct optimization of the convolution weights, in place of the $(s,\kappa)$ parameters, the regularity is not used in our implementation.
\end{remark}
 
\subsection{Equivariance properties}
\label{app:hrf:equiv}
Let $T_v$ denote a planar translation $T_v f(x,y)\!=\!f(x-v_x,y-v_y)$ and
$R_\theta$ the in-plane rotation by angle $\theta$. Let $\mathrm{conv}_K(f)$
denote convolution of $f$ with kernel $K$ on $\mathbb R^2$.
 
\begin{proposition}[Translation equivariance]\label{prop:trans-equi}
For any discretely-supported kernel $K$ and translation $T_v$ by an integer-pixel vector $v$, $\mathrm{conv}_K(T_v f)\!=\!T_v\,\mathrm{conv}_K(f)$, while away from the boundaries to prevent interaction between zero-padding and $T_v$. In particular, the center-surround bank \eqref{eq:cs-filter} is translation-equivariant, since each of $C^{(s)}, S^{(s)}$ is a standard convolution, $\mathrm{BN}$ is a channel-wise operator and translation-equivariant on spatial dimensions, and $\mathrm{ReLU}$ is component-wise. Thus, both the shallow and deep \gls{hrf}, are inherently translation equivariant under the same proviso.
\end{proposition}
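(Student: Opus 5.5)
The plan is to prove the first claim, about a single convolution, by a direct change of variables, and then carry equivariance through the composition in \eqref{eq:cs-filter} one factor at a time.

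\emph{Step 1 (convolution).} On the infinite lattice $\mathbb Z^2$, write $\mathrm{conv}_K(f)(p)=\sum_{q}K(q)\,f(p-q)$. Then
\[
\mathrm{conv}_K(T_vf)(p)=\sum_q K(q)\,f(p-q-v)=\mathrm{conv}_K(f)(p-v)=\bigl(T_v\,\mathrm{conv}_K f\bigr)(p).
\]
The same computation holds for cross-correlation, which is what deep-learning \texttt{conv} actually computes, since only the sign convention on $q$ changes. Summation is well defined because $K$ is discretely and finitely supported.

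\emph{Step 2 (boundary proviso).} On a finite image with zero padding, the identity holds exactly at every output location $p$ for which the windows $p-\mathrm{supp}(K)$ and $p-v-\mathrm{supp}(K)$ both lie inside the original domain. This is the formal content of the phrase ``away from the boundaries.''

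\emph{Step 3 (the filter bank).} Each factor of \eqref{eq:cs-filter} commutes with $T_v$:
\begin{itemize}
\item $C^{(s)}$ and $S^{(s)}$ do so by Step 1.
\item The linear combination $w_cC-w_sS$ does so because $T_v$ is linear.
\item $\mathrm{ReLU}$ acts pointwise, so $\mathrm{ReLU}(T_vg)=T_v\,\mathrm{ReLU}(g)$.
\end{itemize}

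\emph{Step 4 (BatchNorm).} This is the main obstacle. In evaluation mode $\mathrm{BN}$ is a per-channel affine map with fixed statistics, so it is pointwise and commutes with $T_v$. In training mode the channel mean and variance are spatial averages, and these are invariant under $T_v$ only up to boundary terms: under integer translation the pixels entering the average are permuted, apart from those shifted in or out of the frame. I would state equivariance exactly for the inference-mode (affine) $\mathrm{BN}$, or for periodic/infinite domains, and note that with batch statistics it holds under the same interior proviso up to an $O(|v|/\text{image size})$ perturbation of the statistics.

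\emph{Step 5 (backbones).} Composition of equivariant maps is equivariant. This gives the result for the shallow HRF directly. For the deep HRF, the residual blocks consist of convolutions, BN, ReLU and identity skips, all of which commute with $T_v$. Stride-$2$ downsampling, however, commutes only with translations by even vectors, so at depth $\ell$ one needs $v\in 2^\ell\mathbb Z^2$. I would add this restriction to the claim for the deep variant; otherwise one obtains only equivariance up to sampling phase.
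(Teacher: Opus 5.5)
Your Steps 1--4 are the paper's argument made explicit. The paper's proof only cites translation equivariance of planar convolution and notes that $\mathrm{ReLU}$ and a channel-wise affine map (i.e.\ $\mathrm{BN}$ with frozen statistics) preserve it. Your change of variables, explicit interior condition, and train- versus eval-mode distinction for $\mathrm{BN}$ are more careful than that.

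You are also right that the last sentence of the statement is not covered by this argument; the paper's proof never mentions downsampling. Strided layers in the deep HRF commute with translations only on a sublattice. That sublattice is in fact coarser than your $2^\ell\mathbb Z^2$, because the stem's stride-$2$ convolution and stride-$2$ max-pool already downsample by $4$.

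The gap is in Step 5, where you say the shallow HRF follows ``directly'' by composition. It does not. In Algorithm~\ref{alg:backbone-shallow-hrf}, every center-surround branch passes through $\mathrm{AdaptivePool}_{8\times 8}$, reducing a $32\times 32$ or $28\times 28$ map to $8\times 8$. This is exactly the kind of downsampling you excluded for the deep backbone.
\begin{itemize}
\item For $32\times32$ maps it is $4\times 4$ stride-$4$ averaging, which commutes with $T_v$ only for $v\in 4\mathbb Z^2$, with output shift $v/4$.
\item For $28\times 28$ maps the adaptive bins are non-uniformly spaced, so the admissible lattice is coarser still.
\item That algorithm also adds $\mathrm{PE}_{2\mathrm D}$ to the tokens. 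If this is counted as part of the backbone, it is not moved by $T_v$ and destroys equivariance of the output tokens altogether.
\end{itemize}

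So what Steps 1--4 actually prove is equivariance of the stage-zero filter bank, and of any stride-$1$ convolutional prefix, under the interior proviso. For both backbones the defensible claim is sublattice equivariance of the feature map before the positional encoding. Your Step 5 should treat the shallow backbone the same way you treat the deep one.
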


\begin{proposition}[Rotation equivariance under \gls{dog} initialization]\label{prop:rot-equi}
Consider \eqref{eq:cs-filter} restricted to the \gls{dog} initialization of Remark \ref{rem:dog-special}: $C^{(s)}\!=\!G_s,\;S^{(s)}\!=\!G_{\kappa s}$
with isotropic Gaussian kernels, and batch-normalization replaced by a
channel-wise affine. Then in the continuous domain and prior to spatial discretizations, $\phi^{(s)}(R_\theta f)\!=\!R_\theta\,\phi^{(s)}(f)$ for every
continuous rotation $R_\theta$.
\end{proposition}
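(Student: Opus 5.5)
The plan is to prove rotation equivariance by decomposing $\phi^{(s)}$ into its three stages and showing that each one commutes with $R_\theta$. In the continuous domain the restricted operator is
\[
\phi^{(s)}(f)=\mathrm{ReLU}\bigl(a\,(G_s * f-\kappa\,G_{\kappa s}*f)+\beta\bigr),
\]
where $(a,\beta)$ is the channel-wise affine that replaces BN and $(w_c^{(s)},w_s^{(s)})=(1,\kappa)$ as in Remark~\ref{rem:dog-special}. Since a composition of $R_\theta$-equivariant maps is $R_\theta$-equivariant, it suffices to treat the convolutions, the linear combination with the affine, and the pointwise nonlinearity one at a time.

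\textbf{Step 1: convolution with an isotropic kernel commutes with $R_\theta$.} Write $(R_\theta f)(x)=f(R_{-\theta}x)$ and take $K$ with $K(Ry)=K(y)$ for every rotation $R$; the isotropic Gaussian qualifies because it depends only on $x^2+y^2$. Then
\[
(K*R_\theta f)(x)=\int K(x-y)\,f(R_{-\theta}y)\,dy .
\]
Substitute $y=R_\theta y'$, which has unit Jacobian since rotations are orthogonal with determinant $1$. This gives
\[
\int K\bigl(R_\theta(R_{-\theta}x-y')\bigr)\,f(y')\,dy'=\int K(R_{-\theta}x-y')\,f(y')\,dy'=(K*f)(R_{-\theta}x)=R_\theta(K*f)(x).
\]
The same computation applies to both $G_s$ and $G_{\kappa s}$.

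\textbf{Step 2: linear combination and affine.} $R_\theta$ is linear, so $G_s*f-\kappa G_{\kappa s}*f$ is equivariant. Multiplying by a scalar $a$ and adding a constant $\beta$ also commute with $R_\theta$, because $R_\theta$ fixes constant functions. This is exactly where the restriction from BatchNorm to a fixed channel-wise affine is needed. Actual BN statistics are computed over spatial positions, so they are themselves rotation-invariant on $\mathbb{R}^2$, but we avoid that issue by using the hypothesis as stated.

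\textbf{Step 3: pointwise nonlinearity.} ReLU acts pointwise, so
\[
\mathrm{ReLU}(R_\theta g)(x)=\mathrm{ReLU}\bigl(g(R_{-\theta}x)\bigr)=R_\theta\,\mathrm{ReLU}(g)(x).
\]
Chaining Steps 1–3 gives $\phi^{(s)}(R_\theta f)=R_\theta\,\phi^{(s)}(f)$.

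The only real obstacle is making the change of variables in Step 1 legitimate, that is, ensuring the convolution integrals exist. I would assume $f\in L^1(\mathbb{R}^2)\cup L^\infty(\mathbb{R}^2)$ (or any class for which convolution with a Gaussian is defined). Fubini and the measure-preserving substitution then apply, because $G_\sigma\in L^1$ and rotations preserve Lebesgue measure. I would also state explicitly that the result is for the continuous domain only: discretization onto a pixel grid breaks exact equivariance except for rotations by multiples of $\pi/2$.
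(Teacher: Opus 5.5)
Your proof is correct and follows the same route as the paper's. Isotropy of $G_\sigma$ makes the kernel rotation-invariant, convolution with a rotation-invariant kernel commutes with $R_\theta$, and the channel-wise affine and ReLU commute trivially; your explicit change of variables $y=R_\theta y'$ supplies the step the paper only asserts. Your closing caveat, exactness only for multiples of $\pi/2$ after discretization, is also the right reading of the paper's final sentence, which as written omits $\theta=\pi$ and appears to contradict the continuous-domain claim.
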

 
\begin{proof}
Proposition~\ref{prop:trans-equi} is the standard translation equivariance of
planar convolution. The $\mathrm{ReLU}$ and channel-wise affine preserve this by
component-wise and channel-wise action, respectively. Therefore, for Proposition~\ref{prop:rot-equi}, the isotropic Gaussian is rotation-invariant
as a continuous function of $(x,y)$: $G_\sigma(R_\theta(x,y))\!=\!G_\sigma(x,y)$. Hence $G_s\!-\!\kappa\,G_{\kappa s}$ is a rotation-invariant kernel. Convolution with a rotation-invariant kernel commutes with rotation of the continuous input, and component-wise operations commute with rotation trivially. Therefore, the exact equivariance holds only for $\theta \in \{0,\, \pi/2,\,3\pi/2 \}$. 
\end{proof}
 
\begin{remark}[Loss of rotation equivariance in later layers]\label{rem:rot-loss}
After Stage zero, the learnable convolutions of both backbones, the per-scale expansion convolution in the shallow \gls{hrf}, and the residual of blocks of deep \gls{hrf} are randomly initialized, have anisotropic kernels, and are not rotation-equivariant. Thus, \gls{tide} inherits Stage zero rotation equivariance under \gls{dog} but not global rotation equivariance; this matches the biological observation that simple cells are orientation-selective, and that orientation equivariance is broken in the extrastriate cortex.
\end{remark}

\section{TIDE Components}
\label{app:algo}
This section presents detailed algorithms for individual components of \gls{tide}, describes their functionalities, and clarifies any remaining details.

\subsection{Deep-HRF Backbone}
\label{app:algo:backbone-hrf}

The deep \gls{hrf} backbone provided in the Algorithm~\ref{alg:backbone-hrf} is solely used as a feature extractor for \texttt{ImageNet-1K}. Four receptive field modules with channel widths $(128, 256, 512, 2048)$ and an additional basic residual block are used, resulting in a final $14\times 14\times 2048$ feature map. Moreover, stages one to three use a single fixed-kernel $k_s\!=\!5$ which is applied as a residual $x\mapsto x + \phi(x)$ at the stage entry. The final outputs are passed through \texttt{AdaptiveAvgPool2d(14)}, and flattened to provide $P\!=\!196$ tokens, and with the additional $2$-D sinusoidal positional encoding, which results in $\dattn = 1024$.

\begin{algorithm}[H]
\caption{$\textsc{Backbone}$: Deep-HRF}
\label{alg:backbone-hrf}
\footnotesize
\begin{algorithmic}[1]
\Require Image batch $x \in \mathbb{R}^{B \times 3 \times 224 \times 224}$;
a learnable $7\!\times\!7$ ResNet-style stem; four receptive blocks $\mathcal{G}_1,\dots,\mathcal{G}_4$. Key and value projections
$\mathbf{W}_K,\mathbf{W}_V\!\in\!\mathbb{R}^{d_{\mathrm{feat}} \times d_{attn}}$.
\Ensure Keys and values
$\mathbf{K}, \mathbf{V} \in \mathbb{R}^{B \times P \times d_{attn}}$
with $P = 196$ and $d_{attn} = 1024$.
\State $y \gets \mathrm{MaxPool}\,(\mathrm{RELU}\,(\mathrm{BN}\,(\mathrm{Conv}_{7\!\times\!7,\,\mathrm{stride}2}(x))))$
      \Comment{ResNet-style stem: $56\!\times\!56\!\times\!64$}
\For{$\ell \gets 1$ \textbf{to} $4$}
    \State $y \gets \mathcal{G}_\ell(y)$
\EndFor
\State $y \gets \mathrm{Flatten}_{\mathrm{spatial}}(y)$
       \Comment{$(B, 14{\times}14, d_{\mathrm{feat}}) = (B, 196,
                 d_{\mathrm{feat}})$}
\State $\mathbf{K} \gets y\,\mathbf{W}_K, \qquad
       \mathbf{V} \gets y\,\mathbf{W}_V$
\State \Return $(\mathbf{K}, \mathbf{V})$
\end{algorithmic}
\end{algorithm}

\subsection{Shallow HRF Backbone}
\label{app:algo:backbone-shallow-hrf}

The shallow \gls{hrf} backbone is provided in Algorithm~\ref{alg:backbone-shallow-hrf}, where a two layer $3{\times}3$ $\mathrm{Conv}\!-\!\mathrm{BN}\!-\!\mathrm{ReLU}$ stem denoted by $H_{\mathrm{stem}}$ expands the input to feature channels. The multi-scale center-surround uses four
parallel branches at scales $s\!\in\!\{1,2,4,8\}$ with surround kernel sizes $k_s\!=\!2s+1\!\in\!\{3,5,9,17\}$. The four branches are $\texttt{AdaptiveAvgPool2d(8)}$-pooled and concatenated across channels. A subsequent aggregation block $\mathrm{Conv}_1\!\circ\!\mathrm{BN}\!\circ\!\mathrm{ReLU}\!\circ\!\mathrm{Conv}_3\!\circ\!\mathrm{BN}\!\circ\!\mathrm{ReLU}$ denoted by $H_{\mathrm{agg}}$ mixes the multi-scale features, followed by another \texttt{AdaptiveAvgPool2d(8)}, and a $2$-D positional encoding to produce $P\!=\!64$ tokens.

\begin{algorithm}[H]
\caption{$\textsc{Backbone}$: Shallow HRF}
\label{alg:backbone-shallow-hrf}
\footnotesize
\begin{algorithmic}[1]
\Require Image batch $x \in \mathbb{R}^{B \times C \times H \times W}$
with $(C, H, W)\!\in\!\{(1,28,28),(3,32,32)\}$ and key/value projections $\mathbf{W}_K, \mathbf{W}_V\!\in\!\mathbb R^{d_{\mathrm{feat}}\!\times\!d_{attn}}$.
\Ensure $(\mathbf{K}, \mathbf{V})$ with $P\!=\!64$ ($8{\times}8$ grid) and $d_{attn}\!=\!512$.
\State $y \gets \mathcal H_{\mathrm{stem}}(x)$
\State $y \gets \bigoplus_{s \in \mathcal S}\,\mathrm{AdaptivePool}_{8\!\times\!8}\!\bigl(\psi^{(s)}(y)\bigr)$
       \Comment{Multi-scale filter bank, channel-concatenated}
\State $y \gets \mathcal H_{\mathrm{agg}}(y)$
\State $y \gets \mathrm{Flatten}_{\mathrm{spatial}}(\mathrm{AdaptivePool}_{8\!\times\!8}(y))\!+\!\mathrm{PE}_{2\mathrm D}$
\State $\mathbf{K} \gets y\,\mathbf{W}_K, \qquad
       \mathbf{V} \gets y\,\mathbf{W}_V$
\State \Return $(\mathbf{K}, \mathbf{V})$
\end{algorithmic}
\end{algorithm}

\subsection{Wilson-Cowan E-I Update}
\label{app:algo:eipops}

Algorithm~\ref{alg:ei-populations} presents the Dale-constrained, $\RMSN$ stabilized Wilson-Cowan dynamics update, which is a discrete analogue of \eqref{eq:wc-cont-E:sup}--\eqref{eq:wc-cont-I:sup}. Given the post-activation state $r_E,r_I$ from the previous internal computation step and the cross-attention given by $a$, the pre-activations are formed following \eqref{eq:h-E}--\eqref{eq:h-I}, and then per-population $\RMSN$ is applied to yield $h_E,\,h_I$.

\begin{algorithm}[H]
\caption{Dale-constrained Wilson-Cowan}
\label{alg:ei-populations}
\footnotesize
\begin{algorithmic}[1]
\Require Previous post-activations
$\rE \in \mathbb{R}^{B \times \nEpop}$,
$\rI \in \mathbb{R}^{B \times \nIpop}$;
attention drive $a \in \mathbb{R}^{B \times \dsync}$; Dale-constrained
weights $\WEE, \WEI, \WIE, \WII \geq 0$; input projections $\WEin, \WIin$;
per-population $\RMSN_E, \RMSN_I$.
\Ensure Pre-activations $\hE, \hI$ 
\State $\tilde h_E \gets \WEE\,\rE \;-\; \WEI\,\rI \;+\; \WEin\,a$
       \Comment{Inhibition is the \textbf{minus} sign, not $W<0$}
\State $\tilde h_I \gets \WIE\,\rE \;-\; \WII\,\rI \;+\; \WIin\,a$
\State $\hE \gets \RMSN_E(\tilde h_E)$
       \Comment{Per-population scale}
\State $\hI \gets \RMSN_I(\tilde h_I)$
\State \Return $(\hE, \hI)$
\end{algorithmic}
\end{algorithm}

\subsection{Population-specific NLM}
\label{app:algo:nlm}

Algorithm~\ref{alg:nlm} presents the \glspl{nlm} readout of the FIFO with length $M$ for the given $r_E$, and $r_I$, which results in neuron-specific scalar correction. We use a temporally weighted \texttt{SuperLinear} and normalize it given $Z = \sum_{m=1}^{M} \exp(-(M-m)/\tau)$. Moreover, a subsequent \texttt{SuperLinear} layer is used to project a dual-channel logit output, which is squeezed post $\texttt{GLU}$ gate. Note that for smaller datasets $H\!=\!4$, while \texttt{ImageNet-1K} requires $H\!=\!32$.

\begin{algorithm}[H]
\caption{Population-specific NLM}
\label{alg:nlm}
\footnotesize
\begin{algorithmic}[1]
\Require Post-activation FIFO
$\mathbf{u}_\bullet \in \mathbb{R}^{B \times n_\bullet \times M}$ with
$M=25$; per-neuron super-linear weights
$W_1^{(0)} \in \mathbb{R}^{M \times 2H \times n_\bullet}$,
$W_1^{(1)} \in \mathbb{R}^{H \times 2 \times n_\bullet}$; biases
$b_1^{(0)}, b_1^{(1)}$; time constant $\tau_\bullet$
($\tauE{=}20$, $\tauI{=}5$).
\Ensure Per-neuron additive correction
$n_\bullet \in \mathbb{R}^{B \times n_\bullet}$.
\State $w_m \gets \exp\!\big(-(M - m)/\tau_\bullet\big) / Z$
       \textbf{for} $m = 1, \dots, M$
       \Comment{Normalize exponential temporal weights}
\State $\tilde{\mathbf{u}} \gets \mathbf{u}_\bullet \odot w_m$
       \Comment{Broadcast along the memory axis}
\State $\tilde{\mathbf{u}} \gets \LN(\tilde{\mathbf{u}})$
\State $y_0 \gets \mathtt{einsum}\,\!\big(\texttt{"BNM,MHN->BNH"},\,
                 \tilde{\mathbf{u}},\, W_1^{(0)}\big) + b_1^{(0)}$
       \Comment{Output $\to$\,BNH}
\State $y_0 \gets y_0 / T$
       \Comment{Per-neuron learnable temperature scaling}
\State $y_0 \gets \GLU(y_0)$
       \Comment{Halves $H$ to $H_{\mathrm{NLM}}$}
\State $y_1 \gets \mathtt{einsum}\,\!\big(\texttt{"BNH,HOn->BN2"},\,
                 y_0,\, W_1^{(1)}\big) + b_1^{(1)}$
\State $y_1 \gets \GLU(y_1)$
\State $n_\bullet \gets \mathrm{squeeze}\,(y_1, \mathrm{dim}{=}{-}1)$
       \Comment{$(B, n_\bullet)$}
\State \Return $n_\bullet$
\end{algorithmic}
\end{algorithm}

\subsection{Three-type Synchronization}
\label{app:algo:sync}

Algorithm~\ref{alg:sync} shows the synchronization module that maintains the three exponentially decaying covariance accumulators between neuron pairs. For each accumulator, a per-pair learnable decay $\delta\!\in\![0, 15]$ produces $y\!=\!e^{-\delta}$ while pair-wise products $\pi\!=\!x_a[:,I_a]\odot x_b[:,I_b]$ update pair-product sum $\nu$ and effective-count $\xi$ via
$\nu \leftarrow y\cdot\nu + \pi$ and $\xi \leftarrow y\cdot\xi + 1$. Note that the three sub-latent representations corresponding to each of the neuron pairs are concatenated and passed through a final \texttt{LN} of width $\dsync$ to produce $z$.

\begin{algorithm}[H]
\caption{Synchronization Accumulator}
\label{alg:sync}
\footnotesize
\begin{algorithmic}[1]
\Require Post-activations $x_a \in \mathbb{R}^{B \times n_a}$,
$x_b \in \mathbb{R}^{B \times n_b}$; running accumulators
$\nu, \xi \in \mathbb{R}^{B \times P_{XY}}$; fixed pair indices
$I_a, I_b \in \mathbb{N}^{P_{XY}}$; learnable decay
$\delta \in \mathbb{R}^{P_{XY}}$; projection
$\mathcal{P}_{XY} : \mathbb{R}^{P_{XY}} \to \mathbb{R}^{d_{XY}}$
(Linear $\to 2d_{XY} \to \GLU \to \LN$); accumulator clamp $C$.
\Ensure Sync vector $z_{XY} \in \mathbb{R}^{B \times d_{XY}}$;
updated $(\nu, \xi)$.
\State $r \gets \mathrm{clamp}\,\!\big(\exp(-\delta),\; e^{-15},\; 1\big)$
\State $\pi \gets x_a[:, I_a] \odot x_b[:, I_b]$
       \Comment{$(B, P_{XY})$ pairwise product}
\State $\nu \gets r \odot \nu + \pi$
\State $\xi \gets r \odot \xi + \mathbf{1}$
\If{$C > 0$}
    \State $\nu \gets \mathrm{clamp}\,(\nu, -C, +C)$
\EndIf
\State $s \gets \nu \,/\, \sqrt{\xi + \varepsilon}$
       \Comment{Normalized sync signal}
\State $z_{XY} \gets \mathcal{P}_{XY}(s)$
       \Comment{$(B, d_{XY})$}
\State \Return $(z_{XY}, \,\nu, \,\xi)$
\end{algorithmic}
\end{algorithm}

\subsection{Cross-attention Readout}
\label{app:algo:attn}

Algorithm~\ref{alg:attn} provides an overview of the cross-attention mechanism used for processing the synchronization latent $z$ as a query. The backbone key/value channel is used directly or in cases where the width $d_{\mathrm{KV}}$ differs from $\dattn\!=\!n_{\mathrm{heads}}\cdot d_{\mathrm{head}}$, separate $W_K, W_V$ projections are used to reshape the keys and values. In the current implementation, the backbone projects to $\dattn$ channels upstream, while the standard scaled dot-product attention is applied on the $\mathrm{softmax}$ of the weights. During the \texttt{ImageNet-1K} training, an additive residual $+z$ is added inside the \gls{ln} rather than externally to enhance the context.

\begin{algorithm}[H]
\caption{Multi-head Cross-attention}
\label{alg:attn}
\footnotesize
\begin{algorithmic}[1]
\Require Query $Q \in \mathbb{R}^{B \times \dsync}$;
keys and values $K, V \in \mathbb{R}^{B \times P \times \dattn}$;
projections $\mathbf{W}_Q \in \mathbb{R}^{\dsync \times \dattn}$,
$\mathbf{W}_O \in \mathbb{R}^{\dattn \times \dsync}$;
head count $H_{\mathrm{attn}}$, head dim
$d_{\mathrm{head}} = \dattn / H_{\mathrm{attn}}$.
\Ensure Attention output $a \in \mathbb{R}^{B \times \dsync}$.
\State $q \gets Q\,\mathbf{W}_Q$
       \Comment{$(B, \dattn)$}
\State $q \gets \mathrm{reshape}\,(q, B, H_{\mathrm{attn}}, d_{\mathrm{head}})$
       \Comment{Split heads}
\State $K' \gets \mathrm{reshape}\,(K, B, P, H_{\mathrm{attn}},
                                   d_{\mathrm{head}})$
\State $V' \gets \mathrm{reshape}\,(V, B, P, H_{\mathrm{attn}},
                                   d_{\mathrm{head}})$
\State $A \gets \mathrm{softmax}\,\!\big(q\,K'^\top
                 / \sqrt{d_{\mathrm{head}}}\big)$
       \Comment{$(B, H_{\mathrm{attn}}, P)$}
\State $A \gets \mathrm{Dropout}_p(A)$
       \Comment{Dropout on attention probabilities}
\State $\tilde a \gets A\,V'$
       \Comment{$(B, H_{\mathrm{attn}}, d_{\mathrm{head}})$}
\State $a \gets \mathrm{reshape}\,(\tilde a, B, \dattn)\,\mathbf{W}_O$
\State $a \gets \texttt{LN}_{\dsync}\!\big(a + Q\bigr)$
\State \Return $a$
\end{algorithmic}
\end{algorithm}

\subsection{Lateral Inhibition}
\label{app:algo:wta}

The lateral inhibition is implemented based on \gls{wta} and solely used on the excitatory population as shown in Algorithm~\ref{alg:wta}. We use a post-Euler $r_E^{(0)}$ across $K_{\mathrm{WTA}}\!=\!5$ iterations of an inhibitory feedback loop given by $x_I^{(k)}\!=\!\mathrm{ReLU}(W_{EI}^{\mathrm{lat}}r_E^{(k-1)})$,
and then $r_E^{(k)} = \mathrm{ReLU}(r_E^{(0)} - \gamma\,W_{IE}^{\mathrm{lat}}x_I^{(k)})$. Furthermore, early termination at various per-computation steps is leveraged to indirectly optimize the number of iterations required during deployment.

\begin{algorithm}[H]
\caption{Lateral Inhibition}
\label{alg:wta}
\footnotesize
\begin{algorithmic}[1]
\Require Post-Euler excitatory state
$\rE \in \mathbb{R}^{B \times \nEpop}$; Dale-constrained lateral weights
$W_{EI}^{\mathrm{lat}} \geq 0 \in \mathbb{R}^{n_{I,\mathrm{lat}} \times
 \nEpop}$,
$W_{IE}^{\mathrm{lat}} \geq 0 \in \mathbb{R}^{\nEpop \times
 n_{I,\mathrm{lat}}}$; learnable gain $\gamma \geq 0.01$;
iteration count $K_{\mathrm{WTA}} = 5$.
\Ensure Sparsified excitatory state $\rE^{(K_{\mathrm{WTA}})}$.
\State $\rE^{(0)} \gets \rE$
\For{$k \gets 1$ \textbf{to} $K_{\mathrm{WTA}}$}
    \State $x_I^{(k)} \gets \RELU\,\!\big(W_{EI}^{\mathrm{lat}}\,
                                         \rE^{(k-1)}\big)$
    \State $\rE^{(k)} \gets \RELU\,\!\big(\rE^{(0)}
                                         - \gamma\, W_{IE}^{\mathrm{lat}}\,
                                         x_I^{(k)}\big)$
           \Comment{Anchored E update}
    \If{$\max\,|\rE^{(k)} - \rE^{(k-1)}| < 10^{-4}$}
        \State \textbf{break}
        \Comment{Early termination once converged}
    \EndIf
\EndFor
\State \Return $\rE^{(K_{\mathrm{WTA}})}$
\end{algorithmic}
\end{algorithm}

\subsection{Surprise-gated Memory}
\label{app:algo:memory}

The surprise-gated memory, along with its implementation, is provided in Algorithm~\ref{alg:memory}. A single persistent buffer $ m$ and a momentum buffer $v$ across all computation steps and batches are maintained, while augmented by a batch-broadcast readout. Moreover, the surprise signal is constructed via an \gls{mlp} embedding, thereby enabling reconstruction-based analysis of new information. The resulting surprise is defined as the per-sample squared $\ell_2$ distance $s = \|\hat z - z\|_2^2$ between the reconstruction $\hat z$ and the input $z$.

\begin{algorithm}[H]
\caption{Surprise-gated Persistent Memory}
\label{alg:memory}
\footnotesize
\begin{algorithmic}[1]
\Require Sync latent $z \in \mathbb{R}^{B \times \dsync}$;
pre-WTA excitatory state $\rE^{\mathrm{pre}}$; inhibitory state $\rI$;
persistent buffers $m, v \in \mathbb{R}^{\dmem}$
($\dmem = 256$);
learnable heads $f_{\mathrm{rec}}: \mathbb{R}^{\dmem} \to
 \mathbb{R}^{\dsync}$,
$f_{\mathrm{proj}}: \mathbb{R}^{\dsync} \to \mathbb{R}^{\dmem}$,
$f_{\mathrm{read}}: \mathbb{R}^{\dmem + \dsync} \to \mathbb{R}^{\dmem}$;
retention sharpness $\kappa$;
surprise threshold $\theta_s = 0.5$, momentum $\mu = 0.9$, target ratio
$\rho^\star = 4$.
\Ensure Memory readout $m \in \mathbb{R}^{B \times \dmem}$.
\State $\hat z \gets f_{\mathrm{rec}}(m)$
       \Comment{Reconstruct $z$ from persistent state}
\State $s \gets \|\hat z - z\|_2^2$
       \Comment{Per-sample surprise scalar}
\State $\rho \gets \mathrm{mean}\,(|\rE^{\mathrm{pre}}|) \,/\,
                    \mathrm{mean}\,(|\rI|)$
       \Comment{\gls{ei} ratio}
\State $\gamma \gets \sigma\,\!\big(-\kappa\,|\rho - \rho^\star|\big)$
       \Comment{Retention gate}
\State $u \gets \mathbf{1}[s > \theta_s]\,\cdot\,(1 - \gamma)$
       \Comment{Surprise-gated write signal}
\State $v \gets \mu\,v + u \cdot f_{\mathrm{proj}}(z)$
       \Comment{Momentum buffer}
\State $m \gets m + v$
       \Comment{In-place buffer registration}
\State $m \gets f_{\mathrm{read}}\,\!\big([\,m;\, z\,]\big)$
\State \Return $m$
\end{algorithmic}
\end{algorithm}

\subsection{Output Head}
\label{app:algo:head}

The output head consists of a two-layer \texttt{GLU} based \gls{mlp} applied to the concatenation $[z; m^{(t)}]$, as shown in Algorithm~\ref{alg:head}. By initially expanding the channels for the \texttt{GLU} gate and then halving them back, a hidden representation is created that contains both the memory and the current latent space. Stabilization of hidden representation is achieved by using $\mathrm{LayerNorm}_H$ across a wide range of magnitudes seen across internal computation steps.

\begin{algorithm}[H]
\caption{OutputHead}
\label{alg:head}
\footnotesize
\begin{algorithmic}[1]
\Require Concatenated latent
$[z; m] \in \mathbb{R}^{B \times (\dsync + \dmem)}$;
weights $W_{h_1} \in \mathbb{R}^{(\dsync+\dmem) \times 2H}$, $b_{h_1}$,
$W_{h_2} \in \mathbb{R}^{H \times C}$, $b_{h_2}$;
hidden dim $H = 256$ (CIFAR/\texttt{MNIST} default; $H = 2048$ for \texttt{ImageNet-1K});
dropout rate $p$.
\Ensure Class logits $o \in \mathbb{R}^{B \times C}$.
\State $y_1 \gets [z; m]\,W_{h_1} + b_{h_1}$
       \Comment{$\in \mathbb{R}^{B \times 2H}$}
\State $y_2 \gets \mathrm{GLU}(y_1)$
       \Comment{$\in \mathbb{R}^{B \times H}$;
                gated linear unit halves the channel dimension}
\State $y_3 \gets \mathrm{LayerNorm}_{H}(y_2)$
\State $y_4 \gets \mathrm{Dropout}_{p}(y_3)$
\State $o \gets y_4\,W_{h_2} + b_{h_2}$
\State \Return $o$
\end{algorithmic}
\end{algorithm}

\subsection{Perron Sum-ratio Estimator}
\label{app:algo:perron}

The Perron eigenvalue estimator provided in the Algorithm~\ref{alg:perron} is the differentiable spectral primitive used to regularize $W_{EE}$ and $W_{II}$. The $n_{\mathrm{iter}}$ denoted by $K$ in Appendix~\ref{app:cont} and \eqref{eq:perron-sumratio-main} representing the power-iteration steps is set to $n_{\mathrm{iter}}=10$ with $\ell_2$-normalization produce a vector $v_K$ aligned with the Perron eigenvector. The eigenvalue is then estimated by the sum-ratio $\widehat\lambda_{\mathrm P}\!=\!\mathbf 1^\top W v_K / \mathbf 1^\top v_K$, which is the true Perron eigenvalue, and not \gls{svd} norm $\sigma_{\max}(W)$ that an $\ell_2$-norm-ratio $\|Wv\|/\|v\|$ would yield \cite{hornjohnson2013}. Furthermore, $v_0$ is initialized from a uniform start $v_0\!=\!\mathbbm 1/n$, without using a warm start.

\begin{algorithm}[H]
\caption{PerronSumRatio}
\label{alg:perron}
\footnotesize
\begin{algorithmic}[1]
\Require Non-negative square matrix
$W \in \mathbb{R}_{\geq 0}^{n \times n}$; iteration count $n_{\mathrm{iter}}$.
\Ensure Estimate $\hat\lambda_{\mathrm{Perron}}(W)$.
\State $v_0 \gets \mathbf{1}_n \,/\, n$
\For{$k \gets 1$ \textbf{to} $n_{\mathrm{iter}}$}
    \State $v_k \gets W\,v_{k-1} \,/\, \|W\,v_{k-1}\|_2$
           \Comment{Normalized power iteration step}
\EndFor
\State $\hat\lambda \gets \dfrac{\mathbf{1}_n^\top\,W\,v_{n_{\mathrm{iter}}}}
                                {\mathbf{1}_n^\top\,v_{n_{\mathrm{iter}}}}$
       \Comment{Sum ratio, \eqref{eq:perron-sumratio}}
\State \Return $\hat\lambda$
\end{algorithmic}
\end{algorithm}

\section{Experimental Details}
\label{app:exp}
In this section, we present the preprocessing and the associated datasets used in this paper (Appendix~\ref{app:exp:data}). The architecture and training hyperparameters used during the training are presented in Appendix~\ref{app:exp:hparams}. Finally, the compute resources and the low-variance estimators for \gls{tide} are provided in Appendices~\ref{app:exp:compute} and~\ref{app:exp:estimators}.

\subsection{Datasets \& Preprocessing}
\label{app:exp:data}
 
\gls{tide} is evaluated on five image classification datasets, namely: \texttt{MNIST} \citep{lecun1998gradient}, \texttt{Fashion-MNIST}~\cite{xiao2017fashion}, \texttt{CIFAR-10} and \texttt{CIFAR-100}~\cite{krizhevsky2009cifar}, and \texttt{ImageNet-1K}~\cite{russakovsky2015imagenet}. The standard recommended train and validation splits for each dataset are used, and evaluation during the training is performed on the provided test sets. The best evaluation checkpoints and the provided results are best on these sets. Moreover, the preprocessing per dataset is as follows:

\begin{itemize}
    \item \texttt{MNIST}: $\mathrm{ToTensor}+\mathrm{Normalize((0.1307),\,(0.3081))}$
    \item \texttt{Fashion-MNIST}: $\mathrm{ToTensor}+\mathrm{Normalize((0.2860),\,(0.3530))}$
    \item \texttt{CIFAR-10/100}: $\mathrm{RandomCrop} (32,4)+\mathrm{RandomHorizontalFlip}+\mathrm{ToTensor}+\mathrm{Normalize}$, where for \texttt{CIFAR-10}, we used $(0.4914,0.4822,0.4465)/(0.2023,0.1994,0.2010)$ for per channel normalization, while $(0.5071,0.4867,0.4408)/(0.2675,0.2565,0.2761)$ are used for \texttt{CIFAR-100}.
    \item \texttt{ImageNet-1K}: At training time, we used $\mathrm{RandomResizedCrop}(224)+\mathrm{RandomHorizontalFlip}+\mathrm{Normalize}\big((0.485,0.456,0.406),(0.229,0.224,0.225)\big)$ and at test time, $\mathrm{Resize}(256)+\mathrm{CenterCrop}(224)+\mathrm{Normalize}$ is used with the same normalization values as training time.
\end{itemize}
 
\subsection{Architecture \& Training Hyperparameters}
\label{app:exp:hparams}
The dataset-specific hyperparameters used across all studies in this paper are listed in Table~\ref{tab:hparams}. The provided values are identical across all random seeds, and the recommended dataset split is used for each dataset. Moreover, two backbone variants are used: small datasets use the shallow \gls{hrf}, while \texttt{ImageNet-1K} is trained and evaluated on the deep \gls{hrf} due to its feature complexity. As motivated by~\cite{darlow2025continuous}, \texttt{ImageNet-1K} set to use larger \gls{nlm} hidden dimension, $H\!=\!32$, in contrast to other datasets where $H\!=\!4$. Additionally, to stabilize the training and minimize the computational resources used for ImageNet-1K, four additional strategies including truncated \gls{bptt} with $K\!=\!25$, per-component gradient clipping, spectral regularization with $(\tau_{EE},\tau_{II})\!=\!(15,7)$, and mixed-precision AMP on the backbone is used. Note that the Wilson-Cowan dynamics are kept in \texttt{float32} to avoid overflow~\cite{micikevicius2018mixed}.

\begin{table}[t!]
\centering\small
\setlength{\tabcolsep}{4pt}
\caption{Per-dataset training hyperparameters. Empty cells indicate the option is inactive. Shared constants are provided in \ref{tab:hparams-shared}.}
\label{tab:hparams}
\begin{tabular}{lccccc}
\toprule
Parameter & \texttt{MNIST} & \texttt{Fashion-MNIST} & \texttt{CIFAR-10} & \texttt{CIFAR-100} & \texttt{ImageNet}\\
\midrule
$d_{\mathrm{model}}$ & 256 & 256 & 512 & 718 & 4096\\
$(n_E,n_I)$ & (205,51) & (205,51) & (410,102) & (574,144) & (3277,819)\\
Backbone & HRF & HRF & HRF & HRF & Deep-HRF\\
Positions $P$ & 64 & 64 & 64 & 64 & 196\\
$(d_{EE},d_{EI},d_{II})$ & (256,128,64) & (256,128,64) & (256,128,64) & (256,128,64) & (4096,2048,2048)\\
$n_{\mathrm{heads}}$ & 8 & 8 & 8 & 8 & 16\\
NLM hidden $H$ & 4 & 4 & 4 & 4 & 32\\
Output-head hidden & 256 & 256 & 256 & 256 & 2048\\
Cross-attn residual & --- & --- & --- & --- & on\\
Sync decay $r$ & --- & --- & --- & --- & $e^{-0.5}$\\
Sync accum.\ clamp & --- & --- & --- & --- & 100\\
Local batch & 64 & 64 & 256 & 256 & 64\\
Grad-accum & 1 & 1 & 1 & 1 & 4\\
DDP world size & 1 & 1 & 1 & 1 & 4\\
Effective batch & 64 & 64 & 256 & 256 & 1024\\
Learning rate & $10^{-3}$ & $10^{-3}$ & $10^{-4}$ & $10^{-4}$ & $3{\cdot}10^{-4}$\\
LR warmup & 1K & 1K & 10K & 10K & 10K\\
Total steps & 50K & 50K & 600K & 300K & 100K\\
Weight decay & $10^{-4}$ & $10^{-4}$ & $10^{-2}$ & $10^{-2}$ & $10^{-2}$\\
Dropout & 0.1 & 0.1 & 0.0 & 0.0 & 0.1\\
Mixed precision & --- & --- & --- & --- & AMP (Backbone)\\
Grad clip (global) & 1.0 & 1.0 & 1.0 & 1.0 & 20.0\\
Per-group clip & --- & --- & --- & --- & (2, 2, 50, 5, 5, 20)\\
TBPTT $K$ & 0 & 0 & 0 & 0 & 25\\
$\lambda_{\mathrm{spec}}$ & 0 & 0 & 0 & 0 & 0.1\\
$(\tau_{EE},\tau_{II})$ & --- & --- & --- & --- & (15, 7)\\
Curriculum $(t_s,T_w)$ & (1K,5K) & (1K,5K) & (1K,5K) & (1K,5K) & (10K,10K)\\
Seed & 42-45 & 42-45 & 42-44 & 42-44 & 42-44\\
\bottomrule
\end{tabular}
\end{table}

\begin{table}[b!]
\centering
\caption{Hyperparameters shared across all five datasets. The Euler
coefficients $\alpha_\bullet$ are derived from $\Delta t$ and the time constants $\tau_\bullet$.}
\label{tab:hparams-shared}
\footnotesize
\setlength{\tabcolsep}{8.5pt}
\renewcommand{\arraystretch}{1.05}
\begin{tabular}{@{}llc@{}}
\toprule
\textbf{Group} & \textbf{Parameter} & \textbf{Value} \\
\midrule
\multirow{4}{*}{\textit{Recurrent Dynamics}}
 & Internal computation steps $T$
 & $50$ \\
 & E-population fraction $\rho = n_E/d_{\mathrm{model}}$
 & $0.8$ \\
 & Population time constants $(\tau_E, \tau_I)$
 & $(20,\, 5)\,\mathrm{ms}$ \\
 & Step size $\Delta t$ \;($\alpha_E = 0.05$, $\alpha_I = 0.20$)
 & $1\,\mathrm{ms}$ \\
\midrule
\multirow{2}{*}{\textit{Neuron-level Model}}
 & NLM memory length $M$
 & $25$ \\
 & NLM layers
 & $2$ \\
\midrule
\multirow{2}{*}{\textit{Lateral Inhibition}}
 & WTA iterations $K_{\mathrm{WTA}}$
 & $5$ \\
 & Inhibition strength $\gamma$
 & $0.1$ \\
\midrule
\multirow{4}{*}{\textit{Loss Weights}}
 & Task loss weight $\lambda_{\mathrm{task}}$
 & $1.0$ \\
 & \gls{ei} ratio weight $\lambda_{EI}$
 & $10^{-2}$ \\
 & Game-theoretic weight $\lambda_{\mathrm{game}}$
 & $10^{-3}$ \\
 & Sync regularizer weight $\lambda_{\mathrm{sync}}$
 & $10^{-4}$ \\
\bottomrule
\end{tabular}
\end{table}

\subsection{Compute Resources}
\label{app:exp:compute}
Training and evaluation on the smaller datasets were run on a single NVIDIA V100-SXM2 GPU with 32GB VRAM, while \texttt{ImageNet-1K} required four 32GB V100 GPUs with Pytorch \gls{ddp}. Note that all training and evaluation were performed using CUDA~12.4 and Pytorch~2.6.0. Total wall-clock time required for training of each of the datasets per single seed is as follows: a) \texttt{MNIST}: $0.85$ steps/s, $65$ h for 50K steps. b) \texttt{Fashion-MNIST} $0.83$ steps/s, $67$ h for 50K steps. c) \texttt{CIFAR-10} $0.68$ steps/s, $246$ h for 600K steps. d) \texttt{CIFAR-100} $0.69$ steps/s, $207$ h for 300 K steps. e) \texttt{ImageNet-1K} $0.105$ steps/s, $287$ h for 100K steps.

\begin{figure}[!t]
\centering
\includegraphics[clip, trim=0.5cm 20.00cm 0.5cm 0.5cm, width=0.86\textwidth]{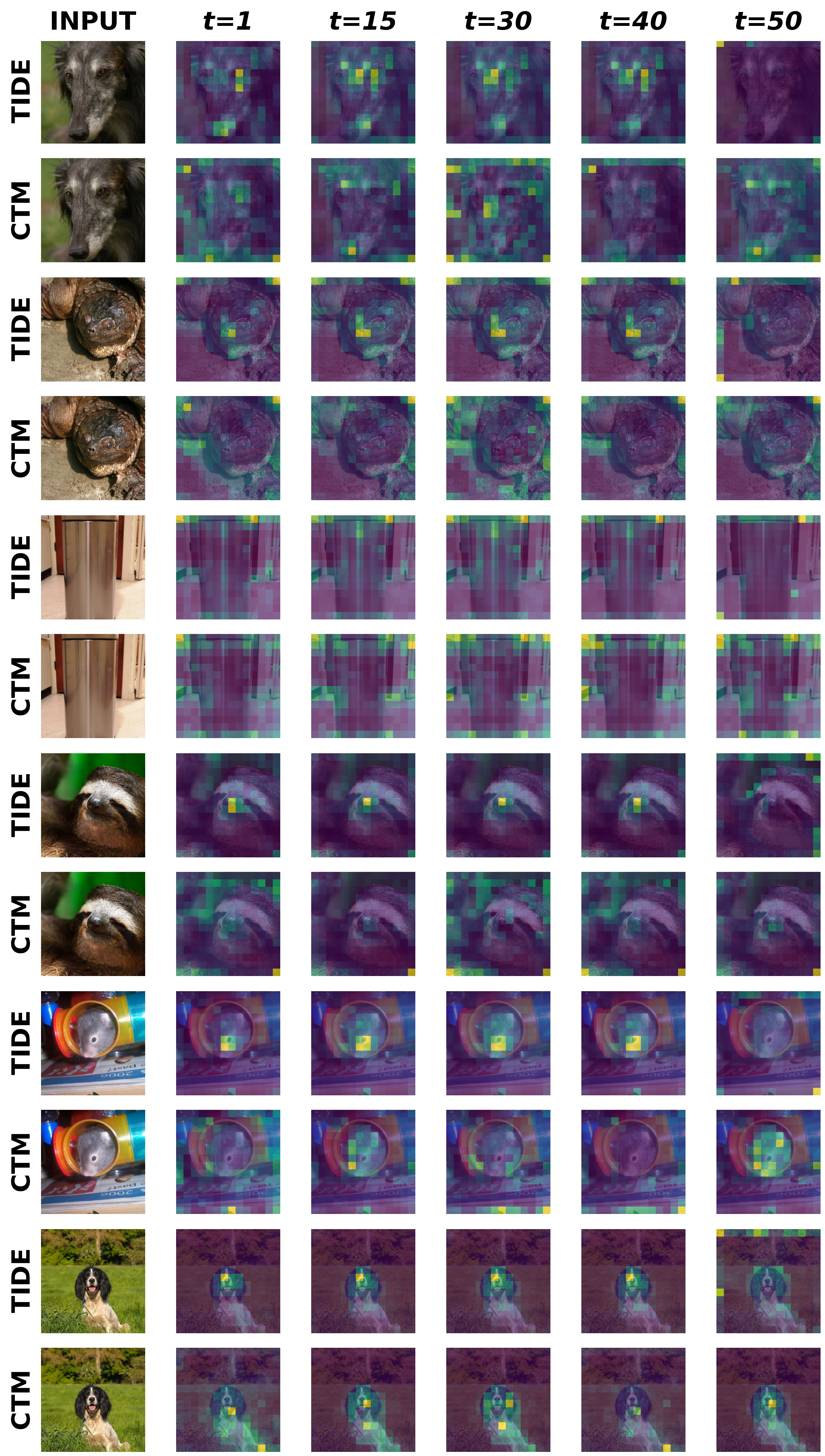}
\caption{Temporal evolution of mean attention as saliency per internal computation step for \gls{tide} versus \gls{ctm} for multiple cases of successful classification.}
\label{fig:saliancy_map}
\end{figure}
 
\subsection{Low-variance Estimator}
\label{app:exp:estimators}
Provided that the per-step gradient estimator of~\eqref{eq:rb-curr} carries a multiplicative weight $w(\mathrm{step}) \in [0, 1]$, its squared deviation must be bounded component-wise by $(1 - w(\mathrm{step}))^2 \le 1$ as $w \equiv 1$. Thus, the cosine scheduled ramp \eqref{eq:curriculum} with $w(t) = \tfrac{1}{2}(1 - \cos(\pi t))$ on $t = (\mathrm{step} - t_s)/T_w \in [0, 1]$, admits the closed form $\frac{1}{T_w}\!\int_{t_s}^{t_s + T_w}\!(1 - w(s))^2\,\mathrm{d}s = 3/8$, where $t_s$ is the step at which the cosine warm-up initializes, and $T_w$ denote its length. Therefore, it can be stated that the cosine warm-up behaves as a controlled, decaying perturbation of the immediate-on $w \equiv 1$ schedule, contributing on average $3/8$ squared deviation.
 
\section{Additional Experiments \& Ablations}
\label{app:add}
This section presents the main benchmark results, ablation studies, and
robustness analyses. Per dataset benchmark with their related training and evaluation metrics is reported while the comparative analysis of their performance against \gls{ctm} as baseline is included in Appendix~\ref{app:add:benchmarks}. Furthermore, the ablation study of hyperparameters on the \texttt{MNIST} and \texttt{Fashion-MNIST} datasets is reported across various subsections. Appendix~\ref{app:add:ei-ratio} focuses on the optimal \gls{ei} ratio, $\rho_{EI}^\ast\!=\!n_E/n_I$, and its effect on the resulted \texttt{top-1} accuracy. Appendix~\ref{app:add:tau} provides the ablation study on the population-specific time constants $\tau_E,\tau_I$, while Appendix~\ref{app:add:niter} is focused on the number of internal computation steps $T$, required for stability of training. Furthermore, the game-theoretic loss weight $\lambda_{\mathrm{game}}$ is studied in Appendix~\ref{app:add:game}, and the results for lateral inhibition iteration count $K_{\mathrm{WTA}}$ are provided in Appendix~\ref{app:add:wta}. Finally, we analyze the training stability via gradient-norm curves and spectral radii during training in Appendix~\ref{app:add:stability}, and provide a detailed comparative analysis of the performance of \gls{ctm} and \gls{tide} under perturbations, while investigating their robustness in \gls{ood} cases in Appendix~\ref{app:add:robust}.

\begin{figure}[!t]
\centering
\includegraphics[clip, trim=0.5cm 0.50cm 0.5cm 0.5cm, width=\textwidth]{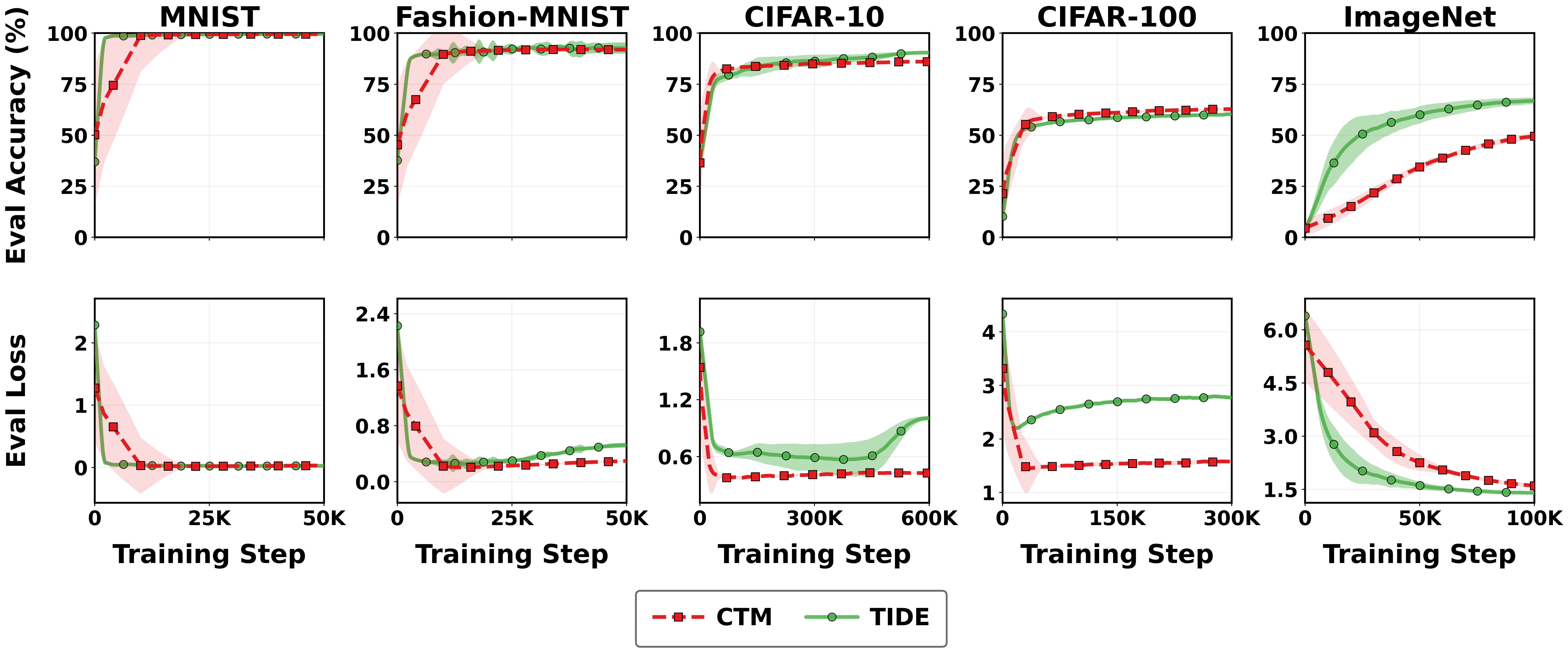}
\caption{Evaluation comparison between \gls{tide} and \gls{ctm} based on their validation curves, while providing \texttt{top-1} (\%), and \texttt{mean\,$\pm$\,std} total losses across the trained seeds.}
\label{fig:eval_benchmark}
\end{figure}

\subsection{Main Benchmark}
\label{app:add:benchmarks}
The \texttt{top-1} accuracy of \gls{tide} alongside \gls{ctm} is reported in Table~\ref{tab:main-bench-supp}. We trained both \gls{tide} and \gls{ctm} with the identical hyperparameters per-dataset provided in Table~\ref{tab:hparams}. Moreover, the \gls{ctm} is trained for $100$\,K and $500$\,K to not only follow the training duration proposed by~\cite{darlow2025continuous} but also directly compare its convergence against \gls{tide} given shorter training duration for sample efficiency analysis. Note that all the provided results and metrics are from our own re-run of the \gls{ctm}. In addition to quantitative analysis, qualitative results are provided, showing the visual temporal evolution of attention for \gls{tide} and \gls{ctm} to enable direct comparison of the behavior of both architectures (cf. Figure~\ref{fig:saliancy_map}). We additionally analyzed the performance of ResNet-18 as a backbone for \texttt{CIFAR-10}, but the results were unsatisfactory, as shown in Table~\ref{tab:main-bench-supp}.

The remaining discussion is based on the multi-seeded training of the \gls{tide} and the results achieved compared with \gls{ctm}. Note that the backbone differs significantly between \gls{tide} and \gls{ctm}, as the main aim of \gls{tide} is to achieve biologically plausible end-to-end training by using \gls{hrf} and its two variants. We use an identical $\dmodel$ for both \gls{tide} and \gls{ctm} while investigating each dataset as shown in Table~\ref{tab:main-bench-supp}. Additionally, evaluation curves based on the combined results of multi-seeded training of \gls{tide} and \gls{ctm} is provided in Figure~\ref{fig:eval_benchmark}.

It has been observed that \gls{tide} can achieve significantly higher evaluation accuracy and exceeds \gls{ctm} on \texttt{MNIST} (99.67 vs.\ 99.59), \texttt{Fashion-MNIST} (94.24 vs.\ 92.80), \texttt{CIFAR-10} (90.60 vs.\ 86.16), and \texttt{ImageNet-1K}-$100$\,K (68.74 vs.\ 51.0, where the \gls{ctm}-$100$\,K baseline is under-trained relative to \gls{ctm} $500$\,K-step result of $71.78\%$ which is achieved via our re-training and is also reported by~\cite{darlow2025continuous}. The lower performance of \gls{tide} on \texttt{CIFAR-100} (61.25 vs.\ 64.75) is indicative of an issue with the selection of backbone, given \texttt{CIFAR-100} requires deeper features to enable correct classification, and the proposed shallow \gls{hrf} could not provide adequate features. Therefore, we further investigate backbone selection by initially replacing it with ResNet18, which further reduced \texttt{top-1} accuracy to $31.74\%$, as shown in Table~\ref{tab:main-bench-supp}. We conjecture this is due to the projection enforcing Dale's principle and its poor interaction with the batch-normalization statistics inherited from the ResNet stem. Thus, for the \texttt{ImageNet-1K} dataset, a revised ResNet-style deep \gls{hrf} is devised to address this issue.

As shown in Figure~\ref{fig:attention_grid}, a diversity-based analysis is performed where randomly $1000$ images are selected across all the classes in \texttt{ImageNet-1K} to compare \gls{tide} and \gls{ctm}. \gls{tide} achieved $70.4\%$ against the $500$\,K steps trained \gls{ctm}'s $67.1\%$ \texttt{top-1} accuracy. The decomposition of the joint outcomes is provided in Figure~\ref{fig:attention_grid}, where \gls{tide} shows higher performance than \gls{ctm} by achieving a higher correct classification rate. Furthermore, it has been observed that \gls{tide} has higher mean certainty than \gls{ctm} ($0.89$ vs.\ $0.80$), indicating that \gls{ei}-based dynamics produces more decisive readouts than \gls{ctm} by relying on its \gls{wta} and surprise-gated memory. Both \gls{ctm} and \gls{tide} have a prediction agreement of $79.1\%$ for correctly classified images, indicating substantial agreement well above chance. 

\begin{table}[!t]
\centering
\caption{Comparison between \gls{tide} and \gls{ctm}. \gls{tide} results across multiple seeds with the exception of $\dagger{\ast}$: the \emph{Best Seed} column reports the highest-performing run with its best/final \texttt{top-1} accuracy (\%), and \texttt{mean\,$\pm$\,std} across the included seeds. \gls{ctm} is retrained based on~\cite{darlow2025continuous} for one seed, $42$. The reported results for the ResNet-18 backbone is solely provided for one seed, $42$.}
\label{tab:main-bench-supp}
\footnotesize
\setlength{\tabcolsep}{2.5pt}
\renewcommand{\arraystretch}{1.05}
\begin{tabular}{@{}lccccccc@{}}
\toprule
& & & \multicolumn{3}{c}{\textbf{TIDE [multi-seeded]}}
& \multicolumn{2}{c@{}}{\textbf{CTM [single seed]}} \\
\cmidrule(lr){4-6} \cmidrule(lr){7-8}
\textbf{Task} & $d_{\mathrm{model}}$ & \textbf{Backbone}
 & \textbf{Steps} & \textbf{Best Seed (best / final)}
 & \textbf{\texttt{mean\,$\pm$\,std} (best / final)}
 & \textbf{Steps} & \textbf{Best} \\
\midrule
\texttt{MNIST}            & 256  & HRF   & 50K        & 99.67 / 99.63 & 99.62\,$\pm$\,0.04 / 99.59\,$\pm$\,0.06 & 200K & 99.59 \\
\texttt{Fashion-MNIST}    & 256  & HRF   & 50K        & 94.24 / 94.16 & 94.02\,$\pm$\,0.30 / 92.68\,$\pm$\,2.79 & 200K & 92.80 \\
\texttt{CIFAR-10}         & 512  & HRF   & 600K       & 90.60 / 90.50 & 90.57\,$\pm$\,0.04 / 90.48\,$\pm$\,0.04 & 600K & 86.16 \\
\texttt{CIFAR-100}        & 718  & HRF   & 300K       & 62.53 / 62.17 & 61.62\,$\pm$\,0.60 / 60.91\,$\pm$\,0.72 & 600K & 64.75 \\
\texttt{CIFAR-100}$^{\dagger{\ast}}$ & 1024 & ResNet-18 & 300K   & 31.74 / 31.30 & ---                                     & 600K & 64.75 \\
\texttt{ImageNet-1K}      & 4096 & Deep-HRF & 100K    & 68.74 / 68.68 & 67.22\,$\pm$\,1.34 / 67.01\,$\pm$\,1.55 & 100K & 51.00 \\
\texttt{ImageNet-1K}      & 4096 & ---    & ---       & ---           & ---                                     & 500K & 71.78 \\
\bottomrule
\end{tabular}
\end{table}

\begin{figure}[!b]
\centering
\includegraphics[width=\textwidth]{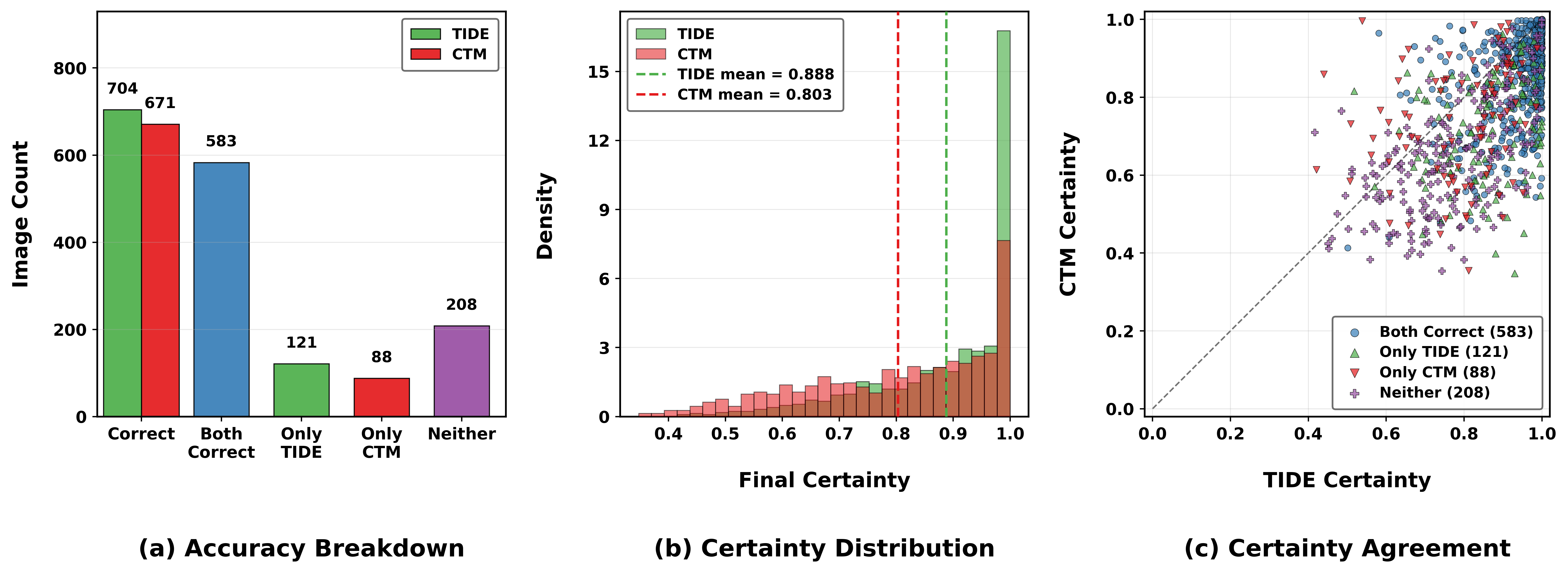}
\caption{\gls{tide} versus \gls{ctm} accuracy and certainty on a diverse
$1000$ randomly selected images subset of the \texttt{ImageNet-1K} validation set per each class. (a) Accuracy breakout and comparison across \gls{tide} and \gls{ctm}. (b) Certainty distribution for accurate classification and \texttt{mean} confidence comparison. (c) Agreement analysis based on all classifications across the randomly selected samples.}
\label{fig:attention_grid}
\end{figure}

\subsection{E-I Ratio Analysis}
\label{app:add:ei-ratio}
 
Table~\ref{tab:ei-sweep} provides the results of the \gls{ei} ratio ablation study, where we analyze four configurations of the population split $\rho_{EI}^\ast\!=\!n_E/n_I$ on \texttt{MNIST} and \texttt{Fashion-MNIST}. Values
$\{0.6,0.7,0.8,0.9\}$ correspond to $n_E/d_{\mathrm{model}}$, hence $\rho_{EI}^\ast\!=\!n_E/(d_{\mathrm{model}}-n_E)\!\in\!\{1.5,2.33,4.0,9.0\}$.
The biologically-motivated default ratio $80{:}20$ ($n_E/d_{\mathrm{model}}\!=\!0.8$) achieves
the baseline \texttt{MNIST} result of $99.51\%$ and is within $0.02$ percentage points of the best variant ($n_E/d\!=\!0.7$) on \texttt{MNIST}. All the ablation studies are performed on a single seed, $42$.

\begin{table}[!t]
\centering
\caption{\gls{ei} population ratio ablation study. Baseline
$n_E/d_{\mathrm{model}} = 0.8$ is in bold. Final \gls{ei} activity ratio (post-training) is reported to facilitate analysis of stability, given the target has been set as $\rho_{EI}^{\ast} = 4.0$.}
\label{tab:ei-sweep}
\footnotesize
\setlength{\tabcolsep}{8.5pt}
\renewcommand{\arraystretch}{1.05}
\begin{tabular}{@{}lccccc@{}}
\toprule
& \multicolumn{3}{c}{\textbf{\texttt{MNIST}}}
& \multicolumn{2}{c@{}}{\textbf{\texttt{Fashion-MNIST}}} \\
\cmidrule(lr){2-4} \cmidrule(lr){5-6}
$n_E/d_{\mathrm{model}}$
 & \textbf{best} & \textbf{final} & \textbf{$\rho_{EI}$}
 & \textbf{best} & \textbf{final} \\
\midrule
0.6           & 99.53          & 98.95          & 3.94          & 93.53          & 93.51 \\
0.7           & 99.55          & 99.39          & 4.01          & 93.67          & 93.61 \\
\textbf{0.8}  & \textbf{99.53} & \textbf{99.51} & \textbf{4.01} & \textbf{93.53} & \textbf{86.19} \\
0.9           & 99.55          & 99.28          & 4.01          & 93.49          & 93.34 \\
\bottomrule
\end{tabular}
\end{table}

\begin{figure}[H]
  \centering
  \includegraphics[clip, trim=0.5cm 0.25cm 0.5cm 0.5cm, width=0.85\textwidth]{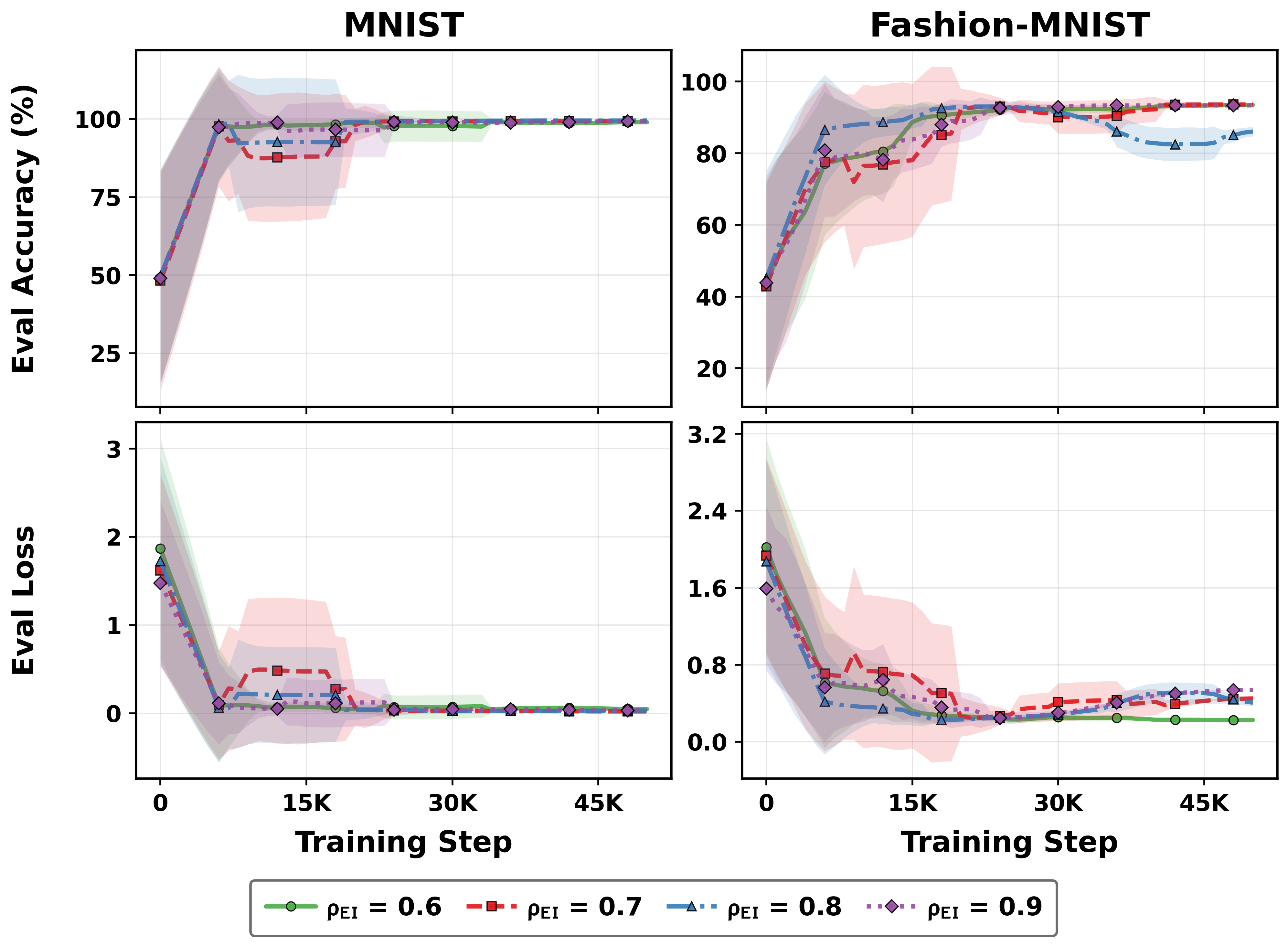}
  \caption{Evaluation curves during ablation study to analyze \gls{ei} population ratio on \texttt{MNIST} and \texttt{Fashion-MNIST}.}
  \label{fig:ei_ablation}
\end{figure}

Two observations qualify the interpretation. As illustrated in Figure~\ref{fig:ei_ablation}, the \texttt{Fashion-MNIST} run with the default \gls{ei} ratio exhibits post-peak drift ($86.19\%$) that disappears at $n_E/d\!\in\!\{0.6,0.7,0.9\}$. $\rho_{EI}$ values for \texttt{MNIST} column in Table~\ref{tab:ei-sweep} track the $80{:}20$ desired target ratio to within $2\%$ across all configurations, indicating that the activity-ratio regularizer $\mathcal L_{\mathrm{EI}}$, (\ref{eq:ei-loss}), is effective regardless of the population split.

\subsection{Time-constant Analysis}
\label{app:add:tau}
 
Table~\ref{tab:tau-sweep} provides the ablation study on the excitatory and inhibitory membrane time constants. The biological regime requires $\tau_I\!<\!\tau_E$; we verify this by performing two parallel ablation studies where either $\tau_I\!=\!5$ ms is held fixed, or $\tau_E\!=\!20$ ms is set as a constant, as shown in Figures~\ref{fig:taue_ablation} and \ref{fig:taui_ablation}.

\begin{table}[!t]
\centering
\caption{Population time-constant ablation (ms). Baseline values
$(\tau_E, \tau_I) = (20, 5)\,\mathrm{ms}$ are in bold. All the provided results are based on a single seed, $42$.}
\label{tab:tau-sweep}
\footnotesize
\setlength{\tabcolsep}{8.5pt}
\renewcommand{\arraystretch}{1.05}
\begin{tabular}{@{}lcccc@{\hskip 12pt}lcccc@{}}
\toprule
& \multicolumn{2}{c}{\textbf{\texttt{MNIST}}}
& \multicolumn{2}{c@{\hskip 12pt}}{\textbf{\texttt{Fashion-MNIST}}}
& & \multicolumn{2}{c}{\textbf{\texttt{MNIST}}}
& \multicolumn{2}{c@{}}{\textbf{\texttt{Fashion-MNIST}}} \\
\cmidrule(lr){2-3} \cmidrule(lr){4-5} \cmidrule(lr){7-8} \cmidrule(lr){9-10}
$\tau_E$ (ms) & \textbf{best} & \textbf{final} & \textbf{best} & \textbf{final}
& $\tau_I$ (ms) & \textbf{best} & \textbf{final} & \textbf{best} & \textbf{final} \\
\midrule
10           & 98.69          & 95.97          & 93.91          & 89.45
& 3           & 99.58          & 99.56          & 93.64          & 93.50 \\
15           & 99.59          & 99.47          & 93.80          & 93.62
& \textbf{5}  & \textbf{99.58} & \textbf{99.52} & \textbf{93.15} & \textbf{92.56} \\
\textbf{20}  & \textbf{99.62} & \textbf{99.61} & \textbf{94.23} & \textbf{93.73}
& 7           & 99.49          & 99.48          & 94.00          & 93.88 \\
25           & 99.61          & 99.61          & 94.25          & 94.05
& 10          & 99.55          & 99.55          & 93.78          & 93.73 \\
30           & 99.54          & 99.50          & 92.95          & 92.85
&             &                &                &                &      \\
\bottomrule
\end{tabular}
\end{table}

\begin{figure}[!t]
  \centering
  \includegraphics[clip, trim=0.5cm 0.25cm 0.5cm 0.5cm, width=0.85\textwidth]{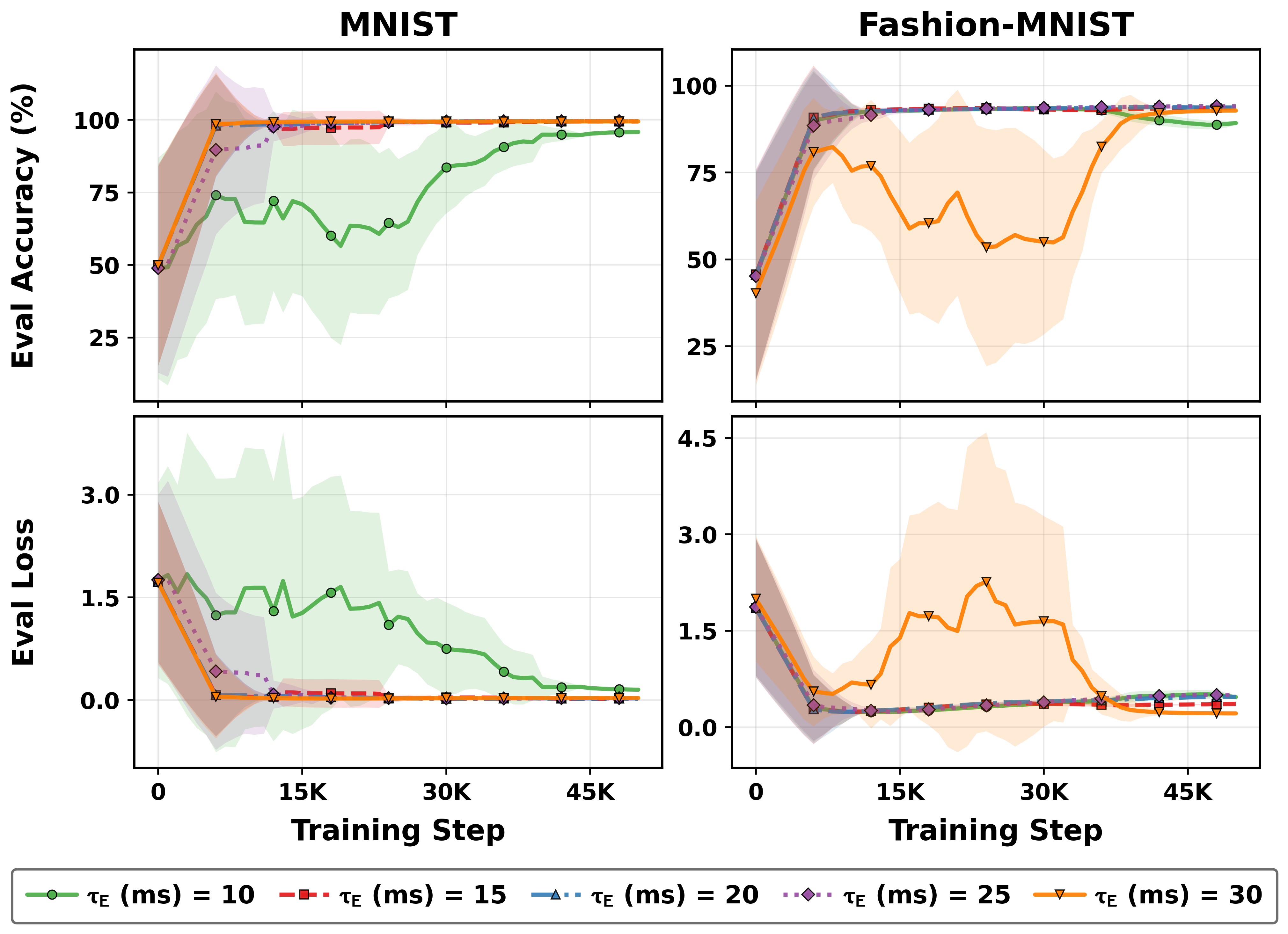}
  \caption{Evaluation curves during ablation study to analyze $\tau_E$ impact on \texttt{MNIST} and \texttt{Fashion-MNIST}.}
  \label{fig:taue_ablation}
\end{figure}

\begin{figure}[!t]
  \centering
  \includegraphics[clip, trim=0.5cm 0.25cm 0.5cm 0.5cm, width=0.85\textwidth]{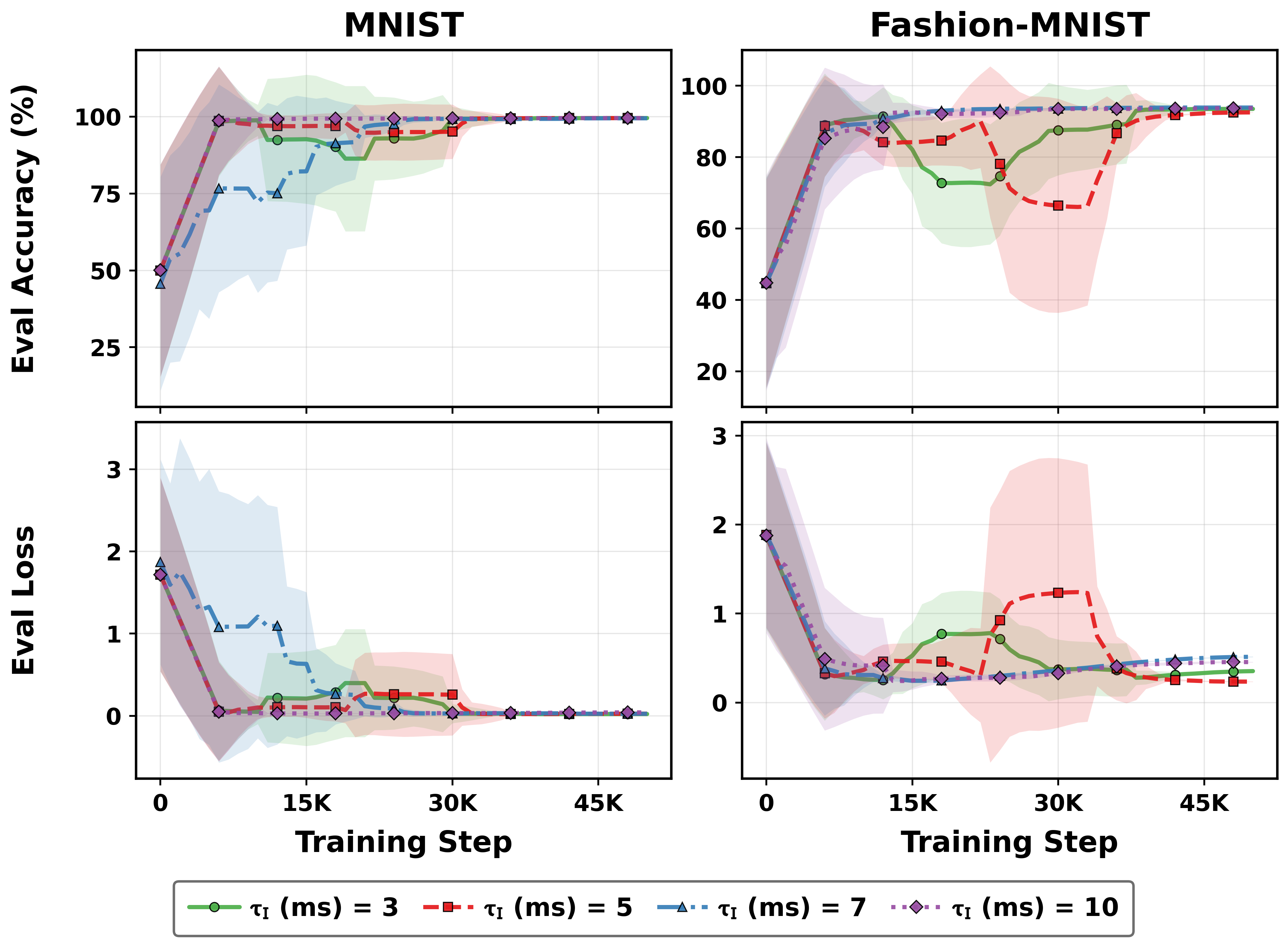}
  \caption{Evaluation curves during ablation study to analyze $\tau_I$ impact on \texttt{MNIST} and \texttt{Fashion-MNIST}.}
  \label{fig:taui_ablation}
\end{figure}

A significant performance degradation is observed when $\tau_E\!=\!10$ ms; both \texttt{MNIST} and \texttt{Fashion-MNIST} accuracies decrease by $1$ and $4$ percentage points, respectively. This is a direct result of $\Delta t/\tau_E\!=\!\alpha_E\!=\!0.1$ rather than $0.05$ when $\tau_E/\tau_I\!=\!2$, which causes an overshoot in the inhibitory response, visibly destabilizing the training, as shown in Figure~\ref{fig:taue_ablation}.

During the $\tau_I$ ablation, a mild preference for $\tau_I\!=\!7$ on \texttt{Fashion-MNIST} is observed as shown in Table~\ref{tab:tau-sweep}. Although no catastrophic instability was observed during the $\tauI$ ablation given $\tau_I\!>\!\alpha_I\,\tau_E/2\!=\!0.5$ ms which satisfies the stability condition outlined in Appendix~\ref{app:cont:linear}, a sudden performance degradation was observed for both \texttt{MNIST} and \texttt{Fashion-MNIST} when $\tau_I\!=\!5$ as illustrated by Figure~\ref{fig:taui_ablation}.

\subsection{Iteration-depth Analysis}
\label{app:add:niter}

An ablation analysis for the internal computation steps $T$ is performed while all other parameters are kept fixed (cf. Table~\ref{tab:niter-sweep}). Furthermore, we provide a wall-clock comparison for the given epoch based on the estimated time required per classification, with varying duration $T$, which directly affects both the training duration and the inference phase. Moreover, as illustrated by Figure~\ref{fig:iter_ablation}, longer internal computation steps do not guarantee higher performance, which was observed for both \texttt{MNIST} and \texttt{Fashion-MNIST}.

\begin{table}[!b]
\centering
\caption{Internal computation steps $T$ ablation. Baseline $T = 50$ is in
bold. All experiments ran for a single seed, $42$. The Time factor column reports
training wall-clock relative to the baseline.}
\label{tab:niter-sweep}
\footnotesize
\setlength{\tabcolsep}{8.5pt}
\renewcommand{\arraystretch}{1.05}
\begin{tabular}{@{}lccccc@{}}
\toprule
& \multicolumn{2}{c}{\textbf{\texttt{MNIST}}}
& \multicolumn{2}{c}{\textbf{\texttt{Fashion-MNIST}}}
& \\
\cmidrule(lr){2-3} \cmidrule(lr){4-5}
$T$ & \textbf{best} & \textbf{final} & \textbf{best} & \textbf{final}
& \textbf{Time factor} \\
\midrule
10           & 99.58          & 99.54          & 94.17          & 94.00          & $\times\,0.24$ \\
25           & 99.58          & 99.51          & 93.90          & 93.86          & $\times\,0.52$ \\
\textbf{50}  & \textbf{99.60} & \textbf{99.57} & \textbf{93.86} & \textbf{93.78} & $\times\,1.00$ \\
75           & 99.54          & 98.11          & 93.86          & 70.42          & $\times\,1.51$ \\
100          & 98.77          & 97.34          & 92.05          & 24.45          & $\times\,2.02$ \\
\bottomrule
\end{tabular}
\end{table}

\begin{figure}[H]
  \centering
  \includegraphics[clip, trim=0.5cm 0.25cm 0.5cm 0.5cm, width=0.85\textwidth]{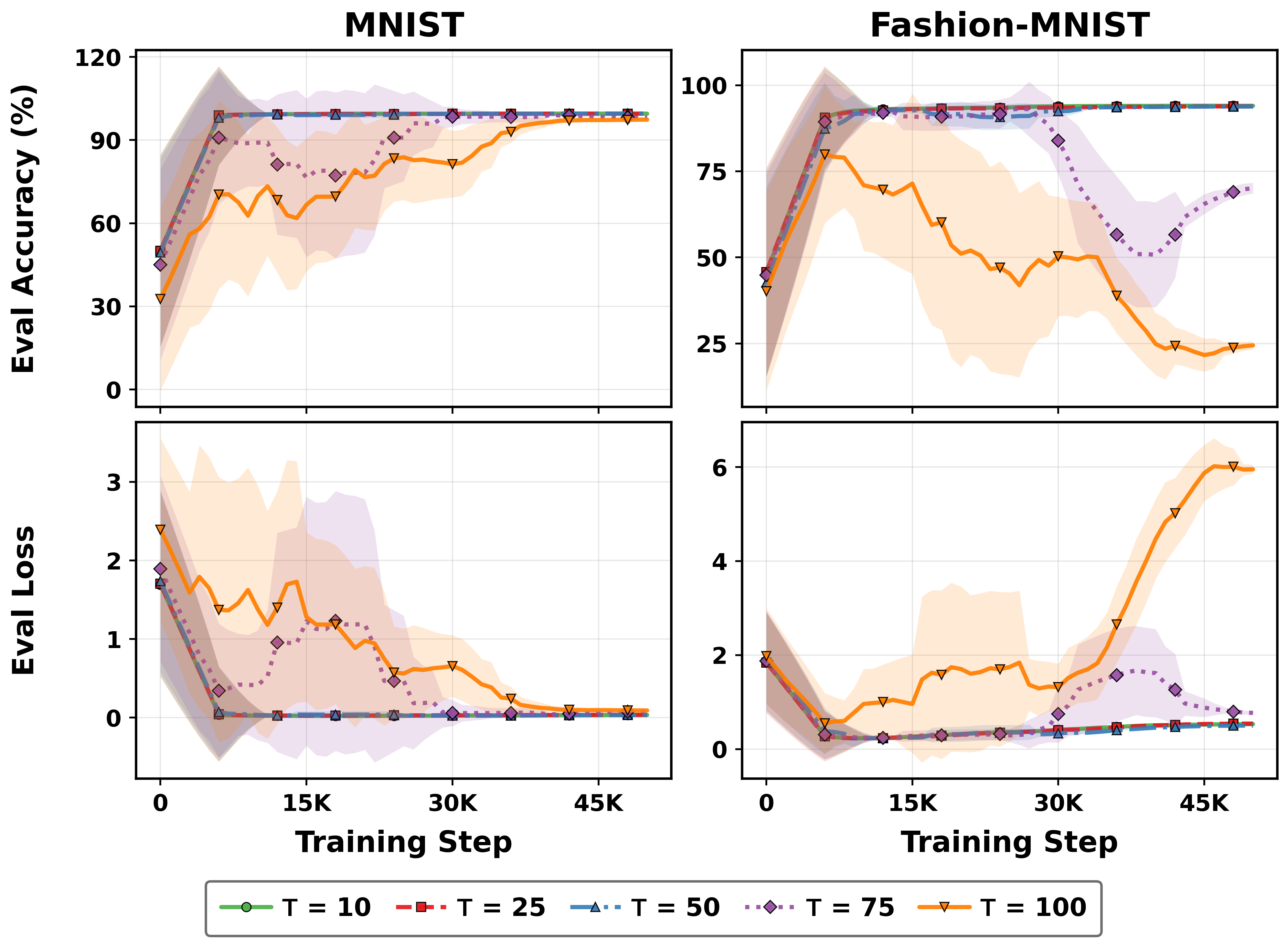}
  \caption{Evaluation curves during ablation study to analyze computation step $T$ and its effect on \texttt{MNIST} and \texttt{Fashion-MNIST}.}
  \label{fig:iter_ablation}
\end{figure}

It has been shown that the best accuracy is not affected when $T \in [10,\,50]$ for both datasets, and the deviation of the accuracy is $\!<\!0.31$\% within this range. As mentioned, for $T\!=\!100$, the performance is significantly degraded, especially for \texttt{Fashion-MNIST}, where the final accuracy completely collapsed to $24.45\%$ as shown in Figure~\ref{fig:iter_ablation}. We attribute this instability and collapse to two major compounding effects: i) The \gls{nlm} memory length limit, $M\!=\!25$, which might not be ideal for longer $T$ due to loss of information. ii) The \gls{bptt} horizon $K\!=\!0$ of Algorithm~\ref{alg:tide} for smaller image datasets such as \texttt{MNIST} and \texttt{Fashion-MNIST}, which increases the accumulated gradient variance, and further destabilizes the optimization given longer $T$. Therefore, similar to \texttt{ImageNet-1K}, truncated-\gls{bptt} with $K \le T/4$ is required to ensure stability. Therefore, either by using the truncated-\gls{bptt} or jointly scaling the $M$, it is feasible to achieve stability.
 
\subsection{Game-theoretic Loss Weight Analysis}
\label{app:add:game}

\begin{table}[h]
\centering
\caption{Game-theoretic loss weight $\lambda_{\mathrm{game}}$ ablation.
Baseline $\lambda_{\mathrm{game}} = 10^{-3}$ is in bold. The final
\gls{ei} activity ratio $\rho_{EI}$ with target $4.0$ is reported in italics
for interpretability. Experiments are single-seed.}
\label{tab:game-sweep}
\footnotesize
\setlength{\tabcolsep}{8.5pt}
\renewcommand{\arraystretch}{1.05}
\begin{tabular}{@{}lcccccc@{}}
\toprule
& \multicolumn{3}{c}{\textbf{\texttt{MNIST}}}
& \multicolumn{3}{c@{}}{\textbf{\texttt{Fashion-MNIST}}} \\
\cmidrule(lr){2-4} \cmidrule(lr){5-7}
$\lambda_{\mathrm{game}}$
 & \textbf{best} & \textbf{final} & $\boldsymbol{\rho_{EI}}$
 & \textbf{best} & \textbf{final} & $\boldsymbol{\rho_{EI}}$ \\
\midrule
$0$                & 99.64          & 99.60          & \emph{4.00}          & 93.75          & 93.30          & \emph{3.99} \\
$\mathbf{10^{-3}}$ & \textbf{99.59} & \textbf{99.53} & \emph{\textbf{3.99}} & \textbf{93.61} & \textbf{93.46} & \emph{\textbf{3.98}} \\
$10^{-2}$          & 99.64          & 99.59          & \emph{0.01}          & 94.15          & 93.95          & \emph{4.00} \\
$10^{-1}$          & 99.62          & 99.61          & \emph{0.01}          & 94.03          & 93.91          & \emph{3.99} \\
\bottomrule
\end{tabular}
\end{table}

\begin{figure}[!t]
  \centering
  \includegraphics[clip, trim=0.5cm 0.25cm 0.5cm 0.5cm, width=0.85\textwidth]{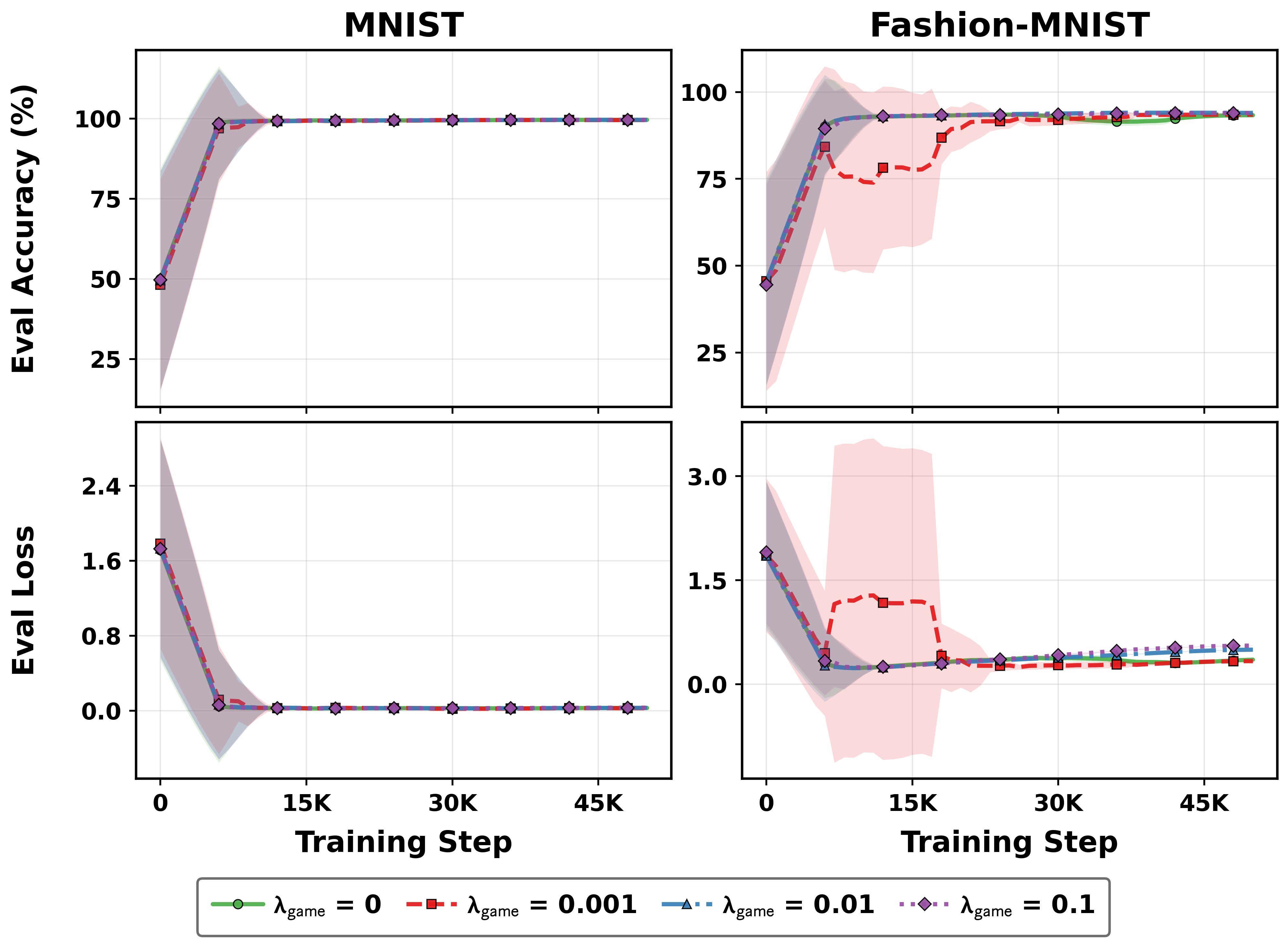}
  \caption{Evaluation curves during ablation study to analyze the impact of $\lambda_{\mathrm{game}}$ on \texttt{MNIST} and \texttt{Fashion-MNIST}.}
  \label{fig:game_ablation}
\end{figure}

The auxiliary game-theoretic weight $\lambda_{\mathrm{game}}$ of \eqref{eq:game-loss} is further studied as shown in Table~\ref{tab:game-sweep}. Figure~\ref{fig:game_ablation} illustrates that the task accuracy remain performant across all four configurations \texttt{MNIST}, however, significant \gls{ei} population collapse observed at $\lambda_{\mathrm{game}}\!\in\!\{10^{-2},\,10^{-1}\}$ to $\rho_{EI}\!=\!0.01$, indicating complete deactivation of inhibitory pathways. Therefore, the game-theoretic loss, $\mathcal L_{\mathrm{game}}$, must be weighted comparably to the task loss, $\mathcal L_{\mathrm{task}}$, to prevent overwhelming the regularizer and thereby fully silencing the inhibitory population. 

Due to the inherent preference for $\mathcal L_{\mathrm{task}}$, the best and final \texttt{top-1} accuracy is not directly affected for either \texttt{MNIST} or \texttt{Fashion-MNIST}. Moreover, as shown in Table~\ref{tab:game-sweep}, the collapse of $\rho_{EI}$ is not only dependent on the magnitude of $\mathcal L_{\mathrm{game}}$ but also dataset dependent, as no significant changes were observed for \texttt{Fashion-MNIST}. Note that due to the neuron-inspired assumption of \gls{tide}, $\lambda_{\mathrm{game}}$ must be $\le 10^{-3}$ for stability of \gls{ei} circuit.
 
\subsection{Lateral Inhibition Iterations}
\label{app:add:wta}
The lateral inhibition circuit and its internal iterations are investigated to analyze the effect of $K_{\mathrm{WTA}}$ on the \texttt{top-1} accuracy and stability of the \gls{tide}, while obtaining the optimal value for $K_{\mathrm{WTA}}$ as shown in Table~\ref{tab:wta-sweep}.

\begin{table}[h]
\centering
\caption{Lateral-inhibition iteration count $K_{\mathrm{WTA}}$ ablation
study. Baseline $K_{\mathrm{WTA}} = 5$ is in bold. Experiments are
single-seed. $K_{\mathrm{WTA}} = 0$ (disabled) is omitted because the
corresponding runs failed to initialize under the current launch harness;
we report the $K_{\mathrm{WTA}} = 1$ row as the \emph{single-step} lateral
inhibition surrogate.}
\label{tab:wta-sweep}
\footnotesize
\setlength{\tabcolsep}{8.5pt}
\renewcommand{\arraystretch}{1.05}
\begin{tabular}{@{}lcccc@{}}
\toprule
& \multicolumn{2}{c}{\textbf{\texttt{MNIST}}}
& \multicolumn{2}{c@{}}{\textbf{\texttt{Fashion-MNIST}}} \\
\cmidrule(lr){2-3} \cmidrule(lr){4-5}
$K_{\mathrm{WTA}}$
 & \textbf{best} & \textbf{final}
 & \textbf{best} & \textbf{final} \\
\midrule
1            & 99.53          & 99.46          & 94.10          & 94.10 \\
3            & 99.58          & 99.49          & 93.67          & 93.64 \\
\textbf{5}   & \textbf{99.42} & \textbf{98.99} & \textbf{93.06} & \textbf{92.54} \\
10           & 99.49          & 99.47          & 94.20          & 94.07 \\
\bottomrule
\end{tabular}
\end{table}

\begin{figure}[!t]
  \centering
  \includegraphics[clip, trim=0.5cm 0.25cm 0.5cm 0.5cm, width=0.85\textwidth]{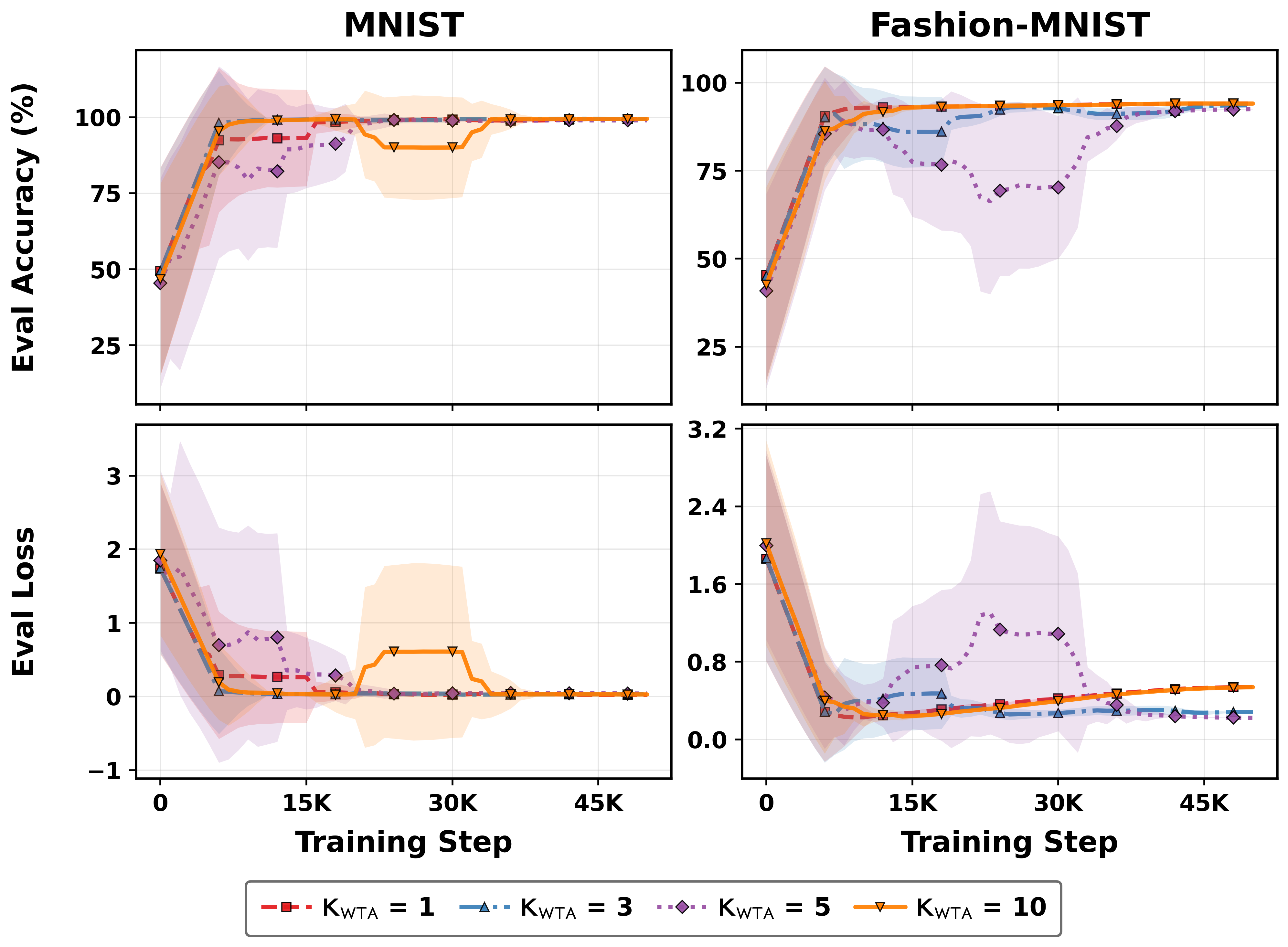}
  \caption{Evaluation curves during ablation study to analyze the impact of $K_{\mathrm{WTA}}$ on \texttt{MNIST} and \texttt{Fashion-MNIST}.}
  \label{fig:wta_ablation}
\end{figure}

As shown in Figure~\ref{fig:wta_ablation}, the default $K_{\mathrm{WTA}}\!=\!5$ does not perform well on both \texttt{MNIST} and \texttt{Fashion-MNIST}. Furthermore, the best performance is achieved when $K_{\mathrm{WTA}}\!=\!10$; thus improving the \texttt{top-1} accuracy by $+1.14$\% and $+0.02$\% for \texttt{Fashion-MNIST} and \texttt{MNIST}, respectively.

Given that the lateral inhibition is based on $\mathrm{RELU}$ without a reliance on $\mathrm{Softmax}$, while $W_{EI}^{\mathrm{lat}}$ is initialized to a uniform distribution of bound $1/\sqrt{n_E}$ for the given training length of $50$\,K steps, the dynamics typically converge to $K_{\mathrm{WTA}}\!\le\!3$. The early termination signal is typically triggered within three steps; thus, increasing $K_{\mathrm{WTA}}$ to higher values without decreasing the termination tolerance will only result in longer, idle iterations during the inference.

The resulted failure mode given $K_{\mathrm{WTA}}\!=\!5$ is mainly due to location of the baseline in a local minimum, however, $K_{\mathrm{WTA}}\!=\!5$ is retained given further experimentation and analysis is required to fine-tune $K_{\mathrm{WTA}}$ per-datasets which is computationally prohibitive as a single $100$\,K run for \texttt{ImageNet-1K} requires significant computes as highlighted in Appendix~\ref{app:exp:compute}. Consequently, we used $K_{\mathrm{WTA}}\!=\!5$ across all our benchmarks (cf. Table~\ref{tab:hparams-shared}).
 
\subsection{Training-stability Diagnostics}
\label{app:add:stability}

Given the stability issues in \gls{ei} circuits highlighted in \cite{betteti2025competition}, further analysis was conducted to investigate training stability, potential failure modes, and the approaches needed to achieve stable results. 

\begin{figure}[t]
\centering
\includegraphics[width=0.8\textwidth]{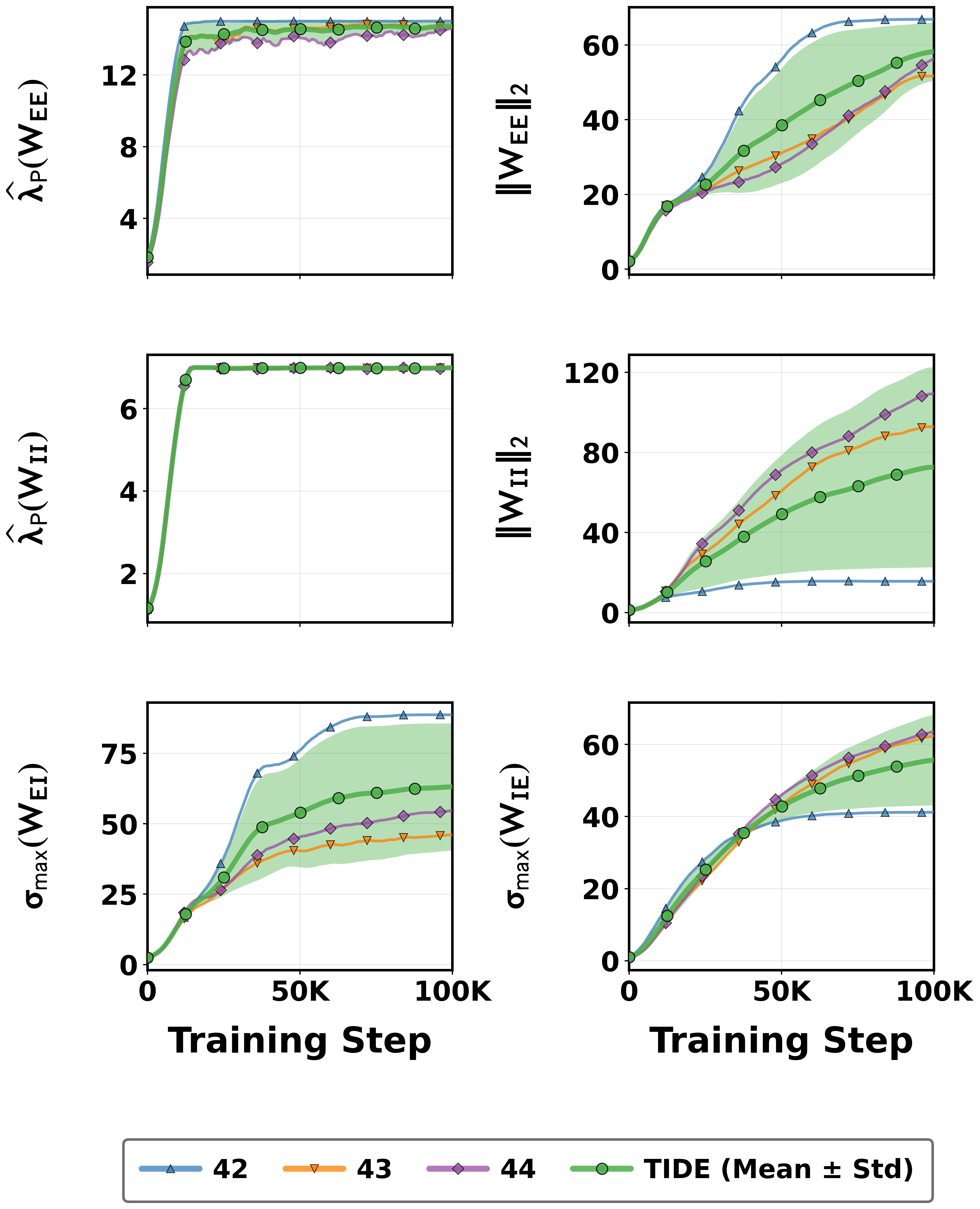}
\caption{Spectral values during multi-seeded \texttt{ImageNet-1K} training, Rows~1--2: Perron estimates $\widehat\lambda_{\mathrm P}$ (left) and spectral norm $\|\cdot\|_2$ (right) for $W_{EE}$ and $W_{II}$. Row~3: cross-population $\sigma_{\max}(W_{EI})$ and $\sigma_{\max}(W_{IE})$.}
\label{fig:imagenet-spectral}
\end{figure}

As illustrated in the Figure~\ref{fig:imagenet-spectral}, the Perron radii converged to desired value without overshooting, $\widehat\lambda_{\mathrm P}(W_{EE}) = 14.69 \pm 0.27$ and $\widehat\lambda_{\mathrm P}(W_{II}) = 6.998 \pm 0.006$, since the $\mathrm{ReLU}(\widehat\lambda - \tau)^2$ results in a hinge equilibrates for the penalty gradient against the task loss gradient at the boundary, $\tau_{EE} = 15$ and $\tau_{II} = 7$ are achieved. The soft-hinge gradient remains active, providing non-vanishing pressure toward the \gls{lds}-compatible region defined in \ref{def:lds}. Furthermore, the regularized spectral norms grow throughout training while exceeding the Perron clamps at $\|W_{EE}\|_2 = 58.6 \pm 7.6$, $\|W_{II}\|_2 = 72.7 \pm 50.1$, $\sigma_{\max}(W_{EI}) = 63.2 \pm 22.5$, and $\sigma_{\max}(W_{IE}) = 55.8 \pm 12.7$, where $\|W\|_2 = \sigma_{\max}(W)$ denotes the largest singular value of $W$, i.e., the spectral norm characterizing the worst-case amplification of any input vector by $W$. However, this growth does not violate the \gls{lds} condition of Corollary~\ref{cor:lds-test}, so local asymptotic stability of the Wilson-Cowan recurrence (Theorem~\ref{thm:lds}) is preserved.
Together with empirical observations from multi-seeded training across various datasets, the stability mechanisms described in Appendix~\ref{app:cont} are shown to be effective in providing stable learning in asymmetric architectures.

\subsection{Robustness \& OOD Evaluation}
\label{app:add:robust}
The previous experiments in Appendix~\ref{app:add:benchmarks} primarily report \texttt{top-1} accuracy on unperturbed data. To further investigate if leveraging neuro-inspired motifs provides measurable benefits for the learning outcomes beyond \gls{ctm}, we evaluate the robustness by investigating  $100$\,K steps checkpoint for \gls{tide} trained solely on \texttt{ImageNet-1K} against the similarly trained \gls{ctm} checkpoints on three established \gls{ood} benchmarks: \texttt{ImageNet-C} \cite{hendrycks2019robustness}, \texttt{ImageNet-R} \cite{hendrycks2021many}, and \texttt{Tiny-ImageNet} \cite{le2015tiny} without retraining, fine-tuning, or test-time adaption or augmentation.

\paragraph{Setup:} Both models are evaluated in inference mode with batch sizes $32$ under the standard \texttt{ImageNet-1K} preprocessing conditions stated in Appendix~\ref{app:exp:data}. The checkpoints used are as follows: \gls{tide} best seed $100$\,K step training with \texttt{top-1} accuracy of $68.74\%$ and in-house retrained \gls{ctm} with $500$\,K steps training achieving \texttt{top-1} best accuracy of $71.78\%$, yielding a deliberately asymmetric training-step comparison ($1{:}5$ in favor of \gls{ctm}).

\paragraph{ImageNet-R:} \texttt{ImageNet-R} consists of $30$\,K renditions with various styles and categories, including art, graphic, painting, drawn from a $200$-class subset of \texttt{ImageNet-1K}. Following the ablation protocol outlined by \cite{hendrycks2021many}, we evaluate \texttt{top-1} accuracy of \gls{tide} and \gls{ctm} as shown in Figure~\ref{fig:ood-imagenet-r}. \gls{tide} reached $25.75\%$ versus \gls{ctm} $25.46\%$, with \gls{tide} outperforming \gls{ctm} on toy and sculpture rendition categories. However, \gls{ctm} shows better robustness on tattoos and deviantart as illustrated in Figure~\ref{fig:ood-imagenet-r}. Overall, \gls{tide} provides more robustness against \gls{ood} and shifted distribution via rendition and, on average, outperforms \gls{ctm} by $\sim 0.3$ percentage point on the \texttt{ImageNet-R} dataset \cite{hendrycks2021many}.

\paragraph{Tiny-ImageNet:} \texttt{Tiny-ImageNet} consists of $10\,$K validation images at $64{\times}64$ resolution, evaluated against the $1000$-way \texttt{ImageNet-1K} softmax via the \texttt{TinyImageNet}$\to$\texttt{ImageNet-1K} mapping \cite{le2015tiny}. Both models are evaluated zero-shot on $17$ corruption augmentations applied post-upsampling, thereby enabling a comparison of architectural robustness against the bilinear $64{\to}224$ upsampling artifact under additional augmentations. \gls{tide} reached $24.80\%$ \texttt{top-1} accuracy on clean set of data while \gls{ctm} achieved $20.60\%$, a $+4.20$\, percentage point gap. Moreover, as illustrated in Figure~\ref{fig:ood-imagenet-tiny}, \gls{tide} outperforms \gls{ctm} on all augmentations, specifically horizontal flip and $15^o$ rotation. \gls{tide} retains $31.7\%$ of its clean \texttt{top-1} accuracy against \gls{ctm}'s $27.4\%$, in a zero-shot setting, which is consistent with \gls{tide}'s \gls{ei} dynamics, smoothing the score landscape under photometric and geometric perturbations rather than overfitting to a specific corruption type.

\begin{figure}[!t]
  \centering
  \includegraphics[clip, trim=0.5cm 0.75cm 0.7cm 0.7cm, width=\textwidth]{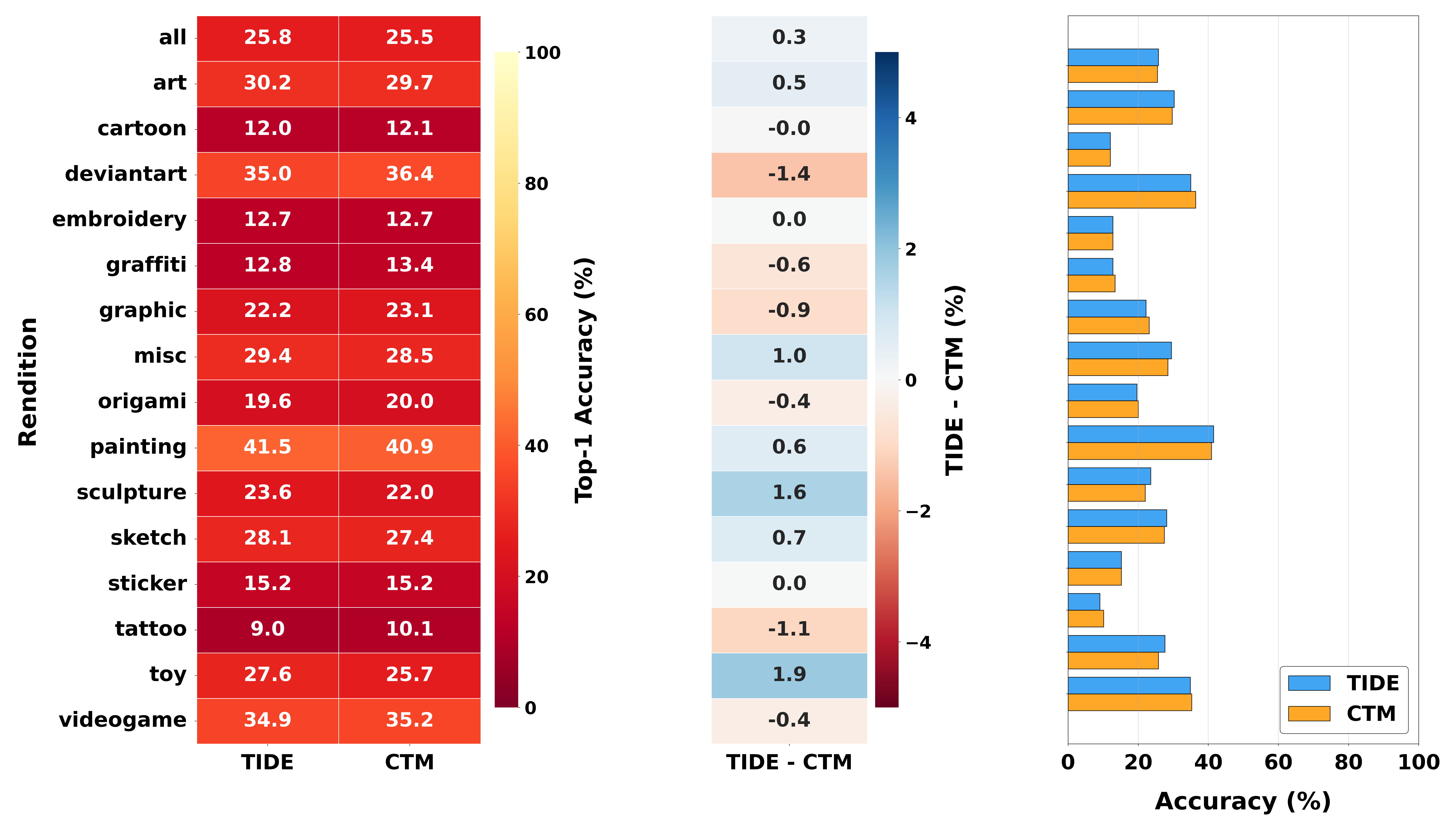}
  \caption{Robustness analysis on \texttt{ImageNet-R} \cite{hendrycks2021many}. Left panel presents \texttt{top-1} (\%) for \gls{tide} and \gls{ctm}, center and right panels report differences.}
  \label{fig:ood-imagenet-r}
\end{figure}

\begin{figure}[!t]
  \centering
  \includegraphics[clip, trim=0.5cm 0.75cm 0.7cm 0.7cm, width=\textwidth]{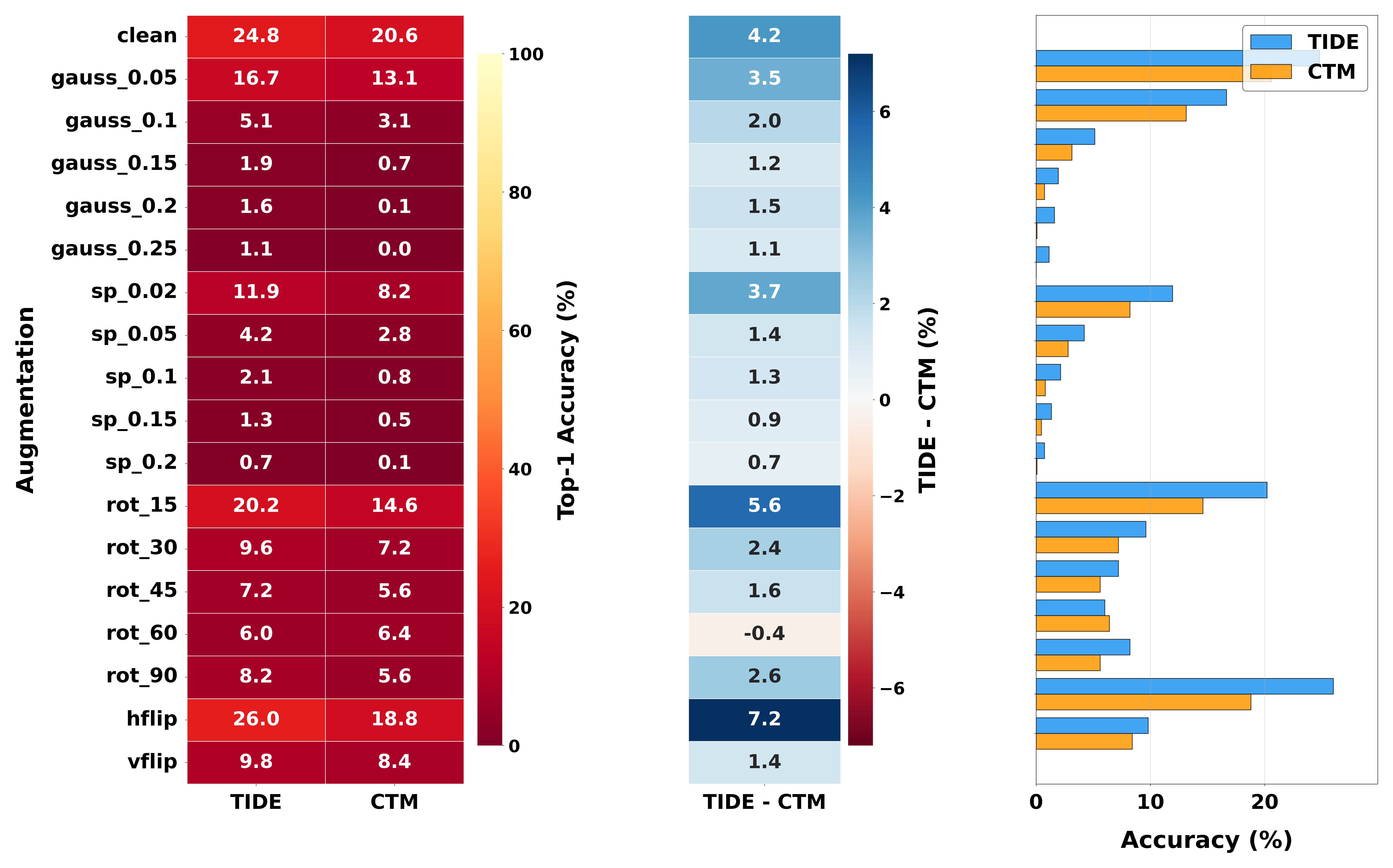}
  \caption{\gls{ood} analysis on \texttt{Tiny-ImageNet} \cite{le2015tiny}. Left panel presents \texttt{top-1} (\%) for \gls{tide} and \gls{ctm}, center and right panels report differences.}
  \label{fig:ood-imagenet-tiny}
\end{figure}

\section{Open Questions \& Limitations}
\label{app:open}
 
In this section, we present several remaining open issues and highlight the limitations of the proposed framework. Moreover, future work must address these issues both theoretically and empirically to further investigate the credibility of asymmetric-based architectures in achieving more robust and agnostic models.

\textbf{Global strict-convexity in \gls{tide}:} Given that Theorem~\ref{thm:pgd-dale} assumes $\mathcal L$ is $\mu$-strongly convex over $\mathcal W_{\mathrm{Dale}}$, which does not hold for the full non-convex training loss, the $\mathcal O(\log 1/\epsilon)$ convergence rate carries over only locally to a strict local minimum.

\textbf{Exact LDS for \gls{tide} at convergence:} Corollary~\ref{cor:lds-test} is sufficient for condition monitoring at runtime, and for \texttt{ImageNet-1K}, it has been observed that the soft spectral discussed in Appendix~\ref{app:add:stability} stabilizes both and at their target Perron radii throughout training, which is consistent with \gls{lds} being at convergence. However, we do not prove that this must hold, and the symmetrized effective matrix $\tfrac12(W_{\mathrm{eff}}+W_{\mathrm{eff}}^\top)-I$ must be negative-definite. Thus, leaving the training-time \gls{lds} enforcing as an open issue that requires further investigation.

\textbf{Balance scaling:} Given that the Definition~\ref{def:balance} is formal, and we do not derive it directly from first principles, it remains open whether \gls{tide} dynamics can drive the model into the region defined by \cite{vanvreeswijk1996chaos}.

\textbf{Rotation equivariance of \gls{hrf}:} As noted in Remark~\ref{rem:rot-loss}, only Stage zero of the \gls{hrf} backbone under \gls{dog} initialization is rotation equivariant. A multi-scale bank for the shallow \gls{hrf} and a per-stage bank for the deep \gls{hrf} are required. Moreover, a rotation-equivariant deep \gls{hrf} would need additional steerable filter banks in all its residual stages.

\textbf{Internal computation steps:} It has been observed that increasing $T$ as in Appendix~\ref{app:add:niter} causes instability rather than improvement. A principled scaling of $T$ with sequence length requires further investigation for a more adaptive processing of input data.

\textbf{Limitations:} The main limitation of both \gls{tide} and \gls{ctm} is their required sequential feedforward data processing that prohibits shorter runtime and prevents more robust investigation. Therefore, further investigation is required to study the feasibility of leveraging neuromorphic hardware and their dynamics, not only to enhance data preprocessing but also to implement a Look-up-Table-based approach to further reduce the training duration of the current proposed framework. Given that we have already lowered the training boundary by around $50$\,\% to achieve results comparable to \gls{ctm}, indicating the soundness of our approach, further studies are still needed to assess the reachability of an even sparser representation. Additionally, due to the lack of comparable implementations using similar approaches, more detailed benchmarking within the same architectural family remains limited and challenging. Although \gls{ctm} uses LSTM as its baseline, it is not a direct comparison; however, their results can be directly translated to \gls{tide} given that we use \gls{ctm} as our baseline.
 
\end{document}